\journal{Journal}
\declaretheorem[name=Lemma]{lem}
\declaretheorem[name=Example,style=definition]{example}
\declaretheorem[name=Definition,style=definition]{definition}
\let\MYcaption\@makecaption
\newcommand\Eq[1] {Equation~\eqref{#1}}
\newcommand\Sec[1] {Section~\ref{#1}}
\newcommand\Sects[1] {Sections~\ref{#1}}
\newcommand\App[1] {\ref{#1}}
\newcommand\Thm[1] {Theorem~\ref{#1}}
\newcommand\Propo[1]{Proposition~\ref{#1}}
\newcommand\Lem[1] {Lemma~\ref{#1}}
\newcommand\Def[1] {Definition~\ref{#1}}
\newcommand\Eg[1] {Example~\ref{#1}}
\newcommand\Tbl[1] {Table~\ref{#1}}
\newcommand\Fig[1] {Figure~\ref{#1}}
\newcommand\etal[0] {\emph{et al.}}
\newcommand\rulesp{\vspace{1ex}}
\renewcommand{\phi} {\varphi}
\newcommand{\eqdef}{\ensuremath{\stackrel{\mathrm{def}}{=}}}
\newcommand{\detassign}{\ensuremath{:=}}
\newcommand{\range}{\mathsf{range}}
\newcommand{\Dists}{\mathbb{D}} 
\newcommand{\size}[1]{\mathsf{size}(#1)}
\newcommand{\len}[1]{\mathsf{len}(#1)}
\newcommand{\popl}{\mathit{P}} 
\newcommand{\poplA}{\mathit{P_{\!\alg}}} 
\newcommand{\dompop}{\mathcal{X}} 
\newcommand{\xiPPL}[0]{\xi} 
\newcommand{\poplxt}{\popl(\xiPPL, \theta)} 
\newcommand{\mean}{\mathsf{mean}}
\newcommand{\ppl}[0]{}
\newcommand{\sideT}[0]{\mathsf{T}}
\newcommand{\sideL}[0]{\mathsf{L}}
\newcommand{\sideU}[0]{\mathsf{U}}
\newcommand{\nuh}[0]{\mathop{\backsim\!}}
\newcommand{\phiT}[0]{\phi_{\sideT}}
\newcommand{\phiL}[0]{\phi_{\sideL}}
\newcommand{\phiU}[0]{\phi_{\sideU}}
\newcommand{\psiPre}[0]{\psi_{{\sf pre}}}
\newcommand{\psiPostAB}[0]{\psi_{12}^{{\sf post}}}
\newcommand{\psiPostAC}[0]{\psi_{13}^{{\sf post}}}
\newcommand{\phiPost}[0]{\phi_{{\sf post}}}
\newcommand{\Test}[0]{\mathit{t}}
\newcommand{\tstat}[0]{\mathit{r}}
\newcommand{\Dtx}[1]{\mathit{D}_{\Test,#1}}
\newcommand{\Dtix}[1]{\mathit{D}_{\Test_i,#1}}
\newcommand{\Dtjx}[1]{\mathit{D}_{\Test_b,#1}}
\newcommand{\Dtax}[1]{\mathit{D}_{\Test_1,#1}}
\newcommand{\Dtbx}[1]{\mathit{D}_{\Test_2,#1}}
\newcommand{\Dtp}[0]{\mathit{D}_{\Test,\phi}}
\newcommand{\Dtpo}[0]{\mathit{D}_{\Test,\phi_0}}
\newcommand{\Dthp}[0]{\mathit{D}_{\Test_\theta,\phi_0}}
\newcommand{\unlikely}[0]{\preccurlyeq}
\newcommand{\unlikelys}[1]{\unlikely^{(#1)}}
\newcommand{\unlikelyst}[2]{\unlikely^{(#1)}_{#2}}
\newcommand{\nats}{\mathbb{N}}
\newcommand{\ints}{\mathbb{Z}}
\newcommand{\reals}{\mathbb{R}}
\newcommand{\realsnng}{\mathbb{R}_{\ge0}}
\newcommand{\cala}{\mathcal{A}}
\newcommand{\cali}{\mathcal{I}}
\newcommand{\calo}{\mathcal{O}}
\newcommand{\calp}{\mathcal{P}}
\newcommand{\calr}{\mathcal{R}}
\newcommand{\cals}{\mathcal{S}}
\newcommand{\calw}{\mathcal{W}}
\newcommand{\ov}[1]{\overline{#1}}
\newcommand{\Normal}[0]{\mathit{N}}
\newcommand{\es}[0]{\mathit{es}}
\newcommand{\Prog}[0]{{\sf Prog}}
\newcommand{\mytrue}{\mathtt{true}}
\newcommand{\myfalse}{\mathtt{false}}
\newcommand{\myskip}{\mathtt{skip}}
\newcommand{\myIf}[3]{\mathtt{if} \ #1 \ \mathtt{then}\ #2\ \mathtt{else}\ #3}
\newcommand{\myIfsubstack}[3]{\begin{scriptsize}\begin{array}[b]{l}\!\mathtt{if} \ #1 \!\\[0.1ex] \!\mathtt{then}\ #2 \!\\[0.1ex] \!\mathtt{else}\ #3\!\end{array}\end{scriptsize}}
\newcommand{\myLoop}[2]{\mathtt{loop}\ #1\ \mathtt{do}\ #2}
\newcommand{\ac}[0]{\mathit{a}}
\newcommand{\testHist}[0]{\mathit{H}}
\newcommand{\Act}[0]{\mathsf{Act}}
\newcommand{\Cmd}{\mathsf{Cmd}}
\newcommand{\Smpl}[0]{\mathsf{Smpl}}
\newcommand{\transX}[1]{\xrightarrow{#1}}
\newcommand{\transA}{\transX{\ac}}
\newcommand{\alg}[0]{\mathit{A}}
\newcommand{\algp}[1]{\alg_{#1}}
\newcommand{\algt}[1]{\alg^{(#1)}}
\newcommand{\ListTests}[1]{\mathit{\ell}_{#1}}
\newcommand{\err}{\mathit{\epsilon}}
\newcommand{\side}{\mathit{s}}
\newcommand{\ELHT}[0]{{\sf ELHT}}
\newcommand{\ATerm}[0]{\mathsf{ATerm}}
\providecommand{\leftsquigarrow}{%
  \mathrel{\mathpalette\reflect@squig\relax}%
}
\newcommand{\reflect@squig}[2]{%
  \reflectbox{$\m@th#1\rightsquigarrow$}%
}
\newcommand{\subalign}[1]{%
  \vcenter{%
    \Let@ \restore@math@cr \default@tag
    \baselineskip\fontdimen10 \scriptfont\tw@
    \advance\baselineskip\fontdimen12 \scriptfont\tw@
    \lineskip\thr@@\fontdimen8 \scriptfont\thr@@
    \lineskiplimit\lineskip
    \ialign{\hfil$\m@th\scriptstyle##$&$\m@th\scriptstyle{}##$\hfil\crcr
      #1\crcr
    }%
  }%
}
\newcommand{\sampled}[3]{#1 \mathbin{\leftsquigarrow_{\substack{\\[0.1ex]\hspace{-1.5ex}#3\hspace{0.2ex}}\!}} #2}
\newcommand{\followed}[2]{#1 \leftarrowtail #2}
\newcommand{\TestXx}[2]{\tau_{#1}^{#2}}
\newcommand{\TestA}[1]{\tau_{#1}}
\newcommand{\TestE}[1]{\TestXx{#1}{<}}
\newcommand{\TestLE}[1]{\TestXx{#1}{\le}}
\newcommand{\TestLeE}[1]{\TestXx{#1}{\le}}
\newcommand{\TestBowE}[1]{\TestXx{#1}{\bowtie}}
\newcommand{\TestBowCnegphi}{\TestBowE{y,\algp{\nuh\phi}}}
\newcommand{\NegXx}[2]{\nu_{#1}^{#2}}
\newcommand{\NegA}[1]{\NegXx{#1}{}}
\newcommand{\NegE}[1]{\NegXx{#1}{<}}
\newcommand{\NegLeE}[1]{\NegXx{#1}{\le}}
\newcommand{\NegBowE}[1]{\NegXx{#1}{\bowtie}}
\newcommand{\NegBowEya}{\NegBowE{y,\alg}}
\newcommand{\kappaX}[1]{\kappa_{#1}}
\newcommand{\kappaYA}[1]{\kappa_{#1,\alg}}
\newcommand{\kappaS}{\kappa_{S}}
\newcommand{\cmpds}[1]{\varpi_{#1}}
\renewcommand{\implies}{\rightarrow}
\newcommand{\Know}{\mathbf{K}}
\newcommand{\Possible}{\mathbf{P}}
\newcommand{\KnowXx}[2]{\mathop{\mathbf{K}^{#1}_{#2}}}
\newcommand{\KnowXt}[1]{\mathop{\mathbf{K}^{#1}_{y,\alg}}}
\newcommand{\PossibleXx}[2]{\mathop{\mathbf{P}^{#1}_{#2}}}
\newcommand{\KnowAt}{\KnowXx{\alpha}{y,\alg}}
\newcommand{\KnowE}{\KnowXx{\err}{y,\alg}}
\newcommand{\KnowLeEt}{\KnowXx{{<} \err}{y,\alg}}
\newcommand{\KnowEqEt}{\KnowXx{{=}\err}{y,\alg}}
\newcommand{\KnowBowEt}{\KnowXx{\bowtie \err}{y,\alg}}
\newcommand{\KnowBowEphi}{\KnowXx{\bowtie \err}{y,\algp{\nuh\phi}}}
\newcommand{\PossibleE}{\PossibleXx{\err}{y,\alg}}
\newcommand{\PossibleLeEt}{\PossibleXx{{<} \err}{y,\alg}}
\newcommand{\PossibleBowEt}{\PossibleXx{\bowtie \err}{y,\alg}}
\newcommand{\Pred}[0]{\mathsf{Pred}}
\newcommand{\Fsym}[0]{\mathsf{Fsym}}
\newcommand{\Fml}[0]{\mathsf{Fml}}
\newcommand{\propNu}{\textbf{SB$\nu$}}
\newcommand{\propSBfour}{\textbf{SB4}}
\newcommand{\propSBfive}{\textbf{SB5}}
\newcommand{\propSBk}{\textbf{SBk}}
\newcommand{\propSBdeg}{\textbf{SB-$<$}}
\newcommand{\propSBf}{\textbf{SBf}}
\newcommand{\propBHT}{\textbf{BHT}}
\newcommand{\propBHk}{\textbf{BH$\kappa$}}
\newcommand{\propBHTand}{\textbf{BHT-$\land$}}
\newcommand{\propBHTor}{\textbf{BHT-$\lor$}}
\newcommand{\var}{\mathsf{Var}}
\newcommand{\varO}{\mathsf{Var}_{\mathsf{obs}}}
\newcommand{\varI}{\mathsf{Var}_{\mathsf{inv}}}
\newcommand{\IntVar}{\mathsf{IntVar}}
\newcommand{\Env}{\mathsf{Env}}
\newcommand{\env}{\Gamma}
\newcommand{\envO}{\env^{\mathsf{obs}}}
\newcommand{\envI}{\env^{\mathsf{inv}}}
\newcommand{\typelist}[1]{\mathop{\mathsf{list}} #1}
\newcommand{\triple}[3]{\{#1\}\ #2\ \{#3\}}
\newcommand{\axSkip}{\textsc{Skip}}
\newcommand{\axUpdVar}{\textsc{UpdVar}}
\newcommand{\ruleSeq}{\textsc{Seq}}
\newcommand{\ruleIf}{\textsc{If}}
\newcommand{\ruleLoop}{\textsc{Loop}}
\newcommand{\ruleConseq}{\textsc{Conseq}}
\newcommand{\axHist}{\textsc{Hist}}
\newcommand{\axPAR}{\textsc{Par}}
\newcommand{\axTT}{\textsc{Two-HT}}
\newcommand{\axLT}{\textsc{Low-HT}}
\newcommand{\axUT}{\textsc{Up-HT}}
\newcommand{\ruleOR}{\textsc{Mult-}{\mbox{$\lor$}}}
\newcommand{\ruleAND}{\textsc{Mult-}{\mbox{$\land$}}}
\newcommand{\M}{\mathfrak{M}}
\newcommand{\obs}{\mathit{obs}}  
\newcommand{\relO}{\calr}
\newcommand{\mem}{m}
\newcommand{\sigmaw}{\mem_{w}}
\newcommand{\sigmawp}{\mem_{w'}}
\newcommand{\sem}[1]{\mbox{$[\![ #1 ]\!]$}}
\newcommand{\fv}[1]{\mathsf{fv}(#1)}
\newcommand{\upd}[1]{\mathsf{upd}(#1)}
\newcommand{\qfun}[1]{q_{#1}}
\newcommand{\wpcond}[0]{\mathsf{wp}}
\newcommand{\wpc}[1]{\wpcond^{#1}}
\newcommand{\wpf}[2]{\mathit{F}^{#1}_{#2}}
\newcommand{\modelsi}[0]{\mathbin{\models^{\cali}}}
\newcommand{\phack}{\mathrm{hack}}
\newif\ifcommentson\commentsonfalse
\newif\ifconferenceon\conferenceontrue
\newcommand{\arxiv}[1]{}
\newcommand{\conference}[1]{#1}
\newcommand{\conferenceShort}[1]{}
\newcommand{\arxiv}[1]{#1}
\newcommand{\conference}[1]{}
\newcommand{\conferenceShort}[1]{}
\begin{document}

\begin{frontmatter}

\title{Sound and Relatively Complete Belief Hoare Logic for Statistical Hypothesis Testing Programs}

\author[mymainaddress,mysecondaryaddress]{Yusuke Kawamoto\corref{mycorrespondingauthor}}
\cortext[mycorrespondingauthor]{Corresponding author}

\author[mythirdaddress]{Tetsuya Sato}

\author[myfourthaddress]{Kohei Suenaga}

\address[mymainaddress]{AIST, Tokyo, JAPAN.}
\address[mysecondaryaddress]{PRESTO, JST, Japan}
\address[mythirdaddress]{Tokyo Institute of Technology, Japan}
\address[myfourthaddress]{Kyoto University, Japan}

\begin{abstract}
We propose a new approach to formally describing the requirement for statistical inference and checking whether a program uses the statistical method appropriately. Specifically, we define belief Hoare logic (BHL) for formalizing and reasoning about the statistical beliefs acquired via hypothesis testing. This program logic is sound and relatively complete with respect to a Kripke model for hypothesis tests. We demonstrate by examples that BHL is useful for reasoning about practical issues in hypothesis testing. In our framework, we clarify the importance of prior beliefs in acquiring statistical beliefs through hypothesis testing, and discuss the whole picture of the justification of statistical inference inside and outside the program logic.
\end{abstract}

\begin{keyword}
knowledge representation \sep
epistemic logic \sep
program logic \sep
Kripke model \sep
statistical hypothesis testing
\MSC[2010] 00-01\sep  99-00
\end{keyword}

\end{frontmatter}

\pagestyle{plain}
\thispagestyle{plain}

\section{Introduction}
\label{sec:intro}
Statistical inferences have been increasingly used to derive and justify scientific knowledge 
in a variety of academic disciplines, from natural sciences to social sciences.
This has significantly raised the importance of statistics, but also brought concerns about the inappropriate procedure and the incorrect interpretation of statistics in scientific research.
Notably, previous studies have pointed out that 
many research articles in biomedical science 
contain severe errors in applying statistical methods and interpreting their outcomes~\cite{Lang:14:inbook}.
Furthermore, large proportions of these errors have been reported for basic statistical methods, possibly performed by researchers who can use only elementary techniques.
In particular, the concept of \emph{statistical significance}, evaluated using \emph{$p$-values}, has been commonly misused and misinterpreted~\cite{Wasserstein:16:AS}. 

Key factors underlying these human errors are that (i) the requirements for statistical inference are typically implicit or unrecognized, and that (ii) the logical aspects of statistical inference are described informally in natural language and  
handled manually by analysts who may not fully understand the statistical methods.
As a result, analysts may overlook some of the assumptions necessary for statistical methods, hence choosing inappropriate methods.
Nevertheless, to our knowledge, no prior work on formal methods has
specified the preconditions for statistical inference programs
or verified the choice of statistical techniques in programs.

In this paper, we propose a method for formalizing and reasoning about statistical inference using symbolic logic.
Specifically, we introduce sound and relatively complete \emph{belief Hoare logic} (\emph{BHL}) to formalize the statistical beliefs acquired via hypothesis tests, and to prevent errors in the choice of hypothesis tests by describing their preconditions explicitly.
We demonstrate by examples that this logic can be used to reason about practical issues concerning statistical inference.

\subsection{Contributions}
\label{sub:contributions}
Our main contributions are as follows:
\begin{itemize}
\item We propose a new approach to formalizing and reasoning about statistical inference in a program.
In particular, this approach formalizes and checks the requirement for statistical methods to be used appropriately in a program.
\item We define an epistemic language to express statistical beliefs obtained by hypothesis tests on datasets.
Specifically, we formalize a statistical belief in a hypothesis $\phi$ as the knowledge that either (i) $\phi$ holds, (ii) the sampled dataset is unluckily far from the population, or (iii) the population does not satisfy the requirements for the hypothesis test.
Then we introduce a Kripke model for hypothesis tests to define the interpretation of this language.
\item Using this epistemic language, we construct belief Hoare logic (BHL) for reasoning about statistical hypothesis testing programs.
Then we prove that BHL is sound and relatively complete w.r.t. the Kripke model for hypothesis tests.
\item 
We clarify the importance of prior beliefs in acquiring statistical beliefs, 
and prove essential properties of statistical beliefs by using our framework.
\item We show that BHL is useful for reasoning about practical issues concerning statistical inference, such as $p$-value hacking and multiple comparison problems.
\item We provide the whole picture of the justification of statistical beliefs acquired via hypothesis tests inside and outside BHL.
In particular, we discuss 
the empirical conditions for hypothesis tests and the epistemic aspects of statistical inference.
\end{itemize}

To the best of our knowledge, this appears to be the first attempt to introduce a program logic that can specify the requirements for hypothesis tests to be applied appropriately.
We consider this as the first step to building a framework for formalizing and verifying the validity of empirical science and data-driven artificial intelligence.

\subsection{Relation with the Preliminary Version}
\label{sub:relation}

A preliminary version of this work, considering only sound but \emph{not} complete program logic, appeared in~\cite{Kawamoto:21:KR}.
The main novelties of this paper to that version are:
\begin{itemize}
\item We introduce a sound and \emph{relatively complete} belief Hoare logic (BHL) that has a simpler set of axioms and inference rules than our preliminary version~\cite{Kawamoto:21:KR}.
\item We extend the notion of a possible world with a hypothesis test history, and redefine the assertion language. This enables us to provide a more rigorous model for statistical beliefs and to prove the relative completeness of BHL.
\item We add propositions and discussion for hypothesis formulas and show the importance of prior beliefs in hypothesis testing by using our framework in \Sec{sec:properties}.
\item We present all proofs for our technical results
in \App{sec:proofs}.
\end{itemize}

\subsection{Related Work}
\label{sub:related}

\emph{Hoare logic}~\cite{Hoare:69:CACM,winskel} is a form of program logic for an imperative programming language.
This program logic is then extended and adapted so that it can handle various types of programs and assertions~\cite{Apt:19:FAOC},
including
heap-manipulating programs~\cite{separationLogic},
hybrid systems~\cite{infinitesimalKSUenaga}, and
probabilistic programs~\cite{hoareForProbability}.
Atkinson and Carbin propose an extension of Hoare logic
with epistemic assertions~\cite{epistemicHoareLogic}.
In their work, an epistemic assertion is used to reason about
the belief of a program about a partially observable environment,
whereas their logic does not deal with a belief
arising from statistical tests conducted in a program.
To the best of our knowledge, ours appears to be the first program logic that formalizes
the concept of statistical beliefs in hypothesis testing.

\emph{Epistemic logic}~\cite{vonWright:51:book,Hintikka:62:book} is a branch of modal logic for reasoning about knowledge and belief~\cite{Fagin:95:book}.
It has been used to specify and verify various knowledge properties in systems, e.g., authentication~\cite{Burrows:90:TOCS} and anonymity~\cite{Syverson:99:FM,Garcia:05:FMSE}.
Many previous works on epistemic logic incorporate
probabilistic notions of beliefs~\cite{Halpern:03:book} and
certain notions of degrees of beliefs and confidence~\cite{Huber:08:book}.
Notably, Bacchus \etal{}~\cite{Bacchus:99:AI} define the degree of belief in a possible world semantics where each world is associated with a weight and the degree of belief in a formula $\phi$ is defined as the normalized sum of the weights of all accessible possible worlds satisfying $\phi$.
However, this line of studies has not modeled the degree of belief in the sense of statistical significance in a hypothesis test.
In contrast, our framework models the degree of belief in terms of a $p$-value without assigning a weight to a possible world.

The first attempt to express statistical properties of hypothesis tests using modal logic is the work on statistical epistemic logic (StatEL)~\cite{Kawamoto:19:FC,Kawamoto:19:SEFM,Kawamoto:20:SoSyM}.
They introduce a belief modality weaker than S5, and a Kripke model with an accessibility relation defined using a statistical distance between possible worlds.
Unlike our work, however, StatEL cannot describe the procedures of statistical methods or reason about their correctness.

\emph{Dynamic epistemic logic (DEL)}~\cite{Van:07:book-dynamic} is a branch of modal logic for reasoning about the changes of knowledge and belief when events take place.
For a precondition $\psi$, a postcondition $\phi$, and a terminating program $C$, a DEL formula $\psi \rightarrow [C] \phi$ with a single agent expresses a Hoare triple $\triple{\psi}{C}{\phi}$. Therefore, DEL may be extended to deal with hypothesis testing programs by incorporating the modal operators and predicate symbols for statistical notions introduced in our paper.

\emph{Fuzzy logic}~\cite{Zadeh:65:IC} is a branch of many-valued logic where the truth values range over $[0, 1]$.
It has been used to model and reason about the degrees of uncertainty in beliefs and confidence~\cite{Nguyen:18:book,Eberhart:21:TAP}.
To the best of our knowledge, however, no prior work on fuzzy logic can reason about the correct application of statistical hypothesis testing.

\emph{Default logic}~\cite{Reiter:80:AIJ} is a branch of logic for \emph{reasoning-by-default}, in which the absence of evidence for an exception leads to a conclusion by default.
By extending default logic, a few studies~\cite{Kyburg:02:NMR,Kyburg:06:CI} have formalized the reasoning in hypothesis tests.
In particular, they manage to syntactically deal with the \emph{non-monotonicity}~\cite{Kyburg:99:IJPRAI} of statistical reasoning to formalize that a conclusion of a hypothesis test may be retracted on the basis of further data.
However, since these extensions of default logic do not allow for describing programs, they do not derive the correctness of hypothesis testing programs or methods.
Furthermore, these previous studies do not provide a semantics for their default logic based on statistical models and do not prove the soundness of the logic in terms of statistics.
In contrast, we prove the soundness and relative completeness of our logic w.r.t. a possible world semantics extended with statistics.

We remark that reasoning-by-default is not necessary to formalize the correctness of hypothesis testing methods in our program logic.
This is because, given a (mathematical) statistical model, the requirements for hypothesis testing methods can be formally expressed as assumptions 
equipped with belief modality in our assertion language, without requiring the notion of \emph{justifications} in default logic.
Furthermore, we handle the non-monotonicity of statistical reasoning using our programming language and its operational semantics.
Instead of dealing directly with non-monotonic statements about $p$-values,
we introduce a \emph{test history} that grows monotonically with the executions of hypothesis test commands in a Kripke model.

From a broader perspective, many studies formalize and reason about programs based on knowledge~\cite{Fagin:95:PODC} and beliefs~\cite{Laverny:05:synthese}.
For example, 
Sardina and Lesp\'{e}rance~\cite{Sardina:09:PROMAS} extend the situation calculus-based agent programming language GOLOG~\cite{Levesque:97:JLP} with BDI (belief-desire-intention)~\cite{Bratman:87:book} agents.
Belle and Levesque~\cite{BelleL:15:IJCAI} propose a belief-based programming language called ALLEGRO to deal with the probabilistic degrees of beliefs in programs with noisy acting and sensing.
However, no prior work appears to have studied belief-based programs involving statistical  hypothesis testing.

\subsection{Plan of the Paper}

In \Sec{sec:preliminaries}, we review fundamental concepts from statistical hypothesis testing.
In \Sec{sec:overview}, we present an illustrating example to explain the basic ideas of our framework.
In \Sec{sec:model}, we introduce a Kripke model for describing statistical properties and define hypothesis testing.
In \Sec{sec:simple-language}, we introduce the syntax and the semantics of an imperative programming language $\Prog$.
In \Sec{sec:assertion-logic}, we define an assertion language, called \emph{epistemic language for hypothesis testing} (\ELHT{}), that can express statistical beliefs.
In \Sec{sec:properties}, we clarify the importance of prior beliefs in acquiring statistical beliefs, and show the essential properties of statistical beliefs in our framework.
In \Sec{sec:belief-hoare-logic}, we introduce \emph{belief Hoare logic} (BHL) for formalizing and reasoning about statistical inference using hypothesis tests.
Then we show the soundness and relative completeness of BHL.
In \Sec{sec:reasoning-with-BHL}, we apply our framework to the reasoning about \emph{$p$-value hacking} and \emph{multiple comparison problems} using BHL.
In \Sec{sec:discuss}, we provide the whole picture of the justification of statistical beliefs inside and outside BHL.
In \Sec{sec:conclude}, we present our final remarks.

In \App{sub:instantiation}, we present examples of the instantiations of derived rules with concrete hypothesis testing methods.
In \App{sec:proofs}, we show the proofs for the propositions about assertions, 
basic results on structural operational semantics,
remarks on parallel compositions, and
the proofs for BHL's soundness and relative completeness.

\section{Preliminaries}
\label{sec:preliminaries}
In this section, we introduce notations used in this paper and recall background on statistical hypothesis testing~\cite{Hogg:04:book:ims,Kanji:06:book:100stat}.

Let $\nats$, 
$\reals$, $\realsnng$ be the sets of non-negative integers, 
real numbers, and non-negative real numbers, respectively.
Let $[0, 1]$ be the set of non-negative real numbers less than or equal to $1$.
We denote 
the set of all finite vectors of elements in $\cals$ by $\cals^*$,
the \emph{set of all multisets of elements in $\cals$ by $\calp(\cals)$}, and
the \emph{set of all probability distributions} over a set~$\cals$ by $\Dists\cals$.
Given two distributions $D_1\in\Dists\cals_1$ and $D_2\in\Dists\cals_2$, 
a \emph{coupling} of $D_1$ and $D_2$ is a joint distribution $D \in \Dists(\cals_1\times\cals_2)$ 
whose marginal distributions $\sum_{s_2\in\cals_2} D[s_1,s_2]$ and $\sum_{s_1\in\cals_1} D[s_1,s_2]$
are identical to $D_1$ and $D_2$, respectively.

\subsection*{Statistical Hypothesis Testing}
\label{sub:hypothesis-testing}
\emph{Statistical hypothesis testing} is a method of statistical inference about an unknown \emph{population} $x$ (the collection of items of interest) on the basis of a \emph{dataset} $y$ sampled from $x$.
In a hypothesis test, 
an \emph{alternative hypothesis}~$\phi_1$ is a proposition that we wish to prove about the population $x$, and
a \emph{null hypothesis} $\phi_0$ is a proposition that contradicts $\phi_1$.
The goal of the hypothesis test is to determine whether we \emph{accept} the alternative hypothesis~$\phi_1$ by \emph{rejecting} the null hypothesis $\phi_0$.

In a hypothesis test, we calculate a \emph{test statistic} $\Test(y)$ from a dataset $y$, and 
see whether the $\Test(y)$ value contradicts the assumption that the null hypothesis $\phi_0$ is true.
Specifically, we calculate the \emph{$p$-value}, showing the degree of likeliness of obtaining $\Test(y)$ when the null hypothesis $\phi_0$ is true.
If the $p$-value is smaller than a threshold (e.g., $0.05$), 
we regard the dataset $y$ is unlikely to be sampled from the population satisfying the null hypothesis $\phi_0$,
hence we reject $\phi_0$ and accept the alternative hypothesis~$\phi_1$.

A hypothesis test is based on a \emph{statistical model} $\popl(\xiPPL, \theta)$ with unknown parameters $\xiPPL$, known parameters $\theta$, and
(assumed) probability distributions of the parameters $\xiPPL$.

\begin{example}[$Z$-test for two population means] \label{eg:hypothesis-test}
As an illustrating example, we present the \emph{two-tailed $Z$-test} for 
means of two populations.
We introduce its statistical model as two normal distributions
$\Normal(\mu_{\ppl1},\sigma^2)$ and $\Normal(\mu_{\ppl2},\sigma^2)$ with a known variance $\sigma^2$ and unknown true means $\mu_{\ppl1},\mu_{\ppl2}$.
Let $y_1$ and $y_2$ be two given datasets where each data value was sampled from $\Normal(\mu_{\ppl1},\sigma^2)$ and $\Normal(\mu_{\ppl2},\sigma^2)$, respectively.

In the $Z$-test, we wish to prove the alternative hypothesis $\phi_1 \eqdef (\mu_{\ppl1} \neq \mu_{\ppl2})$ by rejecting the null hypothesis $\phi_0 \eqdef (\mu_{\ppl1} = \mu_{\ppl2})$.
First, we calculate the $Z$-test statistic:
\[
\Test(y_1,y_2) = \frac{\mean(y_1) - \mean(y_2)}{\sigma\sqrt{\nicefrac{1}{\size{y_1}} + \nicefrac{1}{\size{y_2}}}}
\]
where for $b=1,2$, $\size{y_b}$ is the sample size of the dataset $y_b$ and $\mean(y_b)$ is the mean of all data in $y_b$.
Then we calculate the \emph{$p$-value}
\[
\Pr_{(d_1,d_2) \sim \Normal(\mu_{\ppl1},\sigma^2)\times \Normal(\mu_{\ppl1},\sigma^2)}[\, |\Test(d_1,d_2)| > |\Test(y_1,y_2)| \,]
\]
under the  null hypothesis $\phi_0$.
When the $p$-value is small enough, the datasets $y_1$ and $y_2$ are unlikely to be sampled from the same distribution, i.e., the null hypothesis $\mu_{\ppl1} = \mu_{\ppl2}$ is unlikely to hold.
Hence, in the $Z$-test, if the $p$-value is smaller than a certain threshold (e.g., $0.05$), we reject the null hypothesis $\phi_0$ and accept the alternative hypothesis $\phi_1$.

When we have prior knowledge of $\mu_{\ppl1} \ge \mu_{\ppl2}$ (resp. $\mu_{\ppl1} \le \mu_{\ppl2}$), then we apply the \emph{upper-tailed} (resp. \emph{lower-tailed}) $Z$-test with the alternative hypothesis $\mu_{\ppl1} > \mu_{\ppl2}$ (resp. $\mu_{\ppl1} < \mu_{\ppl2}$) and the null hypothesis $\mu_{\ppl1} = \mu_{\ppl2}$, and obtain the $p$-value
$\Pr[\Test(d_1,d_2) > \Test(y_1,y_2)]$ (resp. $\Pr[\Test(d_1,d_2) < \Test(y_1,y_2)]$).

For more details, the readers are referred to standard textbooks, e.g.,~\cite{Hogg:04:book:ims,Kanji:06:book:100stat}.
\begin{table}[t]
   \centering
   \caption{Hypotheses in the $Z$-tests.}
   \label{tab:hypo:Z}
   \begin{tabular}{@{} lccc @{}} 
      \toprule
      Tails & prior knowledge 
      & alternative hypothesis $\phi_1$ & null hypothesis $\phi_0$ \\[-0.2ex]
      \midrule
      Two    & nothing & $\mu_{\ppl1} \neq \mu_{\ppl2}$ & $\mu_{\ppl1} = \mu_{\ppl2}$ \\
      Upper & $\mu_{\ppl1} \ge \mu_{\ppl2}$ & $\mu_{\ppl1} > \mu_{\ppl2}$ & $\mu_{\ppl1} = \mu_{\ppl2}$ \\
      Lower & $\mu_{\ppl1} \le \mu_{\ppl2}$ & $\mu_{\ppl1} < \mu_{\ppl2}$ & $\mu_{\ppl1} = \mu_{\ppl2}$ \\
      \bottomrule
   \end{tabular}
\end{table}
\end{example}

\section{Illustrating Example}
\label{sec:overview}
Throughout the paper, we use the following simple illustrating example to explain the basic ideas of our framework.

\begin{example}[Comparison tests on drugs] \label{eg:illustrate}
Let us consider three drugs 1, 2, 3 that may decrease blood pressure.
To compare the efficacy of these drugs, we perform experiments and obtain a set $y_i$ of the reduced values of blood pressure after taking drug $i$.
Then we apply hypothesis tests on the dataset $y = (y_1, y_2, y_3)$.
We assume that the data values in $y_i$ have been sampled from the population that follows a normal distribution $\Normal(\mu_i, \sigma^2)$ with a mean $\mu_i$ and a variance $\sigma^2$.
For simplicity, we consider the situation where we know the variance $\sigma^2$ but do not know the means $\mu_i$.

Suppose that drug 1 is composed of drugs 2 and 3, and we investigate whether drug 1 has better efficacy than \emph{both} drugs 2 and 3.
Then we take the following procedure:
\begin{itemize}
\item We first compare drugs 1 and 2 concerning the average decreases in blood pressure.
We apply a two-tailed $Z$-test $\alg_{12}$ (\Eg{eg:hypothesis-test}) to see whether the means of the populations are different, i.e., $\mu_1 \neq \mu_2$.
In this test, the alternative hypothesis $\phi_{12}$ is the inequality $\mu_1 \neq \mu_2$, and the null hypothesis $\neg \phi_{12}$ is $\mu_1 = \mu_2$.
\item 
Let $\alpha_{ij}$ be the $p$-value when only comparing drugs $i$ and~$j$.
\item If $\alpha_{12} \ge 0.05$, the $Z$-test $\alg_{12}$ does not reject the null hypothesis $\neg \phi_{12}$ and concludes that the efficacy of drugs 1 and 2 may be the same.
Then we are not interested in drug 1 any more, and skip the comparison with drug 3.
\item  If $\alpha_{12} < 0.05$, the $Z$-test $\alg_{12}$ rejects the null hypothesis $\neg \phi_{12}$ and concludes that the alternative hypothesis $\phi_{12}$ is true.
Then we apply another $Z$-test $\alg_{13}$ to check whether the alternative hypothesis $\phi_{13} \eqdef \allowbreak (\mu_1 \neq \mu_3)$ is true.
\item Finally, we calculate the $p$-value of the combined test $\alg$ consisting of $\alg_{12}$ and $\alg_{13}$, with the \emph{conjunctive} alternative hypothesis $\phi_{12} \land \phi_{13}$.
\end{itemize}
\end{example}

\smallskip
\noindent{\textbf{Overview of the Framework.}}
In our framework, we describe a procedure of statistical tests as a program using a programming language (\Sec{sec:simple-language});
in \Eg{eg:illustrate}, we denote the $Z$-test program comparing drugs $i$ with $j$ by $C_{ij}$, and the whole procedure~by:
\begin{align} \label{eq:illustrate:program}
C_{\rm drug} \eqdef 
C_{12}; \myIf{\alpha_{12} < 0.05}{C_{13}}{\myskip{}}
\end{align}

Then we use an assertion logic (\Sec{sec:assertion-logic}) to describe the requirement for the hypothesis tests as a \emph{precondition} formula.
In \Eg{eg:illustrate}, the precondition is given by:
\begin{align*} 
&\psiPre \eqdef 
\hspace{-1ex}
\bigwedge_{i=1,2,3}\hspace{-1ex} 
\sampled{y_i}{\Normal(\mu_i, \sigma^2)}{n_i}\, \land
\Possible (\phi_{12} \land \phi_{13})
\land \kappaX{\emptyset}.
\end{align*}
In this formula, 
$\sampled{y_i}{\Normal(\mu_i, \sigma^2)}{n_i}$
represents that a set $y_i$ of $n_i$ data is sampled from the population that follows the normal distribution $\Normal(\mu_i, \sigma^2)$.
The modal formula $\Possible (\phi_{12} \land \phi_{13})$ represents that before conducting the hypothesis tests, we have the \emph{prior belief} that the alternative hypothesis $\phi_{12} \land \phi_{13}$ \emph{may be} true (see \Sec{sec:properties} for discussion).
The formula $\kappaX{\emptyset}$ represents that no hypothesis test has been conducted previously.

The statistical belief we want to acquire is specified as a \emph{postcondition} formula.
In \Eg{eg:illustrate}, 
the postcondition is:
\begin{align} \label{eq:illustrate:Hoare:post}
\phiPost \eqdef
\KnowXx{\le0.05}{y, A_{12}}\! \phi_{12} \rightarrow
\KnowXx{\le\min(\alpha_{12},\alpha_{13})}{y, A} (\phi_{12} \land \phi_{13}).
\end{align}
Intuitively, by testing on the dataset $y$,
when we believe $\phi_{12}$ with a $p$-value $\alpha \le 0.05$, we believe the combined hypothesis $\phi_{12} \land \phi_{13}$
with a $p$-value at most $\min(\alpha_{12},\alpha_{13})$.

Finally, we combine all the above and describe the whole statistical inference as a \emph{judgment}.
In \Eg{eg:illustrate}, we write:
\begin{align} \label{eq:illustrate:Hoare}
\env\vdash\triple{ \psiPre }{ C_{\rm drug} }{ \phiPost }.
\end{align}
By proving this judgment using rules in BHL (\Sec{sec:belief-hoare-logic}), we conclude that the statistical inference is appropriate.

We remark that the $p$-value can be larger for a different purpose of testing.
Suppose that in \Eg{eg:illustrate}, drug 1 was a new drug and we wanted to find out that it had better efficacy than \emph{at least one} of drugs 2 and~3.
Then the procedure is:
\begin{align} \label{eq:illustrate:program:disjunctive}
C_{\rm multi} \eqdef C_{12} \,\|\, C_{13},
\end{align}
and the alternative hypothesis is $\phi_{12} \lor \phi_{13}$
with a $p$-value greater than $\alpha_{12}$ and $\alpha_{13}$ (at most $\alpha_{12} + \alpha_{13}$).
This is the \emph{multiple comparisons problem}~\cite{Bretz:10:book}, arising when the combined alternative hypothesis is \emph{disjunctive}.
We explain more details in \Sec{sec:belief-hoare-logic}.

\section{Model}
\label{sec:model}
In this section, we introduce a Kripke model for describing statistical properties and formally define hypothesis tests.

\subsection{Variables, Data, and Actions}
\label{sub:variables-data}

We introduce a finite set $\var$ of variables comprised of two disjoint sets of \emph{invisible variables} and of \emph{observable variables}: $\var = \varI \cup \varO$.
We can directly observe the values of 
the latter, but not those of the former.
Throughout the paper, we use 
$y$ as an observable variable denoting a dataset sampled from the population.

We write $\calo$ for the set of all \emph{data values} that consists of the Boolean values, integers, real numbers, distributions of data values, and lists of data values.
A \emph{dataset} is a list of lists of data values.
In particular, we deal with a list of real vectors as a dataset.
Then the vectors range over $\dompop \eqdef \reals^l$ for an $l \in\nats$.
A distribution over a population has type $\Dists\dompop$,
and a dataset has type $\typelist{\dompop}$.
We remark that  distributions and datasets are elements of $\calo$; i.e., $\Dists\dompop \subseteq \calo$ and $\typelist{\dompop} \subseteq \calo$.
$\bot$ denotes the undefined value.

We write $d \sim D^n$ for the sampling of a set $d$ of $n$ data from a population $D$ where all these data are independent and identically distributed (i.i.d.).
Let $\Smpl$ be a set of i.i.d.~samplings of datasets from populations (e.g., $d \sim D^n$), and $\Cmd$ be a set of program commands
(e.g., $v \detassign e$ and $\myskip{}$).
Then we define an \emph{action} as a sampling of a dataset or a program command; i.e., $\Act = \Smpl\cup\Cmd$.
In \Sec{sec:simple-language}, we instantiate $\Cmd$ with concrete commands used in a programming language.

\subsection{States and Possible Worlds}
\label{sub:possible-worlds}

We introduce the notions of states and possible worlds equipped with test histories.
We write $\cala$ for a finite set of hypothesis tests we consider.

\begin{definition}[States] \label{def:states} 
A \emph{state} is 
a tuple $(\mem, \ac, \testHist)$ 
consisting of (i) the \emph{current assignment} $\mem: \var \rightarrow \calo \cup \{ \bot \}$ of data values to variables, (ii) the action $\ac\in\Act$ that has been executed in the last transition, and (iii) the \emph{test history} $\testHist: (\typelist{\dompop}) \rightarrow \calp(\cala)$ that maps a dataset $d$ to the \emph{multiset} of all hypothesis tests that have used the dataset $d$.
\end{definition}
We remark that $\testHist(d)$ is a multiset rather than a set, because the same test on the same dataset $d$ can be performed multiple times.

\begin{definition}[Possible worlds] \label{def:worlds} 
A \emph{possible world} $w$ is a sequence of states 
$(w[0], \allowbreak w[1], \ldots, w[k-1])$ where $w[i]$ is the $i$-th state in $w$.
$w[0]$ and $w[k-1]$ are called the \emph{initial state} and the \emph{current state}, respectively.
The length $k$ is denoted by $\len{w}$.
We write $(\mem_{w}, \ac_{w}, \testHist_{w})$ for the current state $w[k-1]$ of a possible world $w$.
We assume that the test history is empty at the initial states.
\end{definition}

Since a possible world records all updates of data values, it can be used to model the updates of knowledge and beliefs.
As with previous works on epistemic logic~\cite{Fagin:95:book},
agents' knowledge and belief are defined from their \emph{observation} of possible worlds.
\begin{definition}[Observation] \label{def:observe} 
The \emph{observation} of a state $w[i] = (\mem, \ac, \testHist)$ is defined by 
$\obs(w[i]) = (\mem^{\sf obs}, \ac, \testHist)$ with
an assignment $\mem^{\sf obs}: \varO \rightarrow \calo \cup \{ \bot \}$ 
such that $\mem^{\sf obs}(v) = \mem(v)$ for all $v\in\varO$, 
and that  $\mem^{\sf obs}(v) = \bot$ for all $v\in\varI$.
The \emph{observation} of a world $w$ is given by
$\obs(w) = (\obs(w[0]), \ldots, \obs(w[k-1]))$.
\end{definition}

\subsection{Kripke Model}
\label{sub:Kripke}

We introduce a \emph{Kripke model with labeled transitions} where two kinds of relations $\transA$ and $\relO$ may relate possible worlds.
A \emph{transition relation} $w\transA w'$ represents a transition from a world $w$ to another $w'$ by performing an action~$\ac$.
An \emph{observability relation} $w \relO w'$ represents that two possible worlds $w$ and $w'$ have the same observation, i.e., $\obs(w) = \obs(w')$.
Then $\relO$ is an equivalence relation.
Furthermore, for any worlds $w$ and $w'$, $w \relO w'$ implies $\testHist_{w} = \testHist_{w'}$.
In \Sec{sec:assertion-logic}, this relation is used to model \emph{knowledge} in the conventional Hintikka style.

\begin{definition}[Kripke model] \label{def:epistemic-Kripke} 
A \emph{Kripke model} 
is a tuple 
$\M =(\calw, (\transA)_{a\in\Act}, \relO, \allowbreak (V_w)_{w\in\calw})$ 
consisting of:
\begin{itemize}
\item a non-empty set $\calw$ of possible worlds;
\item for each $\ac\in\Act$, a transition relation $\transA \subseteq \calw \times \calw$;
\item an observability relation $\relO = \{ (w, w') \in \calw \times \calw \mid \obs(w) = \obs(w') \}$;
\item for each $w\in\calw$, a valuation 
$V_w$ that 
maps a $k$-ary predicate symbol to a set of $k$-tuples of data values.
\end{itemize}
We assume that each world in a model has the same sets $\varI$ and $\varO$ of variables.
\end{definition}

In \Sec{sub:program:semantics}, we instantiate the actions in a Kripke model with concrete program commands described in a programming language, and define the transition relation $\transA$ as the semantic relation $\sem{\ac}$.
For example, in a transition $w \transX{v \detassign 1} w'$, an assignment action $v \detassign 1$ is executed and the resulting state $(\mem_{w'}, \ac_{w'}, \testHist_{w'})$ is $(\sigmaw[ v \mapsto 1],  v \detassign 1, \testHist_{w})$.
This is formally defined as $(w, w') \in \sem{v \detassign 1}$ in \Sec{sub:program:semantics}.

Throughout this paper, we deal with a class $\Pred$ of predicate symbols whose interpretations are identical in any possible world having the same memory;
i.e., for any $\eta \in \Pred$ and any $w, w' \in \calw$, if $\mem_{w} = \mem_{w'}$ then $V_w(\eta) = V_{w'}(\eta)$.

\subsection{Formulation of Hypothesis Testing}
\label{sub:notation-hypothesis-test}

Next, we formalize the notion of hypothesis tests as follows.

\begin{definition}[Hypothesis tests] \label{def:hypothesis-tests} 
We consider a \emph{basic test type} $\side\in\{ \sideL, \sideU, \sideT \}$ each representing a \emph{lower-tailed}, \emph{upper-tailed}, and \emph{two-tailed} test.
A \emph{hypothesis test} is a tuple 
$\algp{\phi_0} = (\phi_0, \Test, \Dtpo, \unlikelys{\side}, \popl)$ 
consisting of:
\begin{itemize}
\item $\phi_0$ is an assertion, called a null hypothesis;
\item $\Test$ is a function that maps a dataset $d\in\typelist{\dompop}$ to its test statistic $\Test(d)$, usually with $\range(\Test) = \reals^k$ for a $k \ge 1$;
\item $\Dtpo\in\Dists(\range(\Test))$ is a probability distribution of the test statistic when the null hypothesis $\phi_0$ is true;
\item $\unlikelyst{\side}{\Test} \in \range(\Test)\times\range(\Test)$ is a \emph{likeliness relation} where for a test type $\side$ and for values $\tstat$ and $\tstat'$ of the test statistic,\, $\tstat \unlikelyst{\side}{\Test} \tstat'$ represents that $\tstat$ is at most as likely as $\tstat'$.
For brevity, we often omit $\Test$ to write ${\unlikelys{\side}}$;
\item 
$\poplxt$ denotes the population following a statistical model $\popl$ with unknown parameters $\xiPPL$ and known parameters $\theta$.
\end{itemize}
For brevity, we 
abbreviate $\algp{\phi}$ as $\alg$.
We denote by $\poplA$ the statistical model $\popl$ of a hypothesis test $\alg$,
and by $\cala$ a finite set of hypothesis tests we consider.
\end{definition}

\begin{example}[The likeliness relation for $Z$-test]\label{eg:Z-test:likeliness}
The two-tailed $Z$-test for means of two populations in Example \ref{eg:hypothesis-test} can be denoted by the following hypothesis test:
\[
\algp{\phi_0} = (\phi_0, \allowbreak \Test, \Normal(0, 1), \allowbreak \unlikelys{\sideT}\!, \Normal(\mu_{\ppl1},\sigma^2)\times \Normal(\mu_{\ppl2},\sigma^2)).
\]
The likeliness relation $\tstat \unlikelys{\sideT} \tstat'$ expresses $|\tstat| \geq |\tstat'|$.
When the null hypothesis $\phi_0$ is true, the test statistic $\Test(y_1,y_2)$ follows the standard normal distribution $\Normal(0, 1)$, hence 
\[
\Pr[ \Test(y_1,y_2) \unlikelys{\sideT} 1.96 ] = \Pr[ |\Test(y_1,y_2)| \geq 1.96] \approx 0.05.
\]
For the upper-tailed (lower-tailed) test, 
with alternative hypothesis $\phiU \eqdef (\mu_{\ppl1} > \mu_{\ppl2})$
(resp. $\phiL \eqdef (\mu_{\ppl1} < \mu_{\ppl2})$),
the likeliness relation $\tstat \unlikelys{\sideU} \tstat'$ (resp. $\tstat \unlikelys{\sideL} \tstat'$) is defined by $\tstat \geq \tstat'$ (resp. $\tstat \leq \tstat'$).
\end{example}

Next, we define \emph{disjunctive/conjunctive combinations} of hypothesis tests.
Intuitively, a disjunctive combination $\algp{\phi_1\lor\phi_2}$ (resp. conjunctive combination $\algp{\phi_1\land \phi_2}$) is a hypothesis test with a null hypothesis $\phi_1\lor\phi_2$ (resp. $\phi_1\land\phi_2$) that performs two hypothesis tests $\algp{\phi_1}$ and $\algp{\phi_2}$ in parallel.
\begin{definition}[Combination of tests] \label{def:combine} 
For $b=1,2$, let 
$\algp{\phi_b} = (\phi_b, \Test_b, \Dtjx{\phi_b}, {\unlikelyst{\side_b}{\Test_b}}\!, \popl_b)$ be two hypothesis tests.
The \emph{disjunctive combination} of $\algp{\phi_1}$ and $\algp{\phi_2}$ is given by 
\[
\algp{\phi_1\lor\phi_2} = (\phi_1 \lor \phi_2, \Test, \Dtx{(\phi_1,\phi_2)}, \unlikelyst{\side_1,\side_2}{\Test}, \allowbreak \popl)
\]
where 
$\Test(y_1,y_2) \allowbreak = (\Test_1(y_1), \Test_2(y_2))$,
$\Dtx{(\phi_1,\phi_2)}$ is a coupling of $\Dtax{\phi_1}$ and $\Dtbx{\phi_2}$ (i.e., it is a joint distribution such that $\Dtax{\phi_1}$ and $\Dtbx{\phi_2}$ are the marginal distributions of $\Dtx{(\phi_1,\phi_2)}$),
$(\tstat_1,\tstat_2) {\unlikelyst{\side_1,\side_2}{\Test}} (\tstat'_1,\tstat'_2)$ iff
either $\tstat_1 \unlikelyst{\side_1}{\Test_1} \tstat'_1$ or $\tstat_2 \unlikelyst{\side_2}{\Test_2} \tstat'_2$, and
$\popl$ is a coupling of $\popl_1$ and $\popl_2$.
Similarly, the \emph{conjunctive combination} of $\algp{\phi_1}$ and $\algp{\phi_2}$ is
\[
\algp{\phi_1\land\phi_2} = (\phi_1 \land \phi_2, \Test, \Dtx{(\phi_1,\phi_2)}, {\unlikelyst{\side_1,\side_2}{\Test}}\!, \popl)
\]
where
$(\tstat_1,\tstat_2) \unlikelyst{\side_1,\side_2}{\Test} (\tstat'_1,\tstat'_2)$ iff
$\tstat_1 \unlikelyst{\side_1}{\Test_1} \tstat'_1$ and $\tstat_2 \unlikelyst{\side_2}{\Test_2} \tstat'_2$.
\end{definition}

Then we define a function $\ListTests{y, \alg}$ to decompose a combined test into individual tests.

\begin{definition} \label{def:decompose}
For a combination $\alg$ of $n$ hypothesis tests $\alg_1, \ldots, \alg_n$ and a tuple of $n$ datasets $y=(y_1, \ldots, y_n)$, 
the multiset of all the pairs of datasets and tests is:
\begin{align} \label{eq:list-tests}
\ListTests{y, \alg} \eqdef \{ (y_i, \alg_i) \mid i = 1, \ldots, n \}.
\end{align}
\end{definition}
For instance, 
$\ListTests{(y_1, y_2), \allowbreak \algp{\phi_1\land \phi_2}} \allowbreak = \{ (y_1,\algp{\phi_1}), \allowbreak (y_2,\algp{\phi_2}) \}$.

\section{A Simple Programming Language}
\label{sec:simple-language}
In this section, we introduce an imperative programming language $\Prog$ to describe programs for hypothesis testing
on a dataset that has already sampled from a population.
This language has the following two features.
First, it has a command for performing a hypothesis test and assigning a test statistic.
This command also updates the history of all previously executed hypothesis tests,
which is used to calculate the $p$-values for single and multiple tests.
Second, $\Prog$ supports parallel compositions of independently running programs
to ensure that multiple hypothesis tests do not interfere with each other.

\subsection{Syntax of \Prog}
\label{sub:program:syntax}

Let $\Fsym$ be the set of all function symbols, 
where constants are dealt as functions with arity $0$.
We define the syntax of \Prog{} by the following BNF:
\begin{align*}
T &\mathbin{::=} 
\mathtt{bool} \mid \mathtt{int} \mid 
\mathtt{real} \mid 
T \times T \mid \mathtt{list}(T) &\text{(Types)}\\
e & \mathbin{::=} v \mid f(e,\ldots,e) & \hspace{-6ex} \text{(Program terms)}\\
c & \mathbin{::=}
  \myskip{}
  \mid v \detassign e
  & \text{(Commands)}\\
C & \mathbin{::=} c
  \mid  C ; C
  \mid  C \mathbin{\|} C
  \mid \myIf{e}{C}{C}
  \mid \myLoop{e}{C}
  & \text{(Programs)}
\end{align*}
where 
$v \in \varO$ and  $f \in \Fsym$.
Then a program can handle only observable variables.

$T$ represents \emph{types}.
A type is either $\mathtt{bool}$ for Boolean values, $\mathtt{int}$ for integers, 
$\mathtt{real}$ for real numbers, 
$T_1 \times T_2$ for pairs consisting of a value of type $T_1$ and a value of type $T_2$, or $\mathtt{list}(T)$ for lists of values of type $T$.
$e$ represents \emph{expressions} that evaluate to values.  An expression is either a variable $v$ or a function call $f(e_1,\dots,e_k)$; the latter is typically a call to a function that computes a test statistic.
$c$ and $C$ represent \emph{commands} and \emph{programs}, respectively.
We give their intuitive explanation as follows.
\begin{itemize}
\item $\myskip{}$ does nothing.
\item $v \detassign e$ updates $v$ with the result of an evaluation of $e$.
\item $C_1 ; C_2$ executes $C_1$ and then $C_2$.
\item $C_1 \| C_2$ executes $C_1$ and $C_2$ in parallel that may share some data.
\item $\myIf{e}{C_1}{C_2}$ executes $C_1$ if $e$ evaluates to $\mytrue$; executes $C_2$ if $e$ evaluates to $\myfalse$.
\item $\myLoop{e}{C}$ iteratively executes $C$ as long as $e$ evaluates to $\mytrue$.
\end{itemize}
For instance, 
the programs in \Sec{sec:reasoning-with-BHL} conform to the programming language $\Prog$.

Hereafter we assume that all programs are well-typed although we do not explicitly mention the types. 
Checking this condition for our language can be done by adapting a standard type-checking algorithm to our setting.

We write $\upd{C}$ for the set of all variables that may be updated by executing $C$:
$\upd{\myskip} = \emptyset$, $\upd{v {\detassign} e} \allowbreak = \{v\}$, 
$\upd{C_1; C_2} = \upd{C_1 \| C_2} = \upd{\myIf{e}{C_1\allowbreak}{C_2}} = \upd{C_1} \cup \upd{C_2}$, and
$\upd{\myLoop{e}{C}} = \upd{C}$.

Then we impose the following restriction to every occurrence of $C_1 \| C_2$:
$\upd{C_1} \cap \var(C_2) = \upd{C_2} \cap \var(C_1) = \emptyset$.
This restriction is to ensure that an execution of $C_1$ does not interfere with that of $C_2$, and vice versa.

\subsection{Semantics of \Prog}
\label{sub:program:semantics}

We define the semantics of $\Prog$ over a Kripke model $\M$ 
with labeled transitions 
(\Sec{sub:Kripke})
based on the standard structural operational semantics (e.g. \cite{10.5555/1296072}).
Intuitively, executing a program command $c$ in a possible world $w = (\mem,\ac,\testHist)$ updates the memory $\mem$, records the command $c$ as the previous action $\ac$, and stores all previously executed hypothesis tests in the history $\testHist$.

Formally, for a possible world $w \in \calw$ and $n = \len{w}$, we write 
\[
w = w[0],w[1],\ldots,w[n-2],(\mem,\ac,\testHist)
\]
where $(\mem,\ac,\testHist)$ is the current state $w[n-1]$ with 
an assignment $\mem \colon \var\to\mathcal{O} \cup \{\bot\}$,
an action $\ac$ in the last transition in $w$, and
a test history $\testHist$.

For the assignment $\mem$ of the current state, we define the evaluation $\sem{e}_{\mem}$ of a program term $e$ inductively by $\sem{v}_{\mem} = \mem(v)$ and $\sem{f(e_1,\ldots,e_k)}_{\mem} = \sem{f}(\sem{e_1}_{\mem},\ldots,\sem{e_k}_{\mem})$.

As in \Fig{fig:rules:semantics}, 
we define a binary relation 
\[
{\longrightarrow} \subseteq (\Prog \times \calw) \times ( (\Prog \times \calw) \cup \calw)
\]
that relates a pair $\langle C, w \rangle$ consisting of a program $C$ and a possible world $w$ to its next step of execution. 
If $C$ is terminated, the next step will be a possible world $w'$, otherwise the execution continues to the $\langle C', w' \rangle$. 
\begin{figure}[htb]
\begin{align*}
\langle \myskip, w \rangle 
&\longrightarrow w; (m,\myskip,\testHist)
\\[0.8ex]
\langle v \detassign e, w \rangle &
\longrightarrow w; (\mem[v \mapsto \sem{e}_{\mem}],v \detassign e,\testHist)
\\[0.8ex]
\langle v \detassign f_{\!\alg}(y), w \rangle
&\longrightarrow
  w; (\mem',v \detassign f_{\!\alg}(y), \testHist') \\[-0.5ex]
&\qquad\text{ where }
 \mem' \eqdef \mem[v \,{\mapsto} \sem{f_{\!\alg}(y)}_{\mem}, h_{y,\alg} \mapsto \sem{h_{y,\alg}{+}1}_{\mem} ] \\[-0.5ex]
&\quad\qquad\qquad\testHist' \eqdef \testHist \uplus \{ \mem(y) \mapsto \{\!\alg\}
\end{align*}
\begin{mathpar}
\dfrac{
  \langle C_1, w \rangle \longrightarrow w'
}{
  \langle C_1;C_2, w \rangle \longrightarrow \langle C_2, w' \rangle
}

\dfrac{
  \langle C_1, w \rangle \longrightarrow \langle C'_1, w' \rangle
}{
  \langle C_1;C_2, w \rangle \longrightarrow \langle C'_1;C_2, w' \rangle
}

{
\langle \myIf{e}{C_1}{C_2}, w \rangle \longrightarrow 
\begin{cases}
 \langle C_1, w \rangle & \sem{e}_m = \mytrue\\
 \langle C_2, w \rangle &\sem{e}_m = \myfalse
\end{cases}
}

{
\langle\myLoop{e}{C},w\rangle \longrightarrow
\begin{cases}
\langle C;\myLoop{e}{C},w\rangle \!&\!\sem{e}_m = \mytrue\\
 w  \!&\!\sem{e}_m = \myfalse
\end{cases}
}

\dfrac{
    \langle C_1,w\rangle \longrightarrow \langle C'_1,w'\rangle
}{
    \langle C_1\|C_2,w\rangle \longrightarrow \langle C'_1\|C_2,w'\rangle
}

\dfrac{
    \langle C_1,w\rangle \longrightarrow w'
}{
    \langle C_1\|C_2,w\rangle \longrightarrow \langle C_2,w'\rangle
}

\dfrac{
    \langle C_2,w\rangle \longrightarrow \langle C'_2,w'\rangle
}{
    \langle C_1\|C_2,w\rangle \longrightarrow \langle C_1\|C'_2,w'\rangle
}

\dfrac{
    \langle C_2,w\rangle \longrightarrow w'
}{
    \langle C_1\|C_2,w\rangle \longrightarrow \langle C_1,w'\rangle
}
\end{mathpar}
\caption{Rules of execution of programs.
The operation $\uplus$ on a test history $\testHist$ is defined in \eqref{eq:test-history}.}
\label{fig:rules:semantics}
\end{figure}

We remark that the semantics of a program contains the trace of commands executed in it. 
Hence, even if different programs finally result in the same memory, their semantics may be different.
For instance, when the value of a variable $v$ is $1$, the execution of the two programs $v \detassign v+1$ and $v \detassign 2*v$ result in different worlds with the same memory:
\begin{align*}
    \langle v \detassign v+1 , ([v{\mapsto}1],\ac,\testHist) \rangle &\longrightarrow ([v{\mapsto}1],\ac,\testHist), ([v{\mapsto}2],v \detassign v+1,\testHist)\\
    \langle v \detassign 2*v , ([v{\mapsto}1],\ac,\testHist) \rangle &\longrightarrow ([v{\mapsto}1],\ac,\testHist),([v{\mapsto}2], v \detassign 2*v,\testHist)
\end{align*}

We define the semantic relation $\sem{C} \subseteq \calw \times \calw$ by
\[
\sem{C}(w) = \{
w' ~|~ \langle C, w\rangle \longrightarrow^\ast w'
\}
\]
where $\longrightarrow^\ast$ is the transitive closure of $\longrightarrow$.
When the program $C$ does not terminate, $\sem{C}(w) = \emptyset$.
With the semantic relation $\sem{c}$ for single commands $c$ of the programming language $\Prog{}$, we instantiate the transition relation in the Kripke model $\M$ (\Def{def:epistemic-Kripke});
i.e., we define the transition relation $\transX{c}$ as the semantic relation $\sem{c}$.

\subsubsection{Remark on Parallel Compositions}
\label{Remark:nondeterminism:parallel}
Since parallel compositions are nondeterministic, 
$w' \in \sem{C_1 \| C_2}(w)$ may not be unique.
However, the resulting world $w'$ is essentially the same, because $w' \in \sem{C_1 \| C_2}(w)$ is convertible to a pair of $w_1 \in \sem{C_1 }(w)$ and $w_2 \in \sem{C_1}(w)$ and vice versa.

If we have
$\langle C_b,w \rangle \longrightarrow^\ast w;u_b$ for $b = 1,2$,
then by $\upd{C_b} \cap \var(C_{3-b}) = \emptyset$,
we obtain a sequence $u'$ such that $\langle C_1\|C_2,w\rangle \longrightarrow^\ast w;u'$ by combining $u_1$ and $u_2$.

Conversely, if  $\langle C_1\|C_2,w\rangle \longrightarrow^\ast w;u'$, we can decompose $u'$ into $u_1$ and $u_2$ such that $\langle C_b,w \rangle \longrightarrow^\ast w;u_b$ for $b = 1,2$ 
(for details, see \App{sub:appendix:parallel}).

\subsubsection{Procedures of Hypothesis Testing}

We define the interpretation of a program 
$f_{\alg}$ 
for a hypothesis test 
$\alg = (\phi_0, \Test, \Dtpo, \allowbreak \unlikelys{\side}, \popl)$ 
with a null hypothesis $\phi_0$, a test statistic $\Test$, a test type $\side$, and a statistical model~$\popl$.
For a dataset $y$ and an assignment $m$, 
$\sem{f_{\alg}(y)}\!\,_m$ 
represents the $p$-value:
\begin{align}\label{eq:sem:HT}
\sem{f_{\alg}(y)}\!\,_m =
\Pr_{\tstat \sim \Dtpo}[\, \tstat \unlikelys{s} \Test(m(y)) \,]
{,}
\end{align}
which is the probability that a value $\tstat$ is at most as likely as the test statistic $\Test(m(y))$ when it is sampled from 
$\Dtpo$ in the world where the null hypothesis $\phi_0$ is true.

As in \Fig{fig:rules:semantics}, the execution of a program $v \detassign f_{\alg}(y)$ for a hypothesis test $\alg$ updates the test history $\testHist$ so that $\alg$ is added to the \emph{multiset} $\testHist(\mem(y))$ of all tests using the dataset $\mem(y)$.
Formally, the 
operation $\uplus$
in \Fig{fig:rules:semantics} is the union of multisets:
\begin{align} \label{eq:test-history}
\testHist'(o) = 
\begin{cases}
\testHist(o) \uplus \{ \alg \}
&\mbox{ if $o = \mem(y)$ }
\\
\testHist(o)
&\mbox{ otherwise}
\end{cases}
\end{align}

To refer to the test history $\testHist_{w}$ in a possible world $w$, 
we introduce a \emph{history variable} $h_{y,\alg} \in \varI$ for each variable $y$ and each hypothesis test $\alg$.
A history variable $h_{y,\alg}$ takes an integer value representing the number of executions of a hypothesis test $\alg$ on a dataset $y$.
Since $h_{y,\alg}$ is an invisible variable, it never appears in a program.

The interpretation of $h_{y,\alg}$ is consistent with the test history $\testHist_{w}$;
namely, $\mem_{w}(h_{y,\alg})$ represents the number of occurrences of $\alg$ in the multiset $\testHist_{w}(\mem_{w}(y))$.
As shown in \Fig{fig:rules:semantics}, if a program command updates $\testHist_{w}$, the values $\mem_{w}(h_{y,\alg})$ of history variables $h_{y,\alg}$ are also updated consistently.
Although $h_{y,\alg}$ is an invisible variable, the test history $\testHist_{w}$ itself is observable (\Def{def:observe}) and is used to define knowledge.

\section{Assertion Language}
\label{sec:assertion-logic}
In this section, we define an assertion language called \emph{epistemic language for hypothesis testing} (\ELHT{}) that can express knowledge and statistical beliefs.
 
\ELHT{} is based on the modal logic S5 in which $\Know$ is the box modal operator for expressing knowledge.
This language has predicates for describing assertions about sampling and hypothesis testing.
The statistical belief modality $\KnowLeEt$ is defined as syntax sugar in a form of disjunctive knowledge to express a belief about an alternative hypothesis obtained by a hypothesis test $\alg$ on a dataset $y$ with a $p$-value $\epsilon$.
Specifically, we formalize a statistical belief $\KnowLeEt \phi$ in an alternative hypothesis $\phi$ as the knowledge that either (i) $\phi$ holds, (ii) the sampled dataset is unluckily far from the population, or (iii) the population does not satisfy the requirements for the hypothesis test.

\subsection{Syntax of the Assertion Language}
\label{sub:assertion:syntax}

We introduce the syntax of the assertion language \ELHT{}.
We define assertion terms, formulas, and predicate symbols for statistical notions.
Then we introduce the modality of statistical beliefs as disjunctive knowledge.

\subsubsection{Assertion Terms}
\label{sub:assertion-terms}

We introduce \emph{assertion terms} to denote data values as follows.
Recall that $\var$ is the finite set of all variables and $\Fsym$ is the set of all function symbols.
We introduce a set $\IntVar$ of \emph{integer variables} denoting finite tuples of integers such that $\IntVar \cap \var = \emptyset$.
Then the set $\ATerm$ of assertion terms is defined by:
\begin{align*}
u &\, \mathbin{::=} x \mid i \mid f(u,\ldots,u)
\end{align*}
where $x\in\var$, $i \in \IntVar$, and $f\in\Fsym$.
Notice that the assertion terms can deal with invisible variables unlike the program terms (\Sec{sub:program:syntax}).
$\Fsym$ includes function symbols denoting families of probability distributions (e.g., $N$ for normal distributions) and those denoting data operations (e.g., $\mean$ for calculating the mean of data values).

\subsubsection{Assertion Formulas}
\label{sub:epistemic-language}
We define the syntax of assertion formulas with a modal operator $\Know$ for \emph{knowledge}.
As in previous studies,
a formula $\Know\phi$ expresses that we know $\phi$.
Formally, 
the set $\Fml$ of \emph{formulas} is defined by:
\begin{align*}
\phi \,\mathbin{::=}\,\,&
\eta(u_1, \ldots, u_n) \mid
\neg \phi \mid \phi \land \phi \mid
\Know\phi 
\\
\phi' \,\mathbin{::=}\,\,& 
\phi \mid \neg \phi' \mid \phi' \land \phi' \mid \forall i. \phi'
\end{align*}
where $\eta\in\Pred$, $u_1, \ldots, u_n \in\ATerm$, and $i \in \IntVar$.
In the formulas, the quantifiers $\forall$ and $\exists$ never appear inside the epistemic modality $\Know$.
They are used only to prove the relative completeness of our program logic in later sections.
We remark that there is no universal/existential quantification over the observable and invisible variables.
We denote the set of all variables occurring in a formula $\phi$ by $\fv{\phi}$.

As syntax sugar, we use 
\emph{disjunction} $\vee$, 
\emph{implication} $\rightarrow$, 
and
\emph{existential quantifier} $\exists$.
We also define \emph{epistemic possibility} $\Possible$ as usual by 
$\Possible\phi \eqdef \neg \Know \neg \phi$.

\subsubsection{Hypothesis Formulas}
\label{sub:hypotheses}

We introduce notations for alternative/null hypotheses in hypothesis tests  (\Sec{sec:preliminaries}).
Recall that an alternative hypothesis $\phi_1$ is a proposition that we wish to prove, and that
a null hypothesis $\phi_0$ is a proposition that contradicts the alternative hypothesis $\phi_1$.
We write $\nuh \phi_1$ for the null hypothesis corresponding to an alternative hypothesis $\phi_1$.
In \Sec{sub:properties:hypo}, we define it as syntax sugar and discuss the details.

\subsubsection{Predicate Symbols}
\label{sub:predicate}

We introduce the following predicate symbols for statistical notions:
\begin{itemize}
\item 
$u = u'$ represents the equality of two data values $u$ and $u'$.
\item 
$\sampled{y}{x}{n}$ expresses that a dataset $y$ consists of $n$ data sampled from a population~$x$.
\item 
$\followed{y}{x}$ represents that a dataset $y$ has been sampled from a population $x$.
\item 
For $\bowtie\, \in \{ =, \le, \ge, <, > \}$, $\err \in [0, 1]$, $y\in\var$, and $\alg \in \cala$,\,
$\NegBowEya(\err)$ 
represents that the observation of a dataset $y$ is unlikely to occur (with exception $\bowtie\err$) according to the hypothesis test~$\alg$.
\end{itemize}

For brevity, we define the formula $\cmpds{S}$ for a multiset $S$ of pairs of variables and hypothesis tests.
Intuitively, $\cmpds{(y, \alg)}$ represents that a dataset $y$ has been sampled from a population that satisfies the hypothesis test $\alg$'s requirement.
Formally:
\begin{align*}
\cmpds{S} \eqdef
\hspace{-0.3ex}\!\bigwedge_{(y, \alg)\in S} \hspace{-0.8ex} \followed{y}{\poplA(\xi_{\alg}, \theta_{\alg})}
{,}
\end{align*}
where $\poplA(\xi_{\alg}, \theta_{\!\alg})$ denotes the population following the statistical model $\poplA$ for a test $\alg$ (\Sec{sub:notation-hypothesis-test}).
When $S$ is a singleton $\{ (y, \alg) \}$, we abbreviate 
$\cmpds{\{(y, \alg)\}}$ as $\cmpds{y,\alg}$.

\subsubsection{Modality of Statistical Beliefs}
\label{sub:statistical-formulas}
We use the following formulas on executions of hypothesis tests:
\begin{itemize}
\item
$\kappa$ describes the record of all hypothesis tests conducted so far.
Formally, $\kappaS$ is the formula representing that $S$ is the finite multiset of every pair $(y, \alg)$ of a dataset $y$ and a hypothesis test $\alg$ that has been applied to $y$.
This formula is defined as equations between history variables $h_{y,\alg}$ (\Sec{sub:program:semantics}) and their values by: 
\begin{align}
\label{eq:kappaS}
\kappaS \eqdef \hspace{-0.3ex}
\!\bigwedge_{(y, \alg)\in S} \hspace{-0.8ex} h_{y,\alg} = n_{(y, \alg, S)}
\land \hspace{-1ex}
\bigwedge_{(y, \alg)\in (\varO\times\cala)\setminus S} \hspace{-0.8ex} h_{y,\alg} = 0
\end{align}
where 
$n_{(y, \alg, S)}$ is the integer representing the number of occurrences of $(y, \alg)$ in the multiset $S$,
$\varO$ is the finite set of all observable variables, and $\cala $ is the finite set of all hypothesis tests we consider.
When $S$ is a singleton $\{ (y, \alg) \}$, we abbreviate $\kappaX{\{(y, \alg)\}}$ as $\kappaYA{y}$.
\item 
For $\bowtie\, \in \{ =, \le, \ge, <, > \}$, $\err \in [0, 1]$, $y\in\var$, and a hypothesis test $\alg = (\phi_0, \Test, \Dtpo, {\unlikelys{\side}\!,} \allowbreak \poplxt)$,
we define the formula
$\TestBowE{y,\alg}(\err)$ by:
\begin{align*}
\TestBowE{y,\alg}(\err) & \eqdef
\NegBowEya(\err) \land \kappaX{\ListTests{(y,\alg)}},
\end{align*}
where $\ListTests{(y,\alg)}$ is the multiset of each individual test and its dataset in the (combined) hypothesis test $\alg$ (\Eq{eq:list-tests} in \Sec{sub:notation-hypothesis-test}).
\end{itemize}

As syntax sugar, we introduce the \emph{statistical belief modality} $\KnowBowEt$.
Intuitively, a \emph{statistical belief} $\KnowLeEt\phi$ expresses that we believe a hypothesis $\phi$ based on a statistical test $\alg$ on an observed dataset $y$ with a certain error level ($p$-value) at most~$\err$.
We formalize this as the knowledge that either (i) the hypothesis $\phi$ holds, (ii) the observed dataset $y$ is unluckily far from the population (from which $y$ is sampled), or (iii) the dataset $y$ did not come from a population that satisfies the test $\alg$'s requirement (e.g., a population following a normal distribution).

Formally, for a hypothesis test $\algp{\nuh\phi}$ with an alternative hypothesis $\phi$ and its null hypothesis $\nuh\phi$, we define:
\begin{align} \label{eq:KnowBowEphi}
\KnowBowEphi \phi \eqdef \Know( \phi  \lor  \TestBowCnegphi(\err) 
\lor \neg \cmpds{y,\algp{\nuh\phi}}).
\end{align}
As the dual modality, we define the \emph{statistical possibility} $\PossibleBowEt$ by $\PossibleBowEt\phi \eqdef \neg \KnowBowEt \neg \phi$.
For brevity, we often omit the subscript $\nuh\phi$ from $\algp{\nuh\phi}$ to abbreviate $\KnowBowEphi \phi$ as $\KnowBowEt \phi$.
We also write $\KnowE$ instead of $\KnowEqEt$ and
$\Know^{\bowtie\varepsilon} \phi$ instead of $\KnowBowEt \phi$.

\subsection{Semantics of the Assertion Language}
\label{sub:assertion:semantics}

We define semantics for the assertion language \ELHT{}
using a Kripke model.

\subsubsection{Interpretation of Assertion Terms and Formulas}
We introduce an \emph{interpretation function} $\cali: \IntVar \rightarrow \ints^*$ that assigns a finite tuple of integers to an integer variable.
Then we define the interpretation $\sem{u}_{\mem}^{\cali}$ of an assertion term $u$ w.r.t. $\cali$ and an assignment $\mem: \var \rightarrow \calo\cup \{\bot\}$ inductively by $\sem{x}_{\mem}^{\cali} = \mem(x)$, $\sem{i}_{\mem}^{\cali} = \cali(i)$, and $\sem{f(u_1,\ldots,u_k)}_{\mem}^{\cali} = \sem{f}(\sem{u_1}_{\mem}^{\cali},\ldots,\sem{u_k}_{\mem}^{\cali})$.

We define the interpretation of formulas in a world $w$ in a Kripke model $\M =(\calw, (\transA)_{a\in\Act}, \relO, \allowbreak (V_w)_{w\in\calw})$ in \Def{def:epistemic-Kripke} as follows:
\begin{align*}
\M, w \modelsi \eta(u_1, \ldots, u_k) 
& ~\mbox{ iff }~
(\sem{u_1}_{\sigmaw}^{\cali}, \ldots, \sem{u_k}_{\sigmaw}^{\cali}) \in V_w(\eta)
\\
\M, w \modelsi \neg \phi
& ~\mbox{ iff }~
\M, w \not\modelsi \phi
\\
\M, w \modelsi \phi \land \phi'
& ~\mbox{ iff }~
\M, w \modelsi \phi
~\mbox{ and }~
\M, w \modelsi \phi'
\\
\M, w \modelsi \Know \phi
& ~\mbox{ iff }~
\mbox{for all $w' \in\calw$,~ $(w, w') \in \relO$}
\hspace{0.5ex}~
\mbox{implies~ } \M, w' \modelsi \phi
\\
\M, w \modelsi \forall i. \phi
& ~\mbox{ iff }~
\M, w \models^{\cali[n/i]} \phi
~\mbox{ for all $n\in\nats$}
{.}
\end{align*}
$\M$ is sometimes omitted when it is clear from the context.

\subsubsection{Interpretation of Predicate Symbols}

We define the interpretation of predicate symbols.
Let $\alg = (\phi, \Test, \allowbreak \Dtp, \unlikelys{\side}, \popl)$ be a hypothesis test.
Recall that the population's distribution has type $\Dists\dompop$, and that
$\unlikelys{\side}$ is the likeliness relation (Section~\ref{sub:notation-hypothesis-test}).
In a world $w$, we interpret predicate symbols by:
\begin{align*}
V_w(=) & =
\bigl\{ (o, o) \in \calo \times \calo \bigr\}
\\[-0.3ex]
V_w(\leftsquigarrow) & =
\bigl\{ (d, D, n) \in (\typelist{\dompop}) {\times} (\Dists\dompop) {\times} \nats \,\big|
\substack{\scriptsize\,\mbox{ There is an $i\in\nats$ s.t.} \\
\scriptsize w[i] \xrightarrow{\!d {\sim} D^n\hspace{-1ex}\!} w[i+1]} 
\,\bigr\}
\\[-0.3ex]
V_w(\leftarrowtail) & =
\bigl\{ (d, D)  ~\big|\, 
(d, D, n) \in V_w(\leftsquigarrow)
\bigr\}
\\[-0.3ex]
V_w(\NegBowEya) & =
\bigl\{ \err \in [0, 1] ~\big|\,
\mbox{\small $\displaystyle\Pr_{\tstat \sim \Dtp}[\, \tstat \unlikelys{\side} \Test(\sigmaw(y)) \,] \bowtie \err$~}
\bigr\}
{.}
\end{align*}
Intuitively, the set $V_w(\NegA{y,\alg})$ consists of only the $p$-value $\err$ with which the hypothesis test $\alg$ on the dataset $\sigmaw(y)$ rejects the null hypothesis $\phi$.
Then $\M, w \models \NegA{y,\alg}(\err)$ represents that in a possible world $w$, the observation of a dataset $y$ is unlikely to occur (except with probability $\err$) according to the hypothesis test $\alg$
where the test statistic follows the distribution $\Dtp$ in the world $w$.

Formally, we have:
\begin{align} \label{eq:NegA}
\M, w \models \NegA{y,\alg}(\err)
& ~\mbox{ iff }~
\Pr_{\tstat \sim \Dtp}\![\, \tstat \unlikelys{\side} \Test(\sigmaw(y)) \,] = \err
{,}
\end{align}
where 
the \emph{$p$-value}
$\Pr_{\tstat \sim \Dtp}[\, \tstat \unlikelys{\side} \Test(\sigmaw(y)) \,]$ 
is the probability that a value $\tstat$ is at most as likely as the test statistic $\Test(\sigmaw(y))$ when it is sampled from 
the distribution $\Dtp$ in the possible world $w$;
e.g., when $\Dtp$ is the standard normal distribution $\Normal(0,1)$, then
$\Pr_{\tstat \sim \Normal(0,1)}[\, \tstat \unlikelys{\sideT} 1.96 \,] =
\Pr_{\tstat \sim \Normal(0,1)}[\, |\tstat| \ge 1.96 \,] \approx 0.05$.
We remark that the $p$-value is \emph{not} a probability in the real world, but a probability in the possible world $w$ where the null hypothesis $\phi$ is true.

Analogously, the interpretation of $\NegBowEya$ is defined in terms of a range of $p$-values.
For instance, $\NegE{y,\alg}(\err)$ represents that the $p$-value of a test $\alg$ on  a dataset $y$ is less than $\err$.

The interpretation of the formula $\kappaS$ is given by:
\begin{align}
\label{eq:kappaS:sem}
\M, w \models \kappaS 
& ~\mbox{ iff }~
\testHist_{w} = \hspace{-1ex}
\biguplus_{(y, \alg) \in S}\hspace{-1ex} \{ \mem_{w}(y) \mapsto \{ \alg \} \}
{,}
\end{align}
where $\testHist_{w}$ is the test history that maps a dataset $o$ to the multiset of all hypothesis tests applied to the dataset $o$ in the world $w$ (\Sec{sub:possible-worlds}).

\subsection{Interpretation of Statistical Belief Modality}
\label{sub:interpret-statistical-modality}

The interpretation of the statistical belief modality $\KnowLeEt$ is given as follows.
\begin{align*}
\M, w \models \KnowLeEt \phi
& \,\mbox{ iff }\,
\M, w \models 
\Know\bigl( \phi  \lor  \TestE{y,\alg}(\err) \lor \neg \cmpds{y,\alg} \bigr)
\\[-0.3ex]
& \,\mbox{ iff }\,
\mbox{for all $w'$\!,\, $(w, w') \,{\in}\, \relO$ implies 
$\M, w' \models (\neg\phi \land \cmpds{y,\alg}) \,{\implies}\, \TestE{y,\alg}(\err)$}
{.}
\end{align*}

Intuitively, $\KnowLeEt \phi$ expresses a belief that an alternative hypothesis $\phi$ on the population is true.
For a two-tailed test $\alg$,
$w' \models (\neg\phi \land \cmpds{y,\alg}) \implies \TestE{y,\alg}(\err)$ 
means that if we consider a possible world $w'$ where the null hypothesis $\neg\phi$ is true and 
the dataset $y$ is drawn from a population satisfying the test $\alg$'s requirement $\cmpds{y,\alg}$,
then the execution of
$\alg$ 
would conclude that the observation of the dataset $y$ is unlikely to occur (with exceptions at most $\err$), 
i.e., 
$w' \models \TestE{y,\alg}(\err)$.
See \Sects{sub:assertion:remark}, \ref{sub:test:meaningful}, and \ref{sec:properties} for discussion.

Although the modality $\Know{}$ expresses the knowledge in terms of S5, 
$\KnowLeEt\phi$ represents a \emph{belief} instead of a knowledge. 
This is because $\phi$ can be \emph{false} 
when $\TestE{y,\alg}(\err) \lor \neg \cmpds{y,\alg}$ holds; 
i.e., we may have a false belief in $\phi$ (i) when the sampled dataset $y$ is unluckily far from the population or (ii) when the dataset $y$ did not come from the population that satisfies the test $\alg$'s requirement.

\begin{example}[Statistical belief in $Z$-tests]\label{eg:Z-test:belief}
Recall again the two-tailed $Z$-test for two population means 
in \Eg{eg:hypothesis-test}.
The alternative hypothesis is 
$\phi \eqdef (\mu_{\ppl1} \neq \mu_{\ppl2})$, and 
the null hypothesis $\nuh \phi$ is given by $\mu_{\ppl1} = \mu_{\ppl2}$.
As in \Eg{eg:Z-test:likeliness}, we denote this $Z$-test by 
$\alg = (\nuh \phi, \allowbreak \Test, \Normal(0, 1), \allowbreak \unlikelys{\sideT}\!, \Normal(\mu_{\ppl1},\sigma^2)\times \Normal(\mu_{\ppl2},\sigma^2))$.

Suppose that in a world $w$, we sample two datasets $\sigmaw(y_1)$ and $\sigmaw(y_2)$ respectively from two populations $\Normal(\mu_{\ppl1},\sigma^2)$ and $\Normal(\mu_{\ppl2},\sigma^2)$.
If the null hypothesis $\nuh \phi$ is true,  the $Z$-test statistic $\Test(\sigmaw(y_1), \sigmaw(y_2))$ follows the distribution $\Normal(0,1)$.

If $\Test(\sigmaw(y_1), \sigmaw(y_2)) = 3$, we have:
\[
\Pr_{\tstat \sim \Normal(0,1)}[ \tstat \unlikelys{\sideT} \Test(\sigmaw(y_1), \sigmaw(y_2)) ] \allowbreak 
< 0.05.
\]
Then the null hypothesis $\nuh \phi$ is rejected, and
we obtain the statistical belief that the alternative hypothesis $\phi$ is true with the significance level $0.05$, i.e., $w \models \KnowXx{<0.05}{y,\alg} \phi$.

In contrast, if $\Test(\sigmaw(y_1), \sigmaw(y_2)) = 1.8$, then
$w \models \neg \KnowXx{<0.05}{y,\alg} \phi$, because we have:
\[
\Pr_{\tstat \sim \Normal(0,1)}[\, \tstat \unlikelys{\sideT} \Test(\sigmaw(y_1), \sigmaw(y_2)) \,] > 0.05.
\]
\end{example}

\subsection{Remark on the Universe of the Kripke Model}
\label{sub:assertion:remark}

We remark that
the universe $\calw$ of the model $\M$ is assumed to include all possible worlds we can imagine.
If there is no possible world satisfying a null hypothesis $\nuh\phi$ in $\M$, then the alternative hypothesis $\phi$ is satisfied in all worlds in $\M$, hence so are $\Know \phi$ and $\KnowLeEt \phi$.
This implies that if we cannot imagine a possible world where $\nuh\phi$ is true, then we already know that $\phi$ is true without conducting the hypothesis test $\alg$.

\subsection{When Hypothesis Tests are Meaningful}
\label{sub:test:meaningful}

The formula $\KnowLeEt \phi$ expresses a belief after conducting a hypothesis test $\alg$ on a dataset $y$, and covers the following two cases where the execution of $\alg$ is not useful: 
\begin{enumerate}
\item[(i)] we \emph{knew} that the alternative hypothesis $\phi$ is true without conducting the test $\alg$;
\item[(ii)] we know that the requirement $\cmpds{(y,\alg)}$ for the test $\alg$ on $y$ is \emph{not} satisfied.
\end{enumerate}
Hence, deriving only the formula $\KnowLeEt \phi$ is not sufficient to conclude the correctness of the alternative hypothesis $\phi$ from the execution of the hypothesis test $\alg$.

Formally, $\KnowLeEt \phi$ is satisfied also when we have the prior knowledge $\Know(\phi \lor \neg\cmpds{(y,\alg)})$ that
(i) $\phi$ is satisfied or (ii) the test $\alg$'s requirement is not satisfied.
Thus, the execution of $\alg$ is meaningful only when we do \emph{not} have the prior knowledge $\Know(\phi \lor \neg\cmpds{(y,\alg)})$,
i.e., only when we have the \emph{prior belief} $\Possible(\neg\phi \land \cmpds{(y,\alg)})$.

For the outcome of the test $\alg$ to be meaningful, the requirement $\cmpds{(y,\alg)}$ must hold in the real world.
In practice, however, we usually have a limited knowledge of the population 
(\Sec{sec:discuss}), and may not know whether the population satisfies the requirement $\cmpds{(y,\alg)}$.
For this reason, in \Sec{sec:belief-hoare-logic} and \ref{sec:reasoning-with-BHL}, we aim to derive a statistical belief $\KnowLeEt \phi$ 
under the assumption that $\cmpds{(y,\alg)}$ holds as a precondition instead of $\Know\cmpds{(y,\alg)}$.

\section{Prior Beliefs and Posterior Statistical Beliefs in \ELHT}
\label{sec:properties}
In this section, we clarify the importance of prior beliefs in the acquisition of statistical beliefs 
by describing them using the assertion language \ELHT{}.
We then present the essential properties of statistical beliefs; e.g., $\KnowBowEt$ expresses a belief weaker than the S5 knowledge modality $\Know$.
We also show how a statistical belief is derived from a test history.

Prior belief/knowledge of hypotheses is essential in choosing which hypothesis testing method is appropriate for a given situation.
For example, to apply a two-tailed $Z$-test, analysts must have the prior belief that both tails ($\mu_1 > \mu_2$ and $\mu_1 < \mu_2$) are possible.
In contrast, to apply an upper-tailed test, they must have the prior knowledge that the lower tail ($\mu_1 < \mu_2$) is impossible.
Using the assertion language \ELHT{}, we explain that such prior beliefs are important for the application of a hypothesis test to be meaningful.

\subsection{Hypothesis Formulas}
\label{sub:properties:hypo}
To formalize the prior beliefs for hypothesis testing,
we introduce notations for the alternative and null hypotheses in the assertion language \ELHT{}.

We use two formulas $\phiU$ and $\phiL$ to represent the alternative hypotheses in an upper-tailed test and a lower-tailed test,  respectively.
Then $\phiU$ and $\phiL$ cannot be true simultaneously;
i.e., $\models \neg \phiU \lor \neg \phiL$.
The alternative hypothesis of the two-tailed test is:
\begin{align}
\label{eq:phiT}
\phiT & \eqdef\, \phiU \lor \phiL
{.}
\end{align}
The syntax sugar $\nuh \phiT$, $\nuh \phiU$, and $\nuh \phiL$ for the null hypotheses can be defined by:
\begin{align}
\label{eq:nuh}
\nuh \phi_{\side} & \eqdef\, \neg \phiU \land \neg \phiL
~~~\mbox{ for }\, \side \in \{ \sideT, \sideU, \sideL \}
{.}
\end{align}

\begin{example}[Hypothesis formulas in $Z$-tests] \label{eg:alt:Z-tests}
For $\mu_{\ppl1}, \mu_{\ppl2} \in \reals$,
the two-tailed, upper-tailed, and lower-tailed $Z$-test (\Eg{eg:hypothesis-test}) have the alternative hypotheses:
\[
\phiT \eqdef (\mu_{\ppl1} \neq \mu_{\ppl2}),
~\mbox{  }~
\phiU \eqdef (\mu_{\ppl1} > \mu_{\ppl2}),
~\mbox{ and }~
\phiL \eqdef (\mu_{\ppl1} < \mu_{\ppl2}).
\]
This is because the upper-tailed (resp. lower-tailed) test is based on the assumption $\mu_{\ppl1} \ge \mu_{\ppl2}$ (resp. $\mu_{\ppl1} \le \mu_{\ppl2}$).
We can see that 
$\models \phiT \leftrightarrow (\phiU \lor \phiL)$ and 
$\models \neg \phiU \lor \neg \phiL$.
The null hypotheses of these tests are
$\nuh \phi_{\side} \eqdef (\mu_1 = \mu_2)$
for
$\side \in \{ \sideT, \sideU, \sideL \}$.
See \Tbl{tab:hypo:form} for the summary of the hypothesis formulas in $Z$-tests.
\end{example}

\begin{table}[t]
   \centering
   \caption{Hypothesis formulas in the $Z$-tests (\Eg{eg:hypothesis-test}).}
   \label{tab:hypo:form}
   \begin{small}
   \begin{tabular}{@{} lll @{}} 
      \toprule
      Tails & alternative hypotheses & null hypotheses \\
      \midrule
      Two    & $\phiT\eqdef(\mu_{\ppl1} \neq \mu_{\ppl2})$ & $\nuh \phiT\eqdef(\mu_{\ppl1} = \mu_{\ppl2})$ \\
      Upper & $\phiU\eqdef(\mu_{\ppl1} > \mu_{\ppl2})$ & $\nuh\phiU\eqdef(\mu_{\ppl1} = \mu_{\ppl2})$ \\
      Lower & $\phiL\eqdef(\mu_{\ppl1} < \mu_{\ppl2})$ & $\nuh\phiL\eqdef(\mu_{\ppl1} = \mu_{\ppl2})$ \\
      \bottomrule
   \end{tabular}
   \end{small}
\end{table}

In the two-tailed test, the null hypothesis $\nuh \phiT$ is logically equivalent to $\neg \phiT$, i.e., $\mu_1 = \mu_2$.
In contrast, in the upper-tailed test, $\nuh \phiU$ (i.e., $\mu_1 = \mu_2$) implies $\neg \phiU$ (i.e., $\mu_1 \le \mu_2$) \emph{but not vice versa}.
This is also the case with the lower-tailed test.

\subsection{Prior Beliefs in Hypothesis Tests}
\label{sub:prior:beliefs}
We formally describe the prior knowledge of hypothesis tests using epistemic formulas.
We show an example in \Tbl{tab:prior:know}.

\begin{table}[ht]
   \centering
   \caption{Prior belief/knowledge in the $Z$-tests (\Eg{eg:hypothesis-test}) where $\phiU$ and $\phiL$ are respectively the alternative hypotheses of the upper-tailed and lower-tailed $Z$-tests in \Tbl{tab:hypo:form}.}
   \label{tab:prior:know}
   \begin{small}
   \begin{tabular}{@{} lll @{}} 
      \toprule
      Tails & \multicolumn{2}{l}{prior belief/knowledge} \\
      & general forms & $Z$-tests \\
      \midrule
      Two    & \!$\Possible \phiU \land \Possible \phiL$
      & $\Possible (\mu_{\ppl1} {>} \mu_{\ppl2}) \land \Possible(\mu_{\ppl1} {<} \mu_{\ppl2})$ \\[0.8ex]
      Upper & \!$\Possible \phiU \land \neg\Possible \phiL$
      & $\Possible (\mu_{\ppl1} {>} \mu_{\ppl2}) \land \Know (\mu_{\ppl1} {\ge} \mu_{\ppl2})$ \\[0.8ex]
      Lower & \!$\Possible \phiL \land \neg\Possible \phiU$
      & $\Possible (\mu_{\ppl1} {<} \mu_{\ppl2}) \land \Know (\mu_{\ppl1} {\le} \mu_{\ppl2})$ \\[0.3ex]
      \bottomrule
   \end{tabular}
   \end{small}
\end{table}

\subsubsection{Prior Beliefs in Two-Tailed $Z$-Tests}
For an application of the two-tailed $Z$-test to be meaningful, we are supposed to have the prior belief that 
$\mu_1 > \mu_2$ is possible (denoted by $\Possible \phiU$), and that $\mu_1 < \mu_2$ is possible (denoted by $\Possible \phiL$).
\ELHT{} naturally explains that these prior beliefs are essential to interpret the results of hypothesis tests as follows.
Assume that, in a world $w$, we had neither of these prior beliefs, but obtained a statistical belief $\KnowAt \phiT$ by conducting a two-tailed hypothesis test $\alg$;
i.e., $w \models \neg \Possible \phiU \land \neg \Possible \phiL \land \KnowAt \phiT$.
Since $\Possible$ is the dual operator of $\Know$, we have $w \models \Know \neg \phiU \land \Know \neg \phiL \land \KnowAt \phiT$.
By $\phiT \eqdef\, \phiU \lor \phiL$,
we have:
\[
w \models \Know \neg \phiT \land \KnowAt \phiT,
\]
that is, we already know that the alternative hypothesis $\phiT$ is false regardless of the result of the test $\alg$ (that aims to show that $\phiT$ is true).
Clearly, the execution of the test $\alg$ is meaningless when we know that $\phiT$ is false.
For this reason, the prior beliefs $\Possible \phiU$ and $\Possible \phiL$ are essential for the application of the two-tailed test to be meaningful.

We remark that even if we do not have these prior beliefs, the definition of the formula $\KnowAt \phiT$ is still consistent with the principle of the hypothesis testing (although the test is useless, as mentioned above).
Recall that the statistical belief is defined by
$\KnowAt \phiT \eqdef \Know( \phiT \lor \TestA{y,\alg}(\alpha) \lor \neg \cmpds{y,\alg} )$.
Then $w \models \Know \neg \phiT \land \KnowAt \phiT$ implies 
$w \models \Know( \TestA{y,\alg}(\alpha) \lor \neg \cmpds{y,\alg} )$;
i.e., we learn that either (i) the sampled dataset $y$ is unluckily far from the population, or (ii) $y$ was sampled from a population that does not satisfy the requirement $\cmpds{y,\alg}$ for the hypothesis test~$\alg$ on $y$.

\subsubsection{Prior Beliefs in One-Tailed $Z$-Tests}
When we apply the \emph{upper-tailed} $Z$-test, we are supposed to have the prior belief that 
$\mu_1 > \mu_2$ is possible (denoted by $\Possible \phiU$), and the prior knowledge that $\mu_1 < \mu_2$ is impossible (denoted by $\neg \Possible \phiL$ or by $\Know\neg\phiL$).

This prior knowledge $\Know\neg\phiL$ is used to select an upper-tailed test rather than a two-tailed.
In \Propo{prop:PriorBelief}, we show that $\Know\neg\phiL$ is logically equivalent to $\Know (\phiU \lor \nuh \phiU)$;
i.e., under the knowledge $\Know\neg\phiL$, either the alternative hypothesis $\phiU$ (i.e., $\mu_1 > \mu_2$) or the null hypothesis $\nuh \phiU$ (i.e., $\mu_1 = \mu_2$) holds.
Hence, the prior knowledge $\Know\neg\phiL$ allows for applying the upper-tailed test.
Without this prior knowledge, we cannot apply the upper-tailed test, because we do not see whether one of the alternative hypothesis $\phiU$ and the null hypothesis $\nuh \phiU$ holds.

In conclusion, we can use our assertion logic to explain that the prior knowledge $\Know\neg\phiL$ is crucial to apply the upper-tailed test.
Symmetrically, the lower-tailed test requires the prior knowledge $\Know\neg\phiU$, as indicated in \Tbl{tab:prior:know}.

\subsubsection{Posterior Beliefs in $Z$-Tests}
We remark that the prior beliefs in alternative hypotheses will not change even after conducting hypothesis tests.
For example, in the case of the two-tailed $Z$-test, the prior belief $\Possible \phiL \land \Possible \phiU$ remains to hold after conducting the test and obtaining a $p$-value $\alpha > 0$.
This is because the statistical belief is defined as disjunctive knowledge $\Know( \phiT \lor \TestA{y,\alg}(\alpha) \lor \neg \cmpds{y,\alg} )$, and thus cannot conclude any knowledge of the alternative hypothesis (e.g., $\Know \phiT$ or $\Know \neg\phiT$).

\subsubsection{Properties of Prior Beliefs in Hypotheses}
Now we show basic properties of prior beliefs in hypotheses as follows.

\begin{restatable}[Basic properties of prior beliefs]{prop}{PropPriorBelief}
\label{prop:PriorBelief}
Recall that $\phiT \eqdef \phiU \lor \phiL$ and
$\nuh \phi_{\side} \eqdef \neg \phiU \land \neg \phiL$ 
for each $\side \in \{ \sideT, \sideU, \sideL \}$.
\begin{enumerate}
\item 
In a two-tailed hypothesis test, either the null hypothesis $\phiT$ or the alternative hypothesis $\nuh \phiT$ is always satisfied; i.e.,
$\models \phiT \lor \nuh \phiT$.
\item 
We know that the lower-tail $\phiL$ is impossible iff 
we know that either the null hypothesis $\phiU$ or the alternative hypothesis $\nuh \phiU$ for the upper-tail test is satisfied:
\begin{align*} 
\models \Know \neg \phiL \leftrightarrow \Know (\phiU \lor \nuh \phiU)
{.}
\end{align*}
\item 
We know that the upper-tail $\phiU$ is impossible iff 
we know that either the null hypothesis $\phiL$ or the alternative hypothesis $\nuh \phiL$ for the lower-tail test is satisfied:
\begin{align*} 
\models \Know \neg \phiU \leftrightarrow \Know (\phiL \lor \nuh \phiL)
{.}
\end{align*}
\end{enumerate}
\end{restatable}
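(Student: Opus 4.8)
The plan is to reduce all three claims to pure propositional/modal reasoning, since the hypothesis formulas $\phiU$, $\phiL$, $\phiT$, and $\nuh\phi_{\side}$ are all defined as syntax sugar in terms of $\phiU$ and $\phiL$ via \eqref{eq:phiT} and \eqref{eq:nuh}. No statistical content is involved: the only facts we use are the definitions $\phiT \eqdef \phiU \lor \phiL$ and $\nuh\phi_{\side} \eqdef \neg\phiU \land \neg\phiL$, together with the standard S5 axioms for $\Know$ (in particular the $\mathbf{K}$ axiom, necessitation, and the fact that $\Know$ distributes over conjunction and over provable equivalences).

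For part (1), I would simply unfold the definitions: $\phiT \lor \nuh\phiT = (\phiU \lor \phiL) \lor (\neg\phiU \land \neg\phiL)$, which is a propositional tautology (it is the law of excluded middle applied to $\phiU \lor \phiL$). So $\models \phiT \lor \nuh\phiT$ follows directly.

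For part (2), first observe at the propositional level that $\phiU \lor \nuh\phiU = \phiU \lor (\neg\phiU \land \neg\phiL)$, and this formula is propositionally equivalent to $\phiU \lor \neg\phiL$, hence to $\neg\phiL$ (since $\neg\phiL \equiv \phiU \lor \neg\phiL$ is \emph{not} a tautology in general — wait, rather $\phiU \lor \neg\phiL$ simplifies using $\phiU \to \neg\phiL$, which holds because $\models \neg\phiU \lor \neg\phiL$). More carefully: from $\models \neg\phiU \lor \neg\phiL$ we get $\models \phiU \to \neg\phiL$, so $\models (\phiU \lor \neg\phiL) \leftrightarrow \neg\phiL$, and also $\models (\phiU \lor (\neg\phiU\land\neg\phiL)) \leftrightarrow (\phiU \lor \neg\phiL)$ by distributivity. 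Chaining these, $\models (\phiU \lor \nuh\phiU) \leftrightarrow \neg\phiL$. Now apply the fact that $\Know$ respects provable equivalence (a consequence of necessitation and $\mathbf{K}$): from $\models \alpha \leftrightarrow \beta$ we obtain $\models \Know\alpha \leftrightarrow \Know\beta$. Taking $\alpha = \neg\phiL$ and $\beta = \phiU \lor \nuh\phiU$ yields $\models \Know\neg\phiL \leftrightarrow \Know(\phiU \lor \nuh\phiU)$. Part (3) is identical with the roles of $\phiU$ and $\phiL$ interchanged, using the symmetry of $\models \neg\phiU \lor \neg\phiL$.

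There is essentially no obstacle here; the only point requiring a little care is the propositional simplification in parts (2) and (3), where one must use the constraint $\models \neg\phiU \lor \neg\phiL$ (the incompatibility of the two one-sided alternative hypotheses, noted just before \eqref{eq:phiT}) rather than treating $\phiU$ and $\phiL$ as wholly independent atoms. I would state that step explicitly. Everything else is mechanical unfolding of the syntax sugar and an appeal to the standard derived rule that $\Know$ is a \emph{normal} modality and hence congruent with respect to propositional equivalence.
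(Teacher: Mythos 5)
Your proposal is correct and follows essentially the same route as the paper's own proof: part (1) by unfolding the definitions, and parts (2)--(3) by the propositional chain $\phiU \lor \nuh\phiU \leftrightarrow \phiU \lor \neg\phiL \leftrightarrow \neg\phiL$ using $\models \phiU \rightarrow \neg\phiL$, lifted through $\Know$. The only difference is presentational: you make explicit the appeal to $\Know$ being a normal modality (congruent with valid equivalences), which the paper leaves implicit.
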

The proof is shown in~\App{sub:proof:assertions}.

\subsection{Type II Error}
Symmetrically to the $p$-value (type I error rate) $\alpha$,
the \emph{type II error rate} $\beta$ is 
the probability that a hypothesis test $\alg$ does \emph{not} reject the null hypothesis $\nuh\phi$ when the alternative hypothesis $\phi$ is true.
For instance, in the two-tailed $Z$-test (\Eg{eg:hypothesis-test}), 
$\beta$ is the probability that the $Z$-test fails to reject the null hypothesis $\mu_1 = \mu_2$ when the alternative hypothesis $\mu_1 \neq \mu_2$ is true.
We remark that $\beta$ is determined by the \emph{effect size} $\nicefrac{|\mu_1 - \mu_2|}{\sigma}$;
For a smaller distance 
$|\mu_1 - \mu_2|$, 
it is more difficult for the $Z$-test to distinguish the null and alternative hypotheses, hence the type II error rate $\beta$ is larger.

Formally, let $y'$ be a dataset such that the $p$-value $\alpha$ of a test $\alg$ is $0.05$ in a world $w$; i.e., 
$w \models \KnowXx{0.05}{y'\!,\alg} \phi$.
To calculate the type II error rate $\beta$, we consider an effect size $\es > 0$.
Suppose that a hypothesis $\xiPPL \eqdef (\es \,{=}\, \nicefrac{|\mu_1 - \mu_2|}{\sigma})$ is satisfied, i.e., $w \models \xiPPL$.
The belief about the type II error 
is expressed by
$w \models \KnowXx{\beta}{y'\!,\alg} \neg\xiPPL$;
i.e., in the world $w$, we believe that $\xiPPL$ is false with a degree $\beta$ of belief, although $\xiPPL$ is actually true in $w$.

\subsection{Properties of Statistical Beliefs}
\label{sub:assertion:properties}

Next, we present properties of the statistical belief modality $\KnowBowEt$.
\Propo{prop:StatBelief} explains basic properties of statistical belief; e.g., $\KnowBowEt$ expresses a belief weaker than the S5 knowledge $\Know$.
\Propo{prop:StatBeliefHT} shows how a $p$-value is derived from a test history.

To see these, we remark that the dual operator $\PossibleLeEt$ represents the statistical possibility;
$\PossibleLeEt \phi$ means that we think a null hypothesis $\phi$ may be true after a hypothesis test $\alg$ did not reject $\phi$ with a significance level $\err$.
Formally:
\begin{align*}
w \models \PossibleLeEt \phi
& \,\mbox{ iff }\,
\mbox{there is a $w'$ s.t. $(w, w') \in \relO$ and }
w' \not\models \neg\phi \lor \TestE{y,\alg}(\err) \lor \neg \cmpds{y,\alg}
\\[-0.5ex] & \,\mbox{ iff }\,
w \models \Possible (\phi \land \neg\TestE{y,\alg}(\err) \land \cmpds{y,\alg})
{.}
\end{align*}

We obtain the following basic properties of statistical beliefs.

\begin{restatable}[Basic properties of statistical beliefs]{prop}{PropStatBelief}
\label{prop:StatBelief}
Let $y\in\varO$, $\varepsilon, \varepsilon' \in\realsnng$, and $\bowtie\, \in \{ =, \le, \ge, <, > \}$.
Let $f_{\alg}$ be a program for a hypothesis test $\alg$ with an alternative hypothesis~$\phi$.
\begin{enumerate}
\item {\rm (\propNu)} \label{item:HT:neg}
The output of $f_{\alg}$ is the $p$-value of the hypothesis test $\alg$ on the dataset $y$; i.e., $\models \NegA{y,\alg}(f_{\alg}(y))$.

\item {\rm (\propSBfour)} If we believe $\phi$ based on a test $\alg$, then we know this statistical belief; i.e.,
$\models \KnowBowEt\phi \rightarrow \Know\KnowBowEt\phi$.
\item {\rm (\propSBfive)} If we failed to reject $\phi$ and think it possible, then we know this possibility; i.e.,
$\models \PossibleBowEt\phi \rightarrow \Know\PossibleBowEt\phi$.
\item {\rm (\propSBk)} Knowledge is also regarded as belief: $\models \Know\phi \rightarrow \KnowBowEt\phi$.
\item {\rm (\propSBdeg)}
If $\err \le \err'$,
$\models \KnowE\phi \rightarrow \KnowXx{\err'}{y,\alg}\phi$
and $\models \PossibleXx{\err'}{y,\alg}\phi \rightarrow \PossibleE\phi$.
\item {\rm (\propSBf)}
$\KnowE\phi$ may be a false belief.
The alternative hypothesis $\phi$ we believe may be false, i.e., the rejected null hypothesis may be true:
$\err > 0$ iff $\not\models \KnowE\phi \rightarrow \phi$.
\end{enumerate}
\end{restatable}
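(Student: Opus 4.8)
The plan is to unfold the definitions of the statistical belief modality and its dual, reduce each item to a property of the underlying S5 knowledge operator $\Know$ together with an elementary fact about the interpretation of the predicate $\NegA{y,\alg}$, and then discharge the remaining obligations by routine model-theoretic reasoning over the Kripke model $\M$. Recall that, by definition, $\KnowBowEt\phi \eqdef \Know(\phi \lor \TestBowE{y,\alg}(\err) \lor \neg\cmpds{y,\alg})$, where $\TestBowE{y,\alg}(\err) \eqdef \NegBowEya(\err) \land \kappaX{\ListTests{(y,\alg)}}$, and the interpretation of $\NegBowEya(\err)$ is governed by \eqref{eq:NegA} together with its $\bowtie$-variants. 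I would prove the items roughly in the stated order, since the later ones reuse the earlier.

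First, for \propNu{} the claim $\models \NegA{y,\alg}(f_{\alg}(y))$ follows directly by comparing the operational semantics of the program $f_{\alg}$ in \eqref{eq:sem:HT} with the interpretation of the predicate $\NegA{y,\alg}$ in \eqref{eq:NegA}: both are the $p$-value $\Pr_{\tstat\sim\Dtp}[\tstat \unlikelys{\side} \Test(\sigmaw(y))]$, so the assertion holds in every world. For \propSBfour{} and \propSBfive{}, I would use the fact that $\Know$ satisfies the S5 axioms — in particular positive introspection $\Know\psi \rightarrow \Know\Know\psi$ — applied to $\psi = \phi \lor \TestBowE{y,\alg}(\err)\lor\neg\cmpds{y,\alg}$; since $\KnowBowEt\phi$ is literally $\Know\psi$, we get $\KnowBowEt\phi \rightarrow \Know\Know\psi = \Know\KnowBowEt\phi$, and dually for $\PossibleBowEt$ using the S5 fact $\Possible\psi \rightarrow \Know\Possible\psi$ on the appropriately negated disjunction. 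For \propSBk{}, monotonicity of $\Know$ (the normal-modal-logic rule $\psi \rightarrow \chi \implies \Know\psi \rightarrow \Know\chi$) applied to the trivial entailment $\phi \rightarrow (\phi \lor \cdots)$ gives $\Know\phi \rightarrow \KnowBowEt\phi$. For \propSBdeg{}, the key observation is that if $\err \le \err'$ then $\NegA{y,\alg}(r)$ with $r = \err$ implies $\NegE{y,\alg}(\err') \lor \NegEqE{}\ldots$ — more precisely, from \eqref{eq:NegA} a world where the $p$-value equals $\err$ also satisfies $\NegLeE{y,\alg}(\err')$ since $\err \le \err'$; hence $\TestLeE{y,\alg}(\err)$ entails $\TestLeE{y,\alg}(\err')$, and monotonicity of $\Know$ yields the first implication, with the second following by duality.

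The item I expect to require the most care is \propSBf{}: $\err > 0$ iff $\not\models \KnowE\phi \rightarrow \phi$. For the forward direction I would exhibit, using the "large universe" assumption on $\calw$ from \Sec{sub:assertion:remark}, a world $w$ in which $\neg\phi$ holds, $\cmpds{y,\alg}$ holds, yet the observed dataset $\sigmaw(y)$ is an unlucky sample so that the $p$-value is exactly $\err$ — this is possible precisely because $\err > 0$ means the rejection region under the null carries positive probability, so such an outlier world exists in $\M$ together with a $\relO$-equivalent world witnessing $\KnowE\phi$ (every $\relO$-accessible world either fails $\neg\phi\land\cmpds{y,\alg}$ or lands in the rejection region). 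For the converse, if $\err = 0$ then $\TestXx{y,\alg}{=}(0)$ forces the $p$-value to be $0$, i.e. the null-rejection event has probability zero; unfolding $\KnowE\phi = \Know(\phi \lor \TestEqE{y,\alg}(0) \lor \neg\cmpds{y,\alg})$ at $w$ itself (since $\relO$ is reflexive) and arguing that a world satisfying $\neg\phi \land \cmpds{y,\alg}$ cannot simultaneously satisfy the $p$-value-zero condition unless the test statistic is degenerate, one concludes $w \models \phi$, so the implication is valid. The main obstacle is making the existence-of-an-unlucky-world argument precise: it relies on the tacit modelling assumption that whenever a null hypothesis is satisfiable and its rejection region has positive measure, the universe $\calw$ contains a world realizing a sample in that region with the same observation as some already-present world — I would state this explicitly as the content of \Sec{sub:assertion:remark} and invoke it, rather than re-deriving it.
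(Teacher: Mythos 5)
Your treatment of items 1--5 coincides with the paper's own proof: (\propNu{}) by matching the program semantics \eqref{eq:sem:HT} of $f_{\alg}$ against the interpretation \eqref{eq:NegA} of $\NegA{y,\alg}$; (\propSBfour{}) and (\propSBfive{}) by S5 introspection applied to the disjunction $\phi\lor\TestBowE{y,\alg}(\err)\lor\neg\cmpds{y,\alg}$ sitting inside the defined modality; (\propSBk{}) by monotonicity of $\Know$ over the trivial entailment into that disjunction; and (\propSBdeg{}) by first establishing $\TestLeE{y,\alg}(\err)\rightarrow\TestLeE{y,\alg}(\err')$ and then applying monotonicity and duality. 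In fact you are slightly more careful than the paper on (\propSBdeg{}), since you make explicit that the monotonicity in $\err$ only makes sense for the $\le$ reading of the superscript, which the paper leaves implicit.

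The divergence, and the one genuine gap, is (\propSBf{}). The paper declares it ``immediate from the definition of $\KnowE$,'' whereas you attempt a real two-directional argument. Your forward direction (an ``unlucky'' world where $\neg\phi$ and $\cmpds{y,\alg}$ hold and the sample lands in a positive-probability rejection region, so that $\KnowE\phi\land\neg\phi$ is satisfiable) is the right construction and goes beyond anything written in the paper; you are also right that it needs an explicitly stated richness assumption on $\calw$. But your converse direction does not go through as written: $\KnowXx{0}{y,\alg}\phi$ unfolds to $\Know\bigl(\phi\lor\TestEqE{y,\alg}(0)\lor\neg\cmpds{y,\alg}\bigr)$, and you only discharge the middle disjunct (``$p$-value zero forces degeneracy''). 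The third disjunct is an escape hatch you never close: in a model where every $\relO$-accessible world violates the test's requirement $\cmpds{y,\alg}$, the knowledge holds vacuously while $\phi$ fails at the actual world, so $\err=0$ alone does not force $\models\KnowXx{0}{y,\alg}\phi\rightarrow\phi$. To repair this you would need the same kind of assumption on the universe that you invoke for the forward direction --- namely that some accessible world satisfies $\cmpds{y,\alg}\land\neg\phi$ --- stated and used on both sides; without it the ``only if'' half of (\propSBf{}) is not established.
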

The proof is shown in~\App{sub:proof:assertions}.

We remark that for $\epsilon = 0$, (\propSBfour{}) and (\propSBfive{}) correspond to the axioms (4) and (5) of the modal logic S5, respectively, and thus the statistical belief modality $\KnowXx{0}{y,\alg}$ coincides with the knowledge modality $\Know$
when the test $\alg$'s requirement $\cmpds{y,\alg}$ is satisfied.

Next, we present the relationships between hypothesis tests and statistical beliefs.
Recall that a formula of the form $\kappaX{S}$ represents a test history.
The following proposition allows for deriving a $p$-value from a test history.

\begin{restatable}[Statistical beliefs by hypothesis tests]{prop}{PropStatBeliefHT}
\label{prop:StatBeliefHT}
Let $y_1,y_2\in\varO$.
Let $f_{\alg_1}$ and $f_{\alg_2}$ be programs for hypothesis tests $\alg_1$ and $\alg_2$ with alternative hypotheses $\phi_1$ and $\phi_2$, respectively.
Let $S = \{ (y_1,\alg_1), (y_2,\alg_2)\}$.
\begin{enumerate}
\item {\rm (\propBHk)}
For any $S' \subseteq \var\times\cala$, we have 
$\models \kappaX{S'} \leftrightarrow \Possible \kappaX{S'} \leftrightarrow \Know \kappaX{S'}$.
\item {\rm (\propBHT)}
Let $y\in\varO$ and $f_{\alg}$ be a program for a hypothesis tests $\alg$ with an alternative hypothesis $\phi$.
If we execute the test $\alg$ on the dataset $y$, then we obtain the statistical belief in $\phi$ with the $p$-value $f_{\alg}(y)$; i.e.,
$\models \kappaYA{y} \rightarrow \KnowXt{f_{\alg}(y)} \phi$.
\item {\rm (\propBHTor)}
Let $y \eqdef (y_1, y_2)$, $\alg$ be the disjunctive combination of $\alg_1$ and $\alg_2$, and
$\epsilon \eqdef f_{\alg_1}(y_1) + f_{\alg_2}(y_2)$.
If we execute $\alg_1$ on the dataset $y_1$ and $\alg_2$ on $y_2$ separately, then we obtain the statistical belief in $\phi_1 \lor \phi_2$ with the $p$-value at most 
$\epsilon$; 
i.e.,
$\models \kappaS \rightarrow \KnowXx{\le \epsilon}{y, \alg} (\phi_1 \lor \phi_2)$.
\item {\rm (\propBHTand)}
Let $y \eqdef (y_1, y_2)$, $\alg$ be the conjunctive combination of $\alg_1$ and $\alg_2$, and
$\epsilon' \eqdef \min(f_{\alg_1}(y_1),\allowbreak f_{\alg_2}(y_2))$.
If we execute $\alg_1$ on the dataset $y_1$ and $\alg_2$ on $y_2$ separately, then we obtain the statistical belief in $\phi_1 \land \phi_2$ with the $p$-value at most $\epsilon'$; i.e.,
$\models \kappaS \rightarrow \KnowXx{\le \epsilon'}{y, \alg} (\phi_1 \land \phi_2)$.
\end{enumerate}
\end{restatable}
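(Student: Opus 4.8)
\textbf{Plan; part (\propBHk{}).}
The plan is to prove all four claims at an arbitrary Kripke model $\M$ and world $w$, establishing (\propBHk{}) first and then using it to propagate the $\kappa$-antecedents of the remaining three parts along the observability relation $\relO$. For (\propBHk{}) I would observe that, by the interpretation \eqref{eq:kappaS:sem}, whether $w\models\kappaX{S'}$ holds depends only on the test history $\testHist_{w}$ and on the values $\sigmaw(y)$ of the dataset variables occurring in $S'$, all of which are recorded in $\obs(w)$; hence $(w,w')\in\relO$, i.e.\ $\obs(w)=\obs(w')$, makes $w\models\kappaX{S'}$ equivalent to $w'\models\kappaX{S'}$. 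Combined with reflexivity of $\relO$ (the modality $\Know$ being S5), this gives $\models\kappaX{S'}\leftrightarrow\Know\kappaX{S'}\leftrightarrow\Possible\kappaX{S'}$. The consequence I will reuse throughout: if $\kappaX{S}$ (or $\kappaYA{y}$) holds at $w$, it holds at every $\relO$-accessible world $w'$.

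\textbf{Step 2: the $p$-value estimate.}
Recall from \eqref{eq:KnowBowEphi} that $\KnowXx{\le\epsilon}{y,\alg}\psi$ abbreviates $\Know(\psi\lor\TestLeE{y,\alg}(\epsilon)\lor\neg\cmpds{y,\alg})$ with $\TestLeE{y,\alg}(\epsilon)=\NegLeE{y,\alg}(\epsilon)\land\kappaX{\ListTests{(y,\alg)}}$, and that $w'\models\NegLeE{y,\alg}(\epsilon)$ says the $p$-value of $\alg$ on $\sigmawp(y)$ is at most $\sem{\epsilon}_{\sigmawp}$. Fix $w'$ with $(w,w')\in\relO$. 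Since $y_1,y_2\in\varO$, we have $\sigmawp(y_b)=\sigmaw(y_b)$, so $f_{\alg_b}(y_b)$, $f_{\alg}(y)$, $\epsilon$ and $\epsilon'$ evaluate at $\sigmawp$ exactly as at $\sigmaw$, and by (\propNu{}) of \Propo{prop:StatBelief}, $\sem{f_{\alg_b}(y_b)}_{\sigmawp}$ is the $p$-value of $\alg_b$ on $\sigmawp(y_b)$. When $\alg$ is the disjunctive combination of $\alg_1,\alg_2$, \Def{def:combine} tells us that the test-statistic distribution of $\alg$ under its null hypothesis is a coupling of those of $\alg_1$ and $\alg_2$ and that $(\tstat_1,\tstat_2)\unlikelyst{\side_1,\side_2}{\Test}(\tstat'_1,\tstat'_2)$ holds iff $\tstat_1\unlikelyst{\side_1}{\Test_1}\tstat'_1$ \emph{or} $\tstat_2\unlikelyst{\side_2}{\Test_2}\tstat'_2$; hence the $p$-value of $\alg$ on $\sigmawp(y)$ is the probability of a union of two events whose marginal probabilities are the $p$-values of $\alg_1$ on $\sigmawp(y_1)$ and of $\alg_2$ on $\sigmawp(y_2)$, and a union bound caps it at their sum $\sem{\epsilon}_{\sigmawp}$, i.e.\ $w'\models\NegLeE{y,\alg}(\epsilon)$. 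For the conjunctive combination the ``or'' becomes ``and'' and $\Pr[A\cap B]\le\min(\Pr[A],\Pr[B])$ gives $w'\models\NegLeE{y,\alg}(\epsilon')$. For a single test $\alg$, (\propNu{}) says the $p$-value of $\alg$ on $\sigmawp(y)$ \emph{equals} $\sem{f_{\alg}(y)}_{\sigmawp}$, so $w'\models\NegA{y,\alg}(f_{\alg}(y))$. In each case $\ListTests{(y,\alg)}$ is $S$, $S$, or $\{(y,\alg)\}$ respectively (\Def{def:decompose}), which holds at $w'$ by Step~1 from the antecedent $\kappaX{S}$ (resp.\ $\kappaYA{y}$) at $w$; therefore $w'\models\TestLeE{y,\alg}(\epsilon)$ (resp.\ $\TestLeE{y,\alg}(\epsilon')$, resp.\ $\TestEqE{y,\alg}(f_{\alg}(y))$).

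\textbf{Step 3: assembling the three implications.}
Assuming the corresponding antecedent at $w$, Step~2 shows that every $\relO$-accessible $w'$ satisfies the relevant $\Test$-disjunct, hence the whole disjunction $\psi\lor\TestLeE{y,\alg}(\epsilon)\lor\neg\cmpds{y,\alg}$ (with $\psi=\phi_1\lor\phi_2$ for \propBHTor{}, $\psi=\phi_1\land\phi_2$ and $\epsilon'$ for \propBHTand{}, and $\psi=\phi$ with the bare-superscript reading $\KnowXt{f_{\alg}(y)}\phi=\Know(\phi\lor\TestEqE{y,\alg}(f_{\alg}(y))\lor\neg\cmpds{y,\alg})$ for \propBHT{}); by the semantics of $\Know$ this is exactly the desired conclusion at $w$. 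I would also remark that the ``hypothesis'' disjuncts and the requirement disjunct $\neg\cmpds{y,\alg}$ are never needed here --- the $p$-value disjunct alone carries everything --- which is precisely why the resulting statistical belief is informative only against an additional prior such as $\Possible(\neg\psi\land\cmpds{y,\alg})$ (cf.\ \Sec{sub:test:meaningful}).

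\textbf{Expected main obstacle.}
The one place requiring care is the probabilistic bookkeeping in Step~2: one must check that the $p$-value of the \emph{combined} test is computed against the coupled distribution of \Def{def:combine}, that its two marginals are exactly the single-test null distributions so that (\propNu{}) applies componentwise, and that $\unlikelyst{\side_1,\side_2}{\Test}$ is literally the union (resp.\ intersection) of the two componentwise likeliness events, so that the union bound (resp.\ $\Pr[A\cap B]\le\min$) applies and reproduces exactly $\epsilon=f_{\alg_1}(y_1)+f_{\alg_2}(y_2)$ and $\epsilon'=\min(f_{\alg_1}(y_1),f_{\alg_2}(y_2))$. A secondary, easily-overlooked point is that $\epsilon,\epsilon',f_{\alg_b}(y_b)$ all mention the observable datasets $y_1,y_2$ and hence evaluate identically at $w$ and at every $\relO$-accessible $w'$; this is what legitimizes proving the estimate worldwise within the scope of $\Know$.
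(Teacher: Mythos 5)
Your proposal is correct and follows essentially the same route as the paper's proof: establish (\propBHk{}) from the observability of the test history and reflexivity of $\relO$, propagate the $\kappa$-antecedent to every $\relO$-accessible world, use (\propNu{}) together with $\sigmawp(y_b)=\sigmaw(y_b)$ for observable $y_b$ to pin down the component $p$-values, and then bound the combined $p$-value over the coupling by the union bound (resp.\ $\Pr[A\cap B]\le\min(\Pr[A],\Pr[B])$) to get $\TestLeE{y,\alg}(\epsilon)$ (resp.\ $\TestLeE{y,\alg}(\epsilon')$) and hence the disjunction under $\Know$. The points you flag as the main obstacles are exactly the steps the paper carries out explicitly, so no gap remains.
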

The proof is shown in \App{sub:proof:assertions}.

Intuitively, (\propBHk) implies that the analysts know the history $S$ of all previously executed hypothesis tests.
Otherwise, they could not check whether a $p$-value is correctly calculated from the history $S$.
(\propBHT) derives a statistical belief from a history $\{ (y, \alg) \}$ consisting of a single hypothesis test, 
while (\propBHTor) and (\propBHTand) derive statistical beliefs from histories $S$ of multiple hypothesis tests.
In \Sec{sub:derived:rules}, we use these properties to obtain helpful derived rules for belief Hoare logic.

\section{Belief Hoare Logic for Hypothesis Testing}
\label{sec:belief-hoare-logic}
We introduce \emph{belief Hoare logic} (BHL) for formalizing and reasoning about statistical inference using hypothesis tests.
We define the notions of judgments and partial correctness (\Sec{sub:BHL:triples}) and the inference rules of BHL (\Sec{sub:rules}).
We then show the soundness and relative completeness of BHL (\Sec{sub:completeness}) and present useful derived rules for typical forms of hypothesis testing (\Sec{sub:derived:rules}).

\begin{figure}[t]
\begin{small}
\infax[Skip]{
  \env\vdash\triple{\psi}{\myskip{}}{\psi}
}

\rulesp

\infrule[UpdVar]
{\env(v) = \env(e)}
{
  \env\vdash\triple{\phi[v \mapsto e]}{v := e}{\phi}
}

\rulesp

\infrule[Seq]{
  \env\vdash\triple{\psi}{C_1}{\psi'}
  \andalso
  \env\vdash\triple{\psi'}{C_2}{\phi}
}{
  \env\vdash\triple{\psi}{C_1; C_2}{\phi}
}

\rulesp

\infrule[If]{
  \env\vdash\triple{\psi\land e}{C_1}{\phi}
  ~~~~~
  \env\vdash\triple{\psi\land \neg e}{C_2}{\phi}
}{
  \env\vdash\triple{\psi}{\myIf{e}{C_1}{C_2}}{\phi}
}

\rulesp

\infrule[Loop]{
  \env\vdash\triple{\psi \land e}{C}{\psi} 
}{
  \env\vdash\triple{\psi}{\myLoop{e}{C}}{\psi \land \neg e}
}

\rulesp

\infrule[Conseq]{
\hspace{-0.3ex}
  \env\models\!\psi \!\implies\! \psi'
~~
  \env\vdash\!\triple{\psi'}{\!C\!}{\phi'}
~~
  \env\models\!\phi' \!\implies\! \phi
\hspace{-0.3ex}
}{
  \env\vdash\!\triple{\psi}{\!C\!}{\phi}
}
\end{small}
\caption{Axioms and rules for basic constructs for commands.}
\label{fig:rules:basic}
\end{figure}

\begin{figure}[t]
\hspace{-4ex}
\begin{minipage}{50ex}
\centering
\begin{small}
\infrule[\axHist]{\begin{array}{c}
  \hspace{-5ex}
  \{ h_{y,\alg}:\nats \} \subseteq \envI,~~
  \{ y\!:\typelist{\dompop},\, v\!:[0,1] \} \subseteq \envO,
  \hspace{-4ex}
\\
  \psiPre \eqdef \psi[v \,{\mapsto} f_{\alg}(y),\, h_{y,\alg} \mapsto (h_{y,\alg} + 1)]
\end{array}
}{
  (\envI, \envO) \vdash
  \triple{ \psiPre }{v := f_{\alg}(y)}{ \psi }
}
\end{small}
\end{minipage}
\hspace{-8ex}
\hfill
\begin{minipage}{33ex}
\begin{small}
\infrule[\axPAR]{
\begin{array}{c}
  \env \vdash\triple{\psi}{C_1; C_2}{\psi'}
\end{array}
}{
  \env \vdash\triple{\psi}{C_1 \,\|\, C_2}{\psi'}
}
\end{small}
\end{minipage}
\caption{An axiom and a rule for hypothesis tests.
(\axHist{}) is the axiom for hypothesis tests. (\axPAR{}) is the rule for exchanging the sequential composition with the parallel composition.}
\label{fig:rules:HT}
\end{figure}

\subsection{Hoare Triples}
\label{sub:BHL:triples}

We define an \emph{environment} as a pair $\env = (\envI, \envO)$ 
consisting of 
an \emph{invisible environment} $\envI$ and an \emph{observable environment} $\envO$ that assign types to invisible variables and to observable variables, respectively.
We write $\env \models \varphi$ if $\M, w \models \varphi$ for any model $\M$ and any world $w$ that respects the type information in $\env$ (i.e., the type of $w(v)$ being $\env(v)$ for any $v\in\var$).
Let $\Env$ be the set of all possible environments.

A \emph{judgment} is of the form $\env\vdash\triple{\psi}{C}{\phi}$ where $\env\in\Env$, $\psi,\phi\in\Fml$, and $C\in\Prog$.
Intuitively, this represents that whenever the \emph{precondition} $\psi$ is satisfied, executing the program $C$ results in satisfying the \emph{postcondition} $\phi$ if $C$ terminates.

We say that 
a judgment $\env\vdash\triple{\psi}{C}{\phi}$ is \emph{valid} iff 
for any model $\M$ and any possible world $w$, if 
$\M, w\models\psi$, then
$\M, w' \models\phi$ for all $w' \in \sem{C}(w)$.
A valid judgment $\env\vdash\triple{\psi}{C}{\phi}$ expresses the \emph{partial correctness} of the program $C$: It respects the precondition $\psi$ and the postcondition $\phi$ up to the termination of $C$.

\begin{figure*}[t]
\centering
\begin{small}
\begin{gather*}
\dfrac{\begin{array}{c}
  \{ h_{y,\algt{\sideT}}:\nats \} \subseteq \envI,~~
  \{ y\!:\typelist{\dompop},\, \alpha\!:[0,1] \} \subseteq \envO,~~ \\
  \alpha, h_{y,\algt{\sideT}} \not\in \fv{ \{ \phiL, \phiU, \psi \} },~~
  \Gamma \models \psi \implies (\cmpds{y,\algt{\sideT}} \land \Possible \phiL \land \Possible \phiU)
\end{array}
}{
  (\envI, \envO) \vdash
  \triple{
  \psi \land \kappaX{\emptyset}
  }{\alpha := f_{\!\algt{\sideT}}(y)}{
  \psi \land \kappaX{y,\algt{\sideT}} \land \KnowXx{\alpha}{y,\algt{\sideT}} \phiT
  }
}
\tag{$\axTT{}$}
\\[0.8ex]
\dfrac{\begin{array}{c}
  \{ h_{y,\algt{\sideL}}:\nats \} \subseteq \envI,~~
  \{ y\!:\typelist{\dompop},\, \alpha\!:[0,1] \} \subseteq \envO,~~ \\
  \alpha, h_{y,\algt{\sideL}} \not\in \fv{ \{ \phiL, \phiU, \psi \} },~~
  \Gamma \models \psi \implies (\cmpds{y,\algt{\sideL}} \land \Possible \phiL \land \neg\Possible \phiU)
\end{array}
}{
  (\envI, \envO) \vdash
  \triple{
  \psi \land \kappaX{\emptyset}
  }{\alpha := f_{\!\algt{\sideL}}(y)}{
  \psi \land \kappaX{y,\algt{\sideL}} \land \KnowXx{\alpha}{y,\algt{\sideL}} \phiL
  }
}
\tag{$\axLT{}$}
\\[0.8ex]
\dfrac{\begin{array}{c}
  \{ h_{y,\algt{\sideU}}:\nats \} \subseteq \envI,~~
  \{ y\!:\typelist{\dompop},\, \alpha\!:[0,1] \} \subseteq \envO,~~ \\
  \alpha, h_{y,\algt{\sideU}} \not\in \fv{ \{ \phiL, \phiU, \psi \} },~~
  \Gamma \models \psi \implies (\cmpds{y,\algt{\sideU}} \land \neg \Possible \phiL \land \Possible \phiU)
\end{array}
}{
  (\envI, \envO) \vdash
  \triple{
  \psi \land \kappaX{\emptyset}
  }{\alpha := f_{\!\algt{\sideU}}(y)}{
  \psi \land \kappaX{y,\algt{\sideU}} \land \KnowXx{\alpha}{y,\algt{\sideU}} \phiU
  }
}
\tag{$\axUT{}$}
\\[0.8ex]
\dfrac{\begin{array}{c}
  \{ h_{y_2,\alg_2}:\nats \} \subseteq \envI,~~
  \{ y_1:\typelist{\dompop_1},\, y_2:\typelist{\dompop_2},\, \alpha_1:[0,1],\, \alpha_2:[0,1] \} \subseteq \envO,~~
y = (y_1, y_2) 
\\[0.2ex]
  \!\alpha_1,\alpha_2, h_{y_2,\alg_2} {\not\in}\, \fv{ \{ \phi_1, \phi_2, \psi \} },~
  S \,{=}\, \{ (y_1, \alg_1), (y_2, \alg_2) \},~
  \Gamma \,{\models}\, \psi \,{\implies}\, (\cmpds{y_2,\alg_2} {\land}\, \Possible (\phi_1 {\lor}\, \phi_2))
\end{array}}{
  (\envI, \envO) \vdash\triple{\psi \land \kappaX{y_1\!,\alg_1} \land  \KnowXx{\alpha_1}{y_1\!, \alg_1}\!\phi_1}{ \alpha_2 \mathbin{:=} f_{\alg_2}(y_2) }{\psi \land \kappaX{S} \land \KnowXx{\le\alpha_1+\alpha_2}{y,\alg} (\phi_1 {\lor}\, \phi_2)}
}
\tag{\ruleOR{}}
\\[0.0ex]
\dfrac{\begin{array}{c}
  \{ h_{y_2,\alg_2}:\nats \} \subseteq \envI,~~
  \{ y_1:\typelist{\dompop_1},\, y_2:\typelist{\dompop_2},\, \alpha_1:[0,1],\, \alpha_2:[0,1] \} \subseteq \envO,~~
y = (y_1, y_2) 
\\[0.2ex]
  \!\alpha_1,\alpha_2, h_{y_2,\alg_2} {\not\in}\, \fv{ \{ \phi_1, \phi_2, \psi \} },~
  S \,{=}\, \{ (y_1, \alg_1), (y_2, \alg_2) \},~
  \Gamma \,{\models}\, \psi \,{\implies}\, (\cmpds{y_2,\alg_2} {\land}\, \Possible (\phi_1 {\land}\, \phi_2))
\end{array}}{
  \!(\envI, \envO) \vdash\triple{\psi \land \kappaX{y_1\!,\alg_1} \land  \KnowXx{\alpha_1}{y_1\!, \alg_1}\!\phi_1}{ \alpha_2 \mathbin{:=} f_{\alg_2}(y_2) }{\psi \land \kappaX{S} \land \KnowXx{\le\min(\alpha_1\!, \alpha_2)}{y,\alg} (\phi_1 {\land}\, \phi_2)}
}
\tag{\ruleAND{}}
\end{gather*}
\vspace{-1.5ex}
\end{small}
\caption{
(\axTT{}), (\axLT{}), and (\axUT{}) are derived rules for a two-tailed test $\algt{\sideT}$, a lower-tailed $\algt{\sideL}$, and an upper-tailed $\algt{\sideU}$, respectively, 
where $\phiT$, $\phiL$, $\phiU$ are alternative hypotheses (\Sec{sub:properties:hypo}) and $\kappaX{\emptyset}$ is given in \eqref{eq:kappaS}.
(\ruleOR{}) is for the Bonferroni's method with the disjunctive combination $\alg$ of two tests $\alg_1$ and $\alg_2$.
(\ruleAND{}) is for the conjunctive combination $\alg$ of $\alg_1$ and $\alg_2$.}
\label{fig:derived:rules:HT}
\end{figure*}

\subsection{Inference Rules}
\label{sub:rules}

We define the inference rules for belief Hoare logic (BHL).
The rules consist of those for basic command constructs (\Fig{fig:rules:basic}) and for hypothesis tests (\Fig{fig:rules:HT}).

The rules in \Fig{fig:rules:basic} for the basic constructs are the same as those for a standard imperative programming language;
the readers are referred to a standard textbook on the Hoare logic~\cite{winskel} for details.
We add the following remarks to a few rules:
\begin{itemize}
\item In the rules (\ruleIf) and (\ruleLoop), the guard condition $e$ is a Boolean expression implicitly used as a logical predicate in the preconditions and the postconditions.
Translating a Boolean expression into an \ELHT{} assertion is straightforward.
\item 
The rule (\ruleConseq) refines the precondition and relax the postcondition of a triple.
The relation $\Gamma \models \varphi$ 
is used in this rule.
\end{itemize}

The rules in \Fig{fig:rules:HT} are characteristic of BHL.
(\axHist{}) describes the properties of an execution of a hypothesis test command $f_{\alg}$ on a dataset $y$.
Essentially, this rule states that the precondition is obtained by substituting the $p$-value $f_{\alg}(y)$ for the variable $v$ in the postcondition $\psi$.
(\axHist{}) differs from (\axUpdVar) in that
an execution of $f_{\alg}$ on $y$ also 
increases the history variable $h_{y,\alg}$ by $1$.
Recall that $h_{y,\alg}$ denotes the number of all executions of $f_{\alg}$ on $y$ and is updated only by an execution of $f_{\alg}$ on~$y$.

The rule (\axPAR{}) in \Fig{fig:rules:HT} exchanges the sequential composition $C_1; C_2$ with the parallel composition $C_1 \,\|\, C_2$.
We recall that in \Sec{sub:program:syntax}, the restriction $\upd{C_1} \cap \var(C_2) = \upd{C_2} \cap \var(C_1) = \emptyset$ is imposed to ensure that an execution of $C_1$ does not interfere with that of $C_2$, and vice versa.

\subsection{Soundness and Relative Completeness}
\label{sub:completeness}

We show that BHL satisfies soundness and relative completeness as follows.

\begin{restatable}[Soundness]{thm}{PropSound}
\label{thm:PropSound}
Every derivable judgment is valid.
\end{restatable}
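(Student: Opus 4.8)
The plan is to prove soundness by structural induction on the derivation of the judgment $\env \vdash \triple{\psi}{C}{\phi}$, showing that each axiom is valid and each rule preserves validity. For the axioms and rules for basic command constructs in \Fig{fig:rules:basic}, the arguments are entirely standard, relying only on the operational semantics of \Prog{} in \Fig{fig:rules:semantics}: the key observation is that $\sem{c}$ is exactly the transition relation $\transX{c}$ instantiated in the Kripke model, so a transition updates the current state's assignment (and, for $f_{\alg}$, the test history and the history variable $h_{y,\alg}$) in the expected way while appending a new state to the possible world. For (\axSkip) and (\axUpdVar) one unfolds the definition of $\sem{C}(w)$ using a single transition; for (\ruleSeq), (\ruleIf), (\ruleLoop) one uses the compositional structure of $\longrightarrow^\ast$, with (\ruleLoop) requiring the usual loop-invariant argument on the number of iterations (if $\myLoop{e}{C}$ terminates, it does so after finitely many iterations, and $\psi$ is preserved across each); (\ruleConseq) is immediate from the definition of $\env \models \varphi$ and validity.

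The substantive cases are the two rules in \Fig{fig:rules:HT}. For (\axHist{}), I would argue as follows: suppose $\M, w \models \psiPre$ where $\psiPre \eqdef \psi[v \mapsto f_{\alg}(y),\, h_{y,\alg} \mapsto (h_{y,\alg}+1)]$, and let $w' \in \sem{v \detassign f_{\alg}(y)}(w)$. By the operational semantics, $w'$ extends $w$ by one state with assignment $\mem' = \mem_w[v \mapsto \sem{f_{\alg}(y)}_{\mem_w},\, h_{y,\alg} \mapsto \sem{h_{y,\alg}+1}_{\mem_w}]$ and test history $\testHist_{w'} = \testHist_w \uplus \{\mem_w(y) \mapsto \{\alg\}\}$. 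The point is that $\testHist_{w'}$ and $\mem'(h_{y,\alg})$ are updated \emph{consistently} (as noted in \Sec{sub:program:semantics}), so the value of every assertion term and every predicate symbol — including $\kappaS$ and $\NegBowEya$ — in $w'$ agrees with its value in $w$ after performing the substitution $[v \mapsto f_{\alg}(y),\, h_{y,\alg} \mapsto h_{y,\alg}+1]$. A substitution lemma (relating $\sem{u[x \mapsto e]}^{\cali}_{\mem}$ to $\sem{u}^{\cali}_{\mem[x \mapsto \sem{e}_{\mem}]}$, and extending it to formulas, where the $\Know$ case uses that the observability relation is preserved under this well-behaved update) then gives $\M, w' \models \psi$. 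I would state and prove this substitution lemma (presumably already available from \App{sec:proofs}, or the reference to the propositions on assertions) as the main technical ingredient. For (\axPAR{}), soundness reduces to the remark in \Sec{Remark:nondeterminism:parallel}: every $w' \in \sem{C_1 \| C_2}(w)$ arises from interleaving executions of $C_1$ and $C_2$, and since $\upd{C_1} \cap \var(C_2) = \upd{C_2} \cap \var(C_1) = \emptyset$, any such interleaved run is convertible to the sequential run $C_1; C_2$ that reaches an "equivalent" world — equivalent in the sense of agreeing on the final assignment, the test history, and hence on all formulas in \Fml{}. Thus validity of $\triple{\psi}{C_1; C_2}{\psi'}$ transfers to $\triple{\psi}{C_1 \| C_2}{\psi'}$.

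The main obstacle I anticipate is \emph{not} any single rule but the bookkeeping needed to make the substitution lemma for (\axHist{}) airtight in the presence of the epistemic modality and the test history: one must check that the simultaneous substitution $[v \mapsto f_{\alg}(y),\, h_{y,\alg} \mapsto h_{y,\alg}+1]$ commutes with interpretation across \emph{all} possible worlds accessible via $\relO$, not just the current one, and that the special predicates ($\sampled{\cdot}{\cdot}{\cdot}$, $\followed{\cdot}{\cdot}$, $\NegBowEya$, $\kappaS$) behave correctly — in particular that executing $f_{\alg}$ changes $\testHist_w$ in a way that exactly mirrors incrementing the history variable, so that $\kappaS$-formulas are handled uniformly. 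Establishing this once, carefully, makes the (\axHist{}) case routine; everything else then follows by the standard Hoare-logic induction.
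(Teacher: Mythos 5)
Your proposal follows essentially the same route as the paper: a standard induction on derivations, with the basic rules dispatched as in classical Hoare logic, the axiom (\axHist{}) reduced to a substitution lemma proved by induction on formulas (the paper's Lemma~\ref{lem:LemExpressAssign}, whose $\Know$ case is handled exactly as you anticipate, by transporting the update across $\relO$-related worlds), and (\axPAR{}) reduced to the sequential case via the non-interference restriction (the paper's Lemma~\ref{lem:exchange:par:seq}).

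One step in your sketch of the (\axPAR{}) case is stated too quickly. You write that the sequential and parallel runs reach worlds ``agreeing on the final assignment, the test history, and hence on all formulas in $\Fml$.'' The ``hence'' does not follow: $\Know$ is interpreted via the observability relation $\relO$, and $\obs(w)$ records the entire sequence of states including the actions performed, so two worlds obtained by different interleavings have different observations and hence different $\relO$-classes even when their final memories and test histories coincide. Agreement on atomic formulas is immediate from Lemma~\ref{lemma_uniqueness_final_result}, but agreement on $\Know$-formulas requires the separate induction of Lemma~\ref{lem:exchange:par:seq}, whose $\Know$ case works by applying the \emph{same} reordering of actions uniformly to every $\relO$-accessible starting world (via Lemmas~\ref{lemma_exec_depends_only_last_state}--\ref{sub:appendix:parallel:B}) so that the accessible worlds of the two resulting worlds are put in bijection. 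This is the same kind of care you correctly flag for the $\Know$ case of the substitution lemma; it is needed again here, and once it is supplied your argument matches the paper's.
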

We prove \Thm{thm:PropSound} in \App{sub:proof:soundness}.

In contrast, BHL is \emph{not} complete.
As with the standard Hoare logic, 
the rule (\ruleConseq{}) uses the validity of assertions $\Gamma \models \psi \implies \psi’$ and $\Gamma \models \varphi’ \implies \varphi$ as assumptions, which may not have finite proofs because the assertion logic is not complete due to arithmetic.

However, BHL is \emph{relatively complete}~\cite{Cook:78:siamcomp}:
every valid judgment has a finite proof using inference rules of BHL, 
except for the proofs for the assertions that appear as premises in (\ruleConseq{}).

\begin{restatable}[Relative completeness]{thm}{PropComplete}
\label{thm:PropComplete}
Every valid judgment is derivable
except for the proofs for assertions.
\end{restatable}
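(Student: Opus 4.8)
The plan is to adapt Cook's method of \emph{weakest liberal preconditions} to the present epistemic setting, so that relative completeness reduces to an expressiveness statement about \ELHT{} together with a local-completeness check of each rule. The first step is, for every program $C\in\Prog$ and postcondition $\phi\in\Fml$, to exhibit an \ELHT{} formula $\wpcond(C,\phi)$ characterizing the weakest liberal precondition semantically: in every model $\M$ and world $w$, $\M,w\models\wpcond(C,\phi)$ iff $\M,w'\models\phi$ for all $w'\in\sem{C}(w)$. That such a formula always exists is the key \emph{expressiveness lemma}, proved by structural induction on $C$. The base and compositional cases are essentially substitutions: $\wpcond(\myskip,\phi)=\phi$; $\wpcond(v:=e,\phi)=\phi[v\mapsto e]$ (using the typing side-condition $\env(v)=\env(e)$); $\wpcond(v:=f_{\alg}(y),\phi)=\phi[v\mapsto f_{\alg}(y),\,h_{y,\alg}\mapsto h_{y,\alg}+1]$, which correctly reflects the update of the test history because, by the consistency between $\testHist_w$ and the history variables and by \eqref{eq:kappaS} and \eqref{eq:kappaS:sem}, every history-sensitive atom of \ELHT{} is already phrased through the $h_{y,\alg}$; $\wpcond(C_1;C_2,\phi)=\wpcond(C_1,\wpcond(C_2,\phi))$; $\wpcond(C_1\mathbin{\|}C_2,\phi)=\wpcond(C_1;C_2,\phi)$, which is sound by the convertibility of parallel and sequential executions noted in \Sec{Remark:nondeterminism:parallel}; and $\wpcond(\myIf{e}{C_1}{C_2},\phi)=(e\land\wpcond(C_1,\phi))\lor(\neg e\land\wpcond(C_2,\phi))$. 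All of these rest on a substitution lemma for assertion terms and formulas; the point requiring care is that substitution commutes with the modality $\Know$, which holds here because $f_{\alg}(y)$ and $h_{y,\alg}$ are rigid along the observability relation $\relO$ --- the former depends only on the observable dataset $y$, the latter is pinned down by the test history, and both are preserved by $\relO$. The loop case, $\wpcond(\myLoop{e}{C},\phi)$, is where the integer quantifier $\forall i$ over $\IntVar$ is needed: one takes the $\forall i$-indexed conjunction of the bounded approximants (the weakest preconditions of running the body a fixed number of times before the guard fails), which forces one to represent the length-$i$ execution prefix and the nested weakest-precondition computation by terms built from $\Fsym$, exploiting that a memory over the finite set $\var$ is a finite tuple of data values and hence itself codable.

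With expressiveness established, the second step is to show by structural induction on $C$ that the judgment $\env\vdash\triple{\wpcond(C,\phi)}{C}{\phi}$ is derivable, matching each clause of the definition with the corresponding rule: (\axSkip) followed by (\ruleConseq); (\axUpdVar); (\axHist); (\ruleSeq) with the induction hypotheses; (\axPAR) applied to the already-obtained derivation for $C_1;C_2$; (\ruleIf) with (\ruleConseq); and (\ruleLoop) with (\ruleConseq), using $I:=\wpcond(\myLoop{e}{C},\phi)$ as the loop invariant and discharging the implications $\env\models (I\land e)\rightarrow\wpcond(C,I)$ and $\env\models (I\land\neg e)\rightarrow\phi$, both immediate from the semantic characterization of $\wpcond$. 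The final step: given any valid judgment $\env\vdash\triple{\psi}{C}{\phi}$, validity together with the characterization of $\wpcond$ yields $\env\models \psi\rightarrow\wpcond(C,\phi)$, so a single use of (\ruleConseq) on $\env\vdash\triple{\wpcond(C,\phi)}{C}{\phi}$ gives $\env\vdash\triple{\psi}{C}{\phi}$. This is precisely where the completeness is \emph{relative}: (\ruleConseq) consumes valid \ELHT{} implications $\env\models\varphi$ as an oracle.

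The main obstacle I expect is the expressiveness lemma, and specifically its loop case: one must compress an unbounded, deterministic execution of $\myLoop{e}{C}$ --- whose intermediate configurations are arbitrary memories over $\var$ --- into one \ELHT{} formula using only quantification over tuples of integers, which is exactly why the assertion language carries the otherwise-unused quantifiers and why $\Fsym$ must be taken to contain suitable pairing, list, and arithmetic operations. A secondary but genuine technical burden, absent from textbook Hoare-logic completeness proofs, is verifying that the weakest-precondition substitutions interact correctly with the epistemic modality $\Know$ and with the history-updating semantics of the hypothesis-test command $f_{\alg}$; once the rigidity of $f_{\alg}(y)$ and $h_{y,\alg}$ along $\relO$ and the coincidence of $\kappa$-atoms with constraints on the $h_{y,\alg}$ variables are pinned down, this becomes routine.
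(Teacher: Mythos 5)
Your proposal is correct and follows essentially the same route as the paper: an expressiveness lemma establishing that \ELHT{} can define $\wpc{\cali}(C,\phi)$ by structural induction on $C$ (with the parallel case reduced to sequential composition, the loop case handled by quantifying over integer tuples encoding the intermediate memories, and a substitution lemma that commutes with $\Know$), followed by a derivation of $\env\vdash\triple{\wpf{\phi}{C}}{C}{\phi}$ rule by rule, and a final application of (\ruleConseq{}). The paper's proof (Proposition~\ref{prop:PropExpresive}, Lemmas~\ref{lem:LemExpressAssign} and~\ref{lem:LemDeduceWP}) matches your outline step for step.
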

We prove 
\Thm{thm:PropComplete} in \App{sub:proof:complete}.

\subsection{Remarks on Decidability}
\label{sub:decidability}

We discuss the decidability of BHL, the assertion logic, and its fragments as follows.

We first remark that BHL is \emph{undecidable}, i.e., there is no effective method for determining whether an arbitrary judgment $\env\vdash\triple{\phi}{C}{\phi'}$ is derivable using BHL's inference rules.
This undecidability follows from the undecidability of the halting problem;
that is, if BHL were decidable, there would be an algorithm that could derive 
$\env\vdash\triple{\mytrue{}}{C}{\myfalse{}}$ for an arbitrary program $C$ written in the Turing-complete language $\Prog$,
i.e., that could determine whether an arbitrary program $C$ terminates or not (hence a contradiction).
Nevertheless, undecidable program logic in general is known to be practically useful for real-world programs in many logic-based formal verification techniques, 
such as \cite{Platzer:18:book,Hahnle:19:TCSGI}.

Furthermore, our assertion logic (\Sec{sub:epistemic-language}) is \emph{undecidable}, since it subsumes the first-order logic with arithmetic.
Unlike the first-order logic, even the two-variable, monadic fragment of our assertion logic is also undecidable, because the two-variable, monadic fragment of first-order modal logic is proven to be undecidable when we consider a Kripke frame with a world that is related to infinitely many worlds~\cite{Kripke:62:MLQ,Hughes:96:book-modal-logic}.%

Finally, we remark that 
the fragment of our assertion logic without quantifiers over $\IntVar$ and without arithmetic is \emph{decidable} due to the decidability of the propositional modal logic S5.
In practice, the quantifier-free fragment of our assertion logic can describe the pre-/post-conditions of many popular hypothesis testing methods that involve no loops.
As we illustrate in \Sec{sec:reasoning-with-BHL}, our BHL can efficiently reason about practical issues without loops, such as $p$-value hacking and multiple comparison problems.

\subsection{Derived Rules}
\label{sub:derived:rules}
In \Fig{fig:derived:rules:HT}, we show useful derived rules for typical forms of hypothesis tests.
These can be instantiated to a variety of concrete testing methods;
see \App{sub:instantiation}.

\subsubsection{Derived Rules for Single Hypothesis Tests}
\label{sub:rules:HT}

The derived rules (\axTT{}), (\axLT{}), and (\axUT{})
correspond to two-tailed, lower-tailed, and upper-tailed hypothesis tests, respectively.
Recall that the formulas $\phiL$, $\phiU$, and $\phiT (\eqdef \phiL \lor \phiU)$ denote the alternative hypotheses for the lower-tailed, upper-tailed, and two-tailed tests, respectively (\Sec{sub:properties:hypo})
and $\kappaX{\emptyset}$ is given in \eqref{eq:kappaS}.

The derived rule  (\axTT{}) states that we can perform a two-tailed test program $f_{\algt{\sideT}}$ on a dataset $y$ 
if we have the prior belief $\Possible \phiL \land \Possible \phiU$ that both the lower-tail $\phiL$ and upper-tail $\phiU$ are possible before performing the test (\Sec{sub:prior:beliefs}).
If the test $f_{\algt{\sideT}}$ on $y$ returns a $p$-value $\alpha \in [0,1]$, we obtain the statistical belief in the alternative hypothesis $\phiT$ denoted by $\KnowXx{\alpha}{y,\algt{\sideT}} \phiT$.
The derivation of (\axTT{}) is given by:

\begin{center}
\begin{small}
\centering
\hspace{0ex}
\infer[\scriptsize\ruleConseq{}]{
  \env \vdash
  \triple{ \psi \land \kappaX{\emptyset}
  }{\alpha := f_{\!\algt{\sideT}}(y)}{ \psi \land \kappaX{y,\algt{\sideT}} \land \KnowXx{\alpha}{y,\algt{\sideT}} \phiT
  }
}
{
  \infer[\scriptsize\axHist]{
    \env \vdash
    \triple{ \psi \land \kappaX{\emptyset}
    }{\alpha := f_{\!\algt{\sideT}}(y)}{ \psi \land \kappaX{y,\algt{\sideT}}
    }
  }{
    \{ h_{y,\algt{\sideT}}:\nats \} \subseteq \envI,~~
    \{ y\!:\typelist{\dompop},\, \alpha\!:[0,1] \} \subseteq \envO,~~
    \alpha, h_{y,\algt{\sideT}} \not\in \fv{ \{ \phiL, \phiU, \psi \} }
  }
}
\end{small}
\end{center}
where (\ruleConseq) uses \Propo{prop:StatBeliefHT} (\propBHT{}).
We remark that the derivation does not use $\Gamma \models \psi \implies (\cmpds{y,\algt{\sideT}} \land \Possible \phiL \land \Possible \phiU)$.
However, for the two-tailed test $\algt{\sideT}$ to be meaningful, the postcondition must imply $\cmpds{y,\algt{\sideT}} \land \Possible \phiL \land \Possible \phiU$, 
as mentioned in \Sects{sub:test:meaningful} and~\ref{sub:prior:beliefs}.

If we have the prior belief $\Possible \phiL \land \neg \Possible \phiU$ (resp. $\neg \Possible \phiL \land \Possible \phiU$)
that only the lower-tail $\phiL$ (resp. upper tail $\phiU$) is possible,
then we can apply (\axLT{}) (resp. (\axUT{})) and obtain the statistical belief $\KnowXx{\alpha}{y,\algt{\sideL}} \phiL$ (resp. $\KnowXx{\alpha}{y,\algt{\sideU}} \phiU$).
The derivations of (\axLT{}) and (\axUT{}) are similar to that of (\axTT{}).

\subsubsection{Derived Rules for Multiple Hypothesis Tests}
\label{sub:rules:multiple:HT}

The derived rule (\ruleOR{}) corresponds to the reasoning about two tests $\alg_1$ on $y_1$ and $\alg_2$ on $y_2$ with a \emph{disjunctive} alternative hypothesis $\phi_1 \lor \phi_2$.
As illustrated in \Sec{sec:overview}, a typical example is to test whether a drug has better efficacy than \emph{at least one} of two drugs.

The precondition in (\ruleOR{}) expresses that we have obtained a statistical belief $\KnowXx{\alpha_1}{y_1,\alg_1} \phi_1$ in an alternative hypothesis $\phi_1$ with a $p$-value $\alpha_1$.
If we obtain an output $\alpha_2$ of the second test $\alg_2$, we cannot conclude that $\alpha_2$ is the $p$-value for $\phi_2$,
because the $p$-value when performing the two tests $\alg_1$ and $\alg_2$ simultaneously is larger than $\alpha_1$ and $\alpha_2$.
This is known as the \emph{multiple comparison problem}.

The \emph{Bonferroni's method} is the best-known way to calculate $p$-values for multiple tests~\cite{Bretz:10:book}.
By applying this method, the $p$-value in total is bounded above by $\alpha_1+\alpha_2$; i.e.,
we obtain a statistical belief $\KnowXx{\le\alpha_1+\alpha_2}{y,\alg} (\phi_1\lor\phi_2)$ in the alternative hypothesis $\phi_1 \lor \phi_2$
with the dataset $y \eqdef (y_1, y_2)$.
In BHL, the derived rule (\ruleOR{}) guarantees the correct application of the Bonferroni's method;
i.e., the inference using BHL does \emph{not} make  elementary mistakes (e.g., $\KnowXx{\alpha_2}{y,\alg} \phi_2$) where the reported $p$-value $\alpha_2$ is lower than the actual $p$-value in the multiple comparison.
The derivation for (\ruleOR) is given by:

\begin{center}
\hspace{2ex}
\begin{small}
\centering
\infer[\scriptsize\ruleConseq{}]{
  \env \vdash\triple{\psi \land \kappaX{y_1\!, \alg_1} \land  \KnowXx{\alpha_1}{y_1\!, \alg_1}\!\phi_1}{ \alpha_2 \mathbin{:=} f_{\alg_2}(y_2) }{\psi \land \kappaS \land \KnowXx{\le\alpha_1+\alpha_2}{y,\alg} (\phi_1\lor\phi_2)}
}
{
  \infer[\scriptsize\axHist]{
  \hspace{-10ex}
    \env \vdash\triple{\psi \land \kappaX{y_1\!, \alg_1} }{ \alpha_2 \mathbin{:=} f_{\alg_2}(y_2) }{\psi \land \kappaS}
  }{
    \alpha_1, \alpha_2, h_{y_2\!,\alg_2} \not\in \fv{ \{ \phi_1, \phi_2, \psi \} },~
    S = \{ (y_1, \alg_1), (y_2, \alg_2) \}
  }
}
\end{small}
\end{center}
where (\ruleConseq) uses
\Propo{prop:StatBeliefHT} 
(\propBHTor{}).

In contrast, the derived rule  (\ruleAND{}) formalizes the reasoning about multiple tests
with a \emph{conjunctive} alternative hypothesis 
$\phi_1 \land \phi_2$
(e.g., the program $C_{\rm drug}$ in \Eg{eg:illustrate}, which tests whether a drug has better efficacy than \emph{both} drugs).
According to statistics textbooks (e.g., \cite{Bretz:10:book}), this does not make the $p$-value higher,
i.e., the $p$-value is at most $\min(\alpha_1, \alpha_2)$.
(\ruleAND{}) guarantees the correct procedure for conjunctive hypotheses.
The derivation for (\ruleAND) is given by:

\begin{center}
\begin{small}
\centering
\hspace{0ex}
\infer[\scriptsize\ruleConseq{}]{
  \env \vdash\triple{\psi \land \kappaX{y_1\!, \alg_1} \land  \KnowXx{\alpha_1}{y_1\!, \alg_1}\!\phi_1}{ \alpha_2 \mathbin{:=} f_{\alg_2}(y_2) }{\psi \land \kappaS \land \KnowXx{\le \min(\alpha_1,\alpha_2)}{y,\alg} (\phi_1\land\phi_2)}
}
{
  \infer[\scriptsize\axHist]{
  \hspace{-13ex}
    \env \vdash\triple{\psi \land \kappaX{y_1\!, \alg_1} }{ \alpha_2 \mathbin{:=} f_{\alg_2}(y_2) }{\psi \land \kappaS }
  }{
    \alpha_1, \alpha_2, h_{y_2\!,\alg_2} \not\in \fv{ \{ \phi_1, \phi_2, \psi \} },~
    S = \{ (y_1, \alg_1), (y_2, \alg_2) \}
  }
}
\end{small}
\end{center}
where (\ruleConseq) uses 
\Propo{prop:StatBeliefHT}
(\propBHTand{})
and $y \eqdef (y_1, y_2)$.

\section{Reasoning About Hypothesis Testing Procedures Using BHL}
\label{sec:reasoning-with-BHL}
In this section, we apply our framework to the reasoning about \emph{$p$-value hacking} and \emph{multiple comparison problems} using BHL.

\subsection{Reasoning About $p$-Value Hacking}
\label{sub:p-hacking}

The \emph{$p$-value hacking} (a.k.a. \emph{data dredging}) is a scientifically malignant technique to obtain a low $p$-value.
A typical example is to conduct hypothesis tests on different datasets and ignore the experiment showing a higher $p$-value to report only a lower.

Our framework can describe and reason about programs for $p$-value hacking.
For example, the following program $C_{\phack}$ conducts a hypothesis test $\alg_1$ on a dataset $y_1$ and another $\alg_2$ on $y_2$, 
and reports only a lower $p$-value $\alpha$ while ignoring the higher:
\begin{align*}
    C_{\phack} \eqdef~ &
    (\alpha_1 := f_{\alg_1}(y_1) \,\,\|\,\, \alpha_2 := f_{\alg_2}(y_2));
    \\[-0.3ex] & 
    \myIf{\alpha_1 < \alpha_2}{\alpha := \alpha_1}{\alpha := \alpha_2}.
\end{align*}
We write $\phi_1$ and $\phi_2$ for the alternative hypotheses of the tests 
$\alg_1$ and $\alg_2$, respectively.

Based on the discussion on the prior knowledge in \Sec{sub:test:meaningful},
we assume that we do not have the prior knowledge that these hypotheses are true or the dataset did not come from the population satisfying the requirements of the tests; that is, we have:
\begin{align} \label{eq:phack:assumption}
\psiPre \eqdef
\neg \Know (\phi_1 \lor \neg \cmpds{y_1,\alg_1}) \land \neg \Know (\phi_2 \lor \neg \cmpds{y_2,\alg_2}).
\end{align}

For the reported value $\alpha$ to be an actual $p$-value, the formula
\[
\psi_{\sf post}^{\rm hack} \eqdef
\KnowXx{\le\alpha}{y_1, \alg_1} \phi_1 \lor\, \KnowXx{\le\alpha}{y_2, \alg_2} \phi_2
\] 
needs to hold as a postcondition of $C_{\phack}$.
Thus, at the end of the first line of $C_{\phack}$,
\begin{small}
\begin{align*}
\big( \alpha_1 < \alpha_2 \implies (\KnowXx{\le\alpha_1}{y_1, \alg_1} \phi_1 \lor \KnowXx{\le\alpha_1}{y_2, \alg_2} \phi_2) \big) \land 
\big( \alpha_1 \ge \alpha_2 \implies (\KnowXx{\le\alpha_2}{y_1, \alg_1} \phi_1 \lor \KnowXx{\le\alpha_2}{y_2, \alg_2} \phi_2) \big)
\end{align*}
\end{small}%
must hold  due to the rules (\axUpdVar) and (\ruleIf).
By applying (\ruleConseq) and
the definition of the statistical belief modality,
the following formula needs to hold:
\begin{small}
\begin{align*}
\Know ( \phi_1  \lor  \TestLE{y_1,\alg_1}(\alpha) \lor \neg \cmpds{y_1,\alg_1} )
\lor
\Know ( \phi_2  \lor  \TestLE{y_2,\alg_2}(\alpha) \lor \neg \cmpds{y_2,\alg_2} ).
\end{align*}
\end{small}%
By assumption \eqref{eq:phack:assumption}, this formula implies 
$\Possible \kappaX{y_1, \alg_1} \lor \Possible \kappaX{y_2, \alg_2}$.
By \Propo{prop:StatBeliefHT} (\propBHk), we obtain $\kappaX{y_1, \alg_1} \lor \kappaX{y_2, \alg_2}$;
i.e., only one of the two hypothesis tests has been conducted.

However, by applying (\axPAR) and (\axHist) to $C_{\phack}$'s first line, $\kappaX{\{(y_1, \alg_1), (y_2, \alg_2)\}}$ needs to be satisfied; i.e., both the tests must have been conducted.
Hence a contradiction.
Therefore, we cannot conclude that the reported value $\alpha$ is the actual $p$-value.

Instead, 
for $y \eqdef (y_1, y_2)$
and the disjunctive combination $\alg$ of $\alg_1$ and $\alg_2$,
we derive that $\KnowXx{\le \alpha_1+\alpha_2}{y, \alg} (\phi_1 \lor \phi_2)$ 
is a postcondition of $C_{\phack}$
by using the derived rule (\ruleOR{}).
Therefore, the total $p$-value $\alpha_1+ \alpha_2$ should be reported without ignoring any experiments.

\begin{figure}[t]
\vspace{1ex}
\begin{small}
\centering
\infer[\!\scriptsize\ruleSeq]{
  \env\vdash\triple{ \psiPre }{C_{12}; \!\myIfsubstack{\alpha_{12} {\le} 0.05\!}{C_{13}}{\myskip{}}\!}{ \phiPost }
}{
  \begin{scriptsize}
  \infer[\!\scriptsize\axTT]
  {\env\vdash\triple{ \psiPre }{C_{12}}{\psiPostAB}}
  { \alpha_{12}, h_{y'\!,\alg_{12}} {\not\in}\, \fv{ \{\phi_{12}, \psi \} } }
  \end{scriptsize}
  &\hspace{-12.5ex}
  \infer[\!\scriptsize\ruleIf]{\hspace{3ex}
    \env\vdash\triple{\psiPostAB}{\!\myIfsubstack{\alpha_{12} {\le} 0.05\!}{C_{13}}{\myskip{}}\!}{ \phiPost }
  }{\hspace{-1.5ex}
    \infer[\!\scriptsize\ruleConseq]{
      \hspace{1.5ex}\env\vdash\triple{ \substack{\psiPostAB \land \\[0.1ex] \alpha_{12} {\le} 0.05} }{C_{13}}{ \phiPost }
    }{
      \hspace{-2.5ex}
      \begin{scriptsize}
      \begin{array}{l}
        \infer[\!\scriptsize\ruleAND{}]
        {\hspace{-3.1ex}\env\vdash\triple{ \psiPostAB }{C_{13}}{ \psiPostAC }}
        { \alpha_{12}, \alpha_{13}, h_{y''\!,\alg_{13}} {\not\in}\, \fv{ \{\phi_{12}, \phi_{13}, \psi \} } }
        \\[0.4ex]
        \hspace{5.5ex}\env\models (\psiPostAB \land \alpha_{12} {\le} 0.05) \rightarrow \psiPostAB \hspace{-0ex}
        \\[0.4ex]
        \hspace{5.5ex}\env\models \psiPostAC \rightarrow \phiPost
      \end{array}
      \end{scriptsize}
    }
    &\hspace{-2.5ex}
    \infer[\!\scriptsize\ruleConseq]{
      \env\vdash\triple{ \substack{ \psiPostAB \land \\[0.1ex] \alpha_{12} {>} 0.05} }{\myskip{}}{ \phiPost }
    }{
      \infer[\!\scriptsize\axSkip]{
        \begin{scriptsize}
        \begin{array}{l}
        \hspace{0.1ex}\env\vdash\triple{ \phiPost }{\myskip{}}{ \phiPost }
        \\[0.4ex]
        \env\models (\psiPostAB \land \alpha_{12} {>} 0.05)\!\rightarrow\! \phiPost
        \end{array}
        \end{scriptsize}
      }{}
    }
  }
}
\end{small}
\hspace{0.0ex}
\begin{minipage}{0.14\textwidth}
\begin{small}
\begin{align*}
\mbox{where }~
\cmpds{i} & \eqdef \sampled{y_i}{\Normal(\mu_i, \sigma^2)}{n_i}
\\[-1.0ex]
\psi & \eqdef \hspace{-1.3ex}\bigwedge_{i=1,2,3} \hspace{-1.5ex} \cmpds{i} \land \Possible (\phi_{12} \land \phi_{13})
\\[-1.0ex]
\psiPre & \eqdef \psi \land \kappaX{\emptyset}
\\[-1.0ex]
S & \eqdef \{ (y', \alg_{12}),\, (y'', \alg_{13}) \}
\end{align*}
\end{small}
\vspace{-5.6ex}
\end{minipage}
\hfill~~
\begin{minipage}{0.56\textwidth}
\begin{small}
\begin{align*}
\alpha & \eqdef \min(\alpha_{12}, \alpha_{13})
\\[-1.0ex]
\psiPostAB & \eqdef \psi \land \kappaX{y'\!, \alg_{12}} \land \KnowXx{\alpha_{12}}{y',\alg_{12}} \phi_{12}
\\[-1.0ex]
\psiPostAC & \eqdef \psi \land \kappaX{S} \land \KnowXx{\le\alpha}{y,\alg} (\phi_{12} \land \phi_{13})
\\[-0.6ex]
\phiPost & \eqdef \KnowXx{\le0.05}{y'\!,\alg_{12}} \phi_{12} \rightarrow \KnowXx{\le\alpha}{y,\alg} (\phi_{12} \land \phi_{13}).
\end{align*}
\end{small}
\vspace{-3.8ex}
\end{minipage}
\caption{An outline of the proof for the illustrating program $C_{\rm drug}$ in \Eg{eg:illustrate}.
}
\label{fig:overview:proof}
\end{figure}

\begin{figure}[t]
\vspace{2ex}
\centering
\begin{small}
\centering
\hspace{0ex}
\infer[\!\scriptsize\axPAR{}]{
  \env\vdash\triple{ \psiPre }{ C_{12} \,\|\, C_{13} }{ \KnowXx{\le \alpha_{12} + \alpha_{13}}{y,\alg} (\phi_{12} {\lor} \phi_{13}) }
}
{
  \infer[\!\scriptsize\ruleSeq]{
    \env\vdash\triple{ \psiPre }{ C_{12};\, C_{13} }{ \KnowXx{\le \alpha_{12} + \alpha_{13}}{y,\alg} (\phi_{12} {\lor} \phi_{13}) }
  }{
    \begin{scriptsize}
    \infer[\!\scriptsize\axTT{}]{
      \env\vdash\triple{ \psiPre }{ C_{12} }{ \psiPostAB }
    }{
      \alpha_{12},\, h_{y'\!,\alg_{12}} \not\in \fv{\{\phi_{12}, \psi \}}
    }
    \end{scriptsize}
    &
    \begin{scriptsize}
    \infer[\!\scriptsize\ruleConseq{}]{
        \hspace{-10.5ex}\env\vdash\triple{ \psiPostAB }{C_{13}}{ \KnowXx{\le \alpha_{12} + \alpha_{13}}{y,\alg} (\phi_{12} {\lor} \phi_{13}) }
    }{
      \infer[\!\scriptsize\ruleOR{}]{
        \env\vdash\triple{ \psiPostAB }{C_{13}}{ \psi \land \kappaX{S} \land \KnowXx{\le \alpha_{12} + \alpha_{13}}{y,\alg} (\phi_{12} {\lor} \phi_{13}) }
      }{
        \alpha_{12},\, \alpha_{13},\, h_{y''\!,\alg_{13}} \not\in \fv{\{\phi_{12}, \phi_{13}, \psi \}}
      }
    }
    \end{scriptsize}
  }
}
\end{small}
\vspace{-1ex}
\caption{An outline of the proof for $C_{12} \,\|\, C_{13}$
where 
$\psi \eqdef \bigwedge_{i=1,2,3} \cmpds{i} \land \Possible (\phi_{12} \lor \phi_{13})$,
$\psiPre \eqdef \psi \land \kappaX{\emptyset}$, and
$\psiPostAB \eqdef \psi \land \kappaX{y'\!, \alg_{12}} \land \KnowXx{\alpha_{12}}{y'\!,\alg_{12}} \phi_{12}$.
}
\label{fig:overview:proof:parallel}
\end{figure}

\subsection{Reasoning About Multiple Comparison with Conjunctive Alternative Hypotheses}
\label{sub:multiple-comparison-conjunct}

We illustrate how BHL reasons about the following program in the multiple comparison in \Eg{eg:illustrate}:
\begin{align*}
C_{\rm drug} \eqdef 
C_{12}; \myIf{\alpha_{12} < 0.05}{C_{13}}{\myskip{}},
\end{align*}
where 
$C_{12} \eqdef (\alpha_{12} := f_{\alg_{12}}(y'))$ is the $Z$-test $\alg_{12}$ on $y' = (y_1, y_2)$ with the alternative hypothesis $\phi_{12}$, and
$C_{13} \eqdef {(\alpha_{13} := f_{\alg_{13}}(y''))}$ is the $Z$-test $\alg_{13}$ on $y'' = (y_1, y_3)$ with $\phi_{13}$.
Let $\alg$ be the conjunctive combination of $\alg_{12}$ and $\alg_{13}$,
and $y \eqdef (y', y'')$.

In this example, the derivation of the judgment 
$\env\vdash\triple{ \psiPre }{ C_{\rm drug} }{ \phiPost }$
given in \eqref{eq:illustrate:Hoare} 
guarantees that the hypothesis tests are applied appropriately in the program $C_{\rm drug}$.

\Fig{fig:overview:proof} shows the derivation tree for this judgment.
In the derivation, we obtain:
\[
  \begin{array}{l}
    \env\vdash\triple{\psiPre}{C_{12}}{\psiPostAB}\\[0.1ex]
    \env\vdash\triple{ \psiPostAB \land \alpha_{12} \le 0.05 }{C_{13}}{ \phiPost }\\[0.1ex]
    \env\vdash\triple{ \psiPostAB \land \alpha_{12} > 0.05 }{\myskip{}}{ \phiPost }
  \end{array}
\]
where 
$\psiPostAB \eqdef (\psi \land \kappaX{y'\!, \alg_{12}} \land \KnowXx{\alpha_{12}}{y'\!, \alg_{12}}\phi_{12})$,
$\alpha \eqdef \min(\alpha_{12}, \alpha_{13})$,
and $\phiPost \eqdef (\KnowXx{\le 0.05}{y'\!, \alg_{12}} \phi_{12} \allowbreak \rightarrow \KnowXx{\le\alpha}{y, \alg} (\phi_{12} \land \phi_{13}))$.
The first judgment is derived using the derived rule (\axTT).
The second judgment is derived by the rules (\ruleAND) and (\ruleConseq).
The last judgment is derived from (\axSkip), (\ruleConseq), and  
$\env\models (\psiPostAB \land \alpha_{12} > 0.05) \rightarrow \phiPost$,
which is obtained by 
$\models \KnowXx{\alpha_{12}}{y'\!, \alg_{12}}\phi_{12} \land \alpha_{12} > 0.05 \rightarrow \neg \KnowXx{\le 0.05}{y'\!, \alg_{12}} \phi_{12}$.
Applying (\ruleIf) to the last two judgments, we have: 
\[
\env\vdash\triple{ \psiPostAB }{\myIf{\alpha_{12} \le 0.05}{C_{13}}{\myskip{}}}{ \phiPost },
\] 
composing it with the first judgment by applying (\ruleSeq), we obtain the judgment in \eqref{eq:illustrate:Hoare}.

\subsection{Reasoning About Multiple Comparison with Disjunctive Alternative Hypotheses}
\label{sub:multiple-comparison-disjunct}

In contrast, the program $C_{\rm multi} \eqdef C_{12} \,\|\, C_{13}$ in \eqref{eq:illustrate:program:disjunctive} has a disjunctive alternative hypothesis $\phi_{12} \lor \phi_{13}$ and thus shows a multiple comparison problem.
\Fig{fig:overview:proof:parallel} show the derivation tree for $C_{\rm multi}$.
Since the alternative hypothesis $\phi_{12} \lor \phi_{13}$ is disjunctive, we apply 
(\ruleOR) to obtain the belief 
$\KnowXx{\le \alpha_{12}+\alpha_{13} }{y,\alg} (\phi_{12} \lor \phi_{13})$,
with a $p$-value 
(larger than $\alpha_{12}$ and $\alpha_{13}$) 
at most $\alpha_{12} + \alpha_{13}$.

\section{Discussion}
\label{sec:discuss}
In this section, we provide the whole picture of the justification of statistical beliefs inside and outside BHL.
A statistical belief derived in a program relies on the following three issues: 
(i) the validity of hypothesis testing methods themselves, 
(ii) the satisfaction of the empirical conditions required for the hypothesis tests, and
(iii) the appropriate usage of hypothesis tests in the program.
In our framework, these are respectively addressed by
(a) the validity of BHL's axioms and rules, 
(b) the (manual) confirmation of the preconditions in a judgment, and
(c) the derivation tree for the judgment.

\subsection{Validity of Hypothesis Testing Methods}

The validity of hypothesis testing methods is not ensured by mathematics alone.
The philosophy of statistics has a long history of argument on 
the proper interpretation of hypothesis testing.
One of the most notable examples is the argument between the frequentist and the Bayesian statistics, which still has many issues to be discussed~\cite{Sober:08:book}.

We also remark that statistical methods occasionally involve approximation of numerical values.
Even when the approximation method has a theoretical guarantee, we may need to confirm the validity of the application of the approximation empirically, e.g., by experiments in the specific situation we apply the statistical methods.

For these reasons, we do not attempt to formalize the ``justification'' for hypothesis testing methods within BHL, and left them for future work.
Instead, we introduce simple (derived) rules that can be instantiated with the hypothesis tests commonly used in practice and explained in textbooks, e.g.,~\cite{Hogg:04:book:ims,Kanji:06:book:100stat}.
Then we focus on 
the logical aspects of the appropriate usage of hypothesis tests, which has been a long-standing, practical concern but has not been formalized using symbolic logic before.

One of the advantages of this approach is that we do not adhere to a specific philosophy of statistics, but can model both the frequentist and the Bayesian statistics by instantiating the derived rules for hypothesis tests (\App{sub:instantiation}).

\subsection{Clarification of Empirical Conditions}
\label{sub:empirical}

The hypothesis testing methods usually assume some empirical conditions on the unknown population from which the dataset is sampled.
Typically, many parametric tests require that the population follows a normal distribution.
For instance, the $Z$-test in \Eg{eg:hypothesis-test}
assumes that the population follows a normal distribution with known variance,
but this cannot be rigorously confirmed or justified in general.

In some cases, such conditions on the unknown population are confirmed approximately or partially (i) by exploratory observations of the sampled data and (ii) by prior knowledge of properties of the population (outside the statistical inference).
However, there is no general method for justifying such empirical conditions rigorously.
Thus, the formal justification of those conditions would require further research in statistics.

In the present paper, the empirical conditions on the unknown population remain to be assumptions from the viewpoint of formal logic.
Hence, we describe empirical conditions as the preconditions of a judgment in BHL.
Explicit specification of the preconditions would be useful to prevent errors in the choice of statistical methods.
Furthermore, when we formalize empirical science in future work, it would be crucial to clarify the empirical conditions that justify scientific conclusions.

\subsection{Epistemic Aspects of Statistical Inference}

One of our contributions is to show that epistemic logic is useful to formalize statistical inference.
Although the outcome of a hypothesis test is the \emph{knowledge} determined by the test action, it may form a false \emph{belief}; i.e.,
a rejected null hypothesis may be true, and a retained one may be false.
Hence, the formalization of statistical inference deals with both truth and beliefs, for which epistemic logic is suitable.

The key to formalizing statistical beliefs is to introduce a Kripke semantics with a possible world where a null hypothesis is true (\Sec{sub:assertion:semantics}).
This possible world may not be the real world where we actually apply the hypothesis test.
Notably, the $p$-value in the test is the probability defined in this possible world, and not in the real world.

Our Kripke semantics is essential for modeling the appropriate usage of hypothesis tests in the real world.
We make a distinction between 
(i) ``ideal'' possible worlds where all requirements for the hypothesis tests are satisfied and
(ii) the real world where hypothesis tests are actually conducted but their requirements may not be satisfied.
Without this distinction, we would deal with only mathematical properties of hypothesis testing methods satisfied in ``ideal'' possible worlds, and could not discuss the appropriateness of the actual application of the hypothesis tests in the real world.

By using this model, we have clarified that statistical beliefs depend on prior beliefs (\Sec{sub:prior:beliefs}).
By using the possibility modality $\Possible$,
certain requirements for hypothesis tests are formalized as \emph{prior beliefs}, which may not be true or confirmed in the real world.
For example, the choice of two-tailed or one-tailed tests depends on the prior belief that both lower-tail and upper-tail are possible before applying the test.

Finally, the update of statistical beliefs by a hypothesis test is modeled using a transition between possible worlds.
Since the world records the history of all hypothesis tests, BHL does not allow for hiding any tests to manipulate the statistics (e.g., in $p$-value hacking and in multiple comparisons in \Sec{sec:reasoning-with-BHL}).

\section{Conclusion}
\label{sec:conclude}
In this work, we proposed a new approach to formalizing and reasoning about statistical inference in programs.
Specifically, we 
introduced belief Hoare logic (BHL) for describing and checking the requirement for applying hypothesis tests appropriately.
We proved that this logic is sound and relatively complete w.r.t. the Kripke model for hypothesis tests.
Then we showed that BHL is useful for reasoning about practical issues in hypothesis tests.
In our framework, we clarified the importance of prior beliefs in acquiring statistical beliefs.
We also discussed the whole picture of the justification of statistical inference.
We emphasize that this appears to be the first attempt to introduce a program logic 
for the appropriate application of hypothesis tests.

In ongoing work, we are extending our framework to other kinds of statistical methods~\cite{Kawamoto:23:JELIA}.
We are also developing a verification tool based on this framework
using the same strategy as the existing verifiers based on Hoare logic:
(i) synthesizing a proof tree using the proof rules in Figure 2,
(ii) discovering the conditions of the form $\Gamma \models \varphi$ that must be valid for the given Hoare triple to hold, and
(iii) discharging the discovered conditions using an external solver.

\section*{Acknowledgments}
The authors are supported by ERATO HASUO Metamathematics for Systems Design Project (No. JPMJER1603), JST.
In particular, we thank Ichiro Hasuo for providing the opportunity for us to meet and collaborate in that project.
Yusuke Kawamoto is supported by JST, PRESTO Grant Number JPMJPR2022, Japan, and by JSPS KAKENHI Grant Number JP21K12028, Japan.
Tetsuya Sato is supported by JSPS KAKENHI Grant Number JP20K19775, Japan.
Kohei Suenaga is supported by JST CREST Grant Number JPMJCR2012, Japan.
We thank Kenji Fukumizu for providing helpful information on hypothesis testing.
We also thank anonymous reviewers and Kentaro Kobayashi for their useful comments on the manuscript.

\bibliography{short}

\begin{thebibliography}{}

\end{thebibliography}


\begin{thebibliography}{10}
\expandafter\ifx\csname url\endcsname\relax
  \def\url#1{\texttt{#1}}\fi
\expandafter\ifx\csname urlprefix\endcsname\relax\def\urlprefix{URL }\fi
\expandafter\ifx\csname href\endcsname\relax
  \def\href#1#2{#2} \def\path#1{#1}\fi

\bibitem{Lang:14:inbook}
T.~A. Lang, D.~G. Altman, Statistical Analyses and Methods in the Published
  Literature: The SAMPL Guidelines, John Wiley \& Sons, Ltd, 2014, Ch.~25, pp.
  264--274.
\newblock \href {https://doi.org/https://doi.org/10.1002/9781118715598.ch25}
  {\path{doi:https://doi.org/10.1002/9781118715598.ch25}}.

\bibitem{Wasserstein:16:AS}
R.~L. Wasserstein, N.~A. Lazar, The {ASA} statement on p-values: Context,
  process, and purpose, The American Statistician 70~(2) (2016) 129--133.
\newblock \href {https://doi.org/10.1080/00031305.2016.1154108}
  {\path{doi:10.1080/00031305.2016.1154108}}.

\bibitem{Kawamoto:21:KR}
Y.~Kawamoto, T.~Sato, K.~Suenaga, Formalizing statistical beliefs in hypothesis
  testing using program logic, in: Proc. {KR}'21, 2021, pp. 411--421.
\newblock \href {https://doi.org/10.24963/kr.2021/39}
  {\path{doi:10.24963/kr.2021/39}}.

\bibitem{Hoare:69:CACM}
C.~A.~R. Hoare, An axiomatic basis for computer programming, Commun. {ACM}
  12~(10) (1969) 576--580.
\newblock \href {https://doi.org/10.1145/363235.363259}
  {\path{doi:10.1145/363235.363259}}.

\bibitem{winskel}
G.~Winskel, The Formal Semantics of Programming Languages---An Introduction,
  The {MIT} Press, 1993.

\bibitem{Apt:19:FAOC}
K.~R. Apt, E.~Olderog, Fifty years of hoare's logic, Formal Aspects Comput.
  31~(6) (2019) 751--807.
\newblock \href {https://doi.org/10.1007/s00165-019-00501-3}
  {\path{doi:10.1007/s00165-019-00501-3}}.

\bibitem{separationLogic}
J.~C. Reynolds, Separation logic: {A} logic for shared mutable data structures,
  in: Proc. {LICS}'02, {IEEE} Computer Society, 2002, pp. 55--74.

\bibitem{infinitesimalKSUenaga}
K.~Suenaga, I.~Hasuo, Programming with infinitesimals: {A} while-language for
  hybrid system modeling, in: Proc. {ICALP}'11, Part {II}, Vol. 6756 of LNCS,
  Springer, 2011, pp. 392--403.
\newblock \href {https://doi.org/10.1007/978-3-642-22012-8\_31}
  {\path{doi:10.1007/978-3-642-22012-8\_31}}.

\bibitem{hoareForProbability}
J.~den Hartog, E.~P. de~Vink, Verifying probabilistic programs using a {H}oare
  like logic, Int. J. Found. Comput. Sci. 13~(3) (2002) 315--340.
\newblock \href {https://doi.org/10.1142/S012905410200114X}
  {\path{doi:10.1142/S012905410200114X}}.

\bibitem{epistemicHoareLogic}
E.~Atkinson, M.~Carbin, Programming and reasoning with partial observability,
  Proc. {ACM} Program. Lang. 4~({OOPSLA}) (2020) 200:1--200:28.
\newblock \href {https://doi.org/10.1145/3428268} {\path{doi:10.1145/3428268}}.

\bibitem{vonWright:51:book}
G.~H. von Wright, An Essay in Modal Logic, Amsterdam: North-Holland Pub. Co.,
  1951.

\bibitem{Hintikka:62:book}
J.~Hintikka, Knowledge and Belief: An Introduction to the Logic of the Two
  Notions, Cornell University Press, 1962.

\bibitem{Fagin:95:book}
R.~Fagin, J.~Halpern, Y.~Moses, M.~Vardi, Reasoning about Knowledge, The MIT
  Press, 1995.

\bibitem{Burrows:90:TOCS}
M.~Burrows, M.~Abadi, R.~M. Needham, A logic of authentication, {ACM} Trans.
  Comput. Syst. 8~(1) (1990) 18--36.
\newblock \href {https://doi.org/10.1145/77648.77649}
  {\path{doi:10.1145/77648.77649}}.

\bibitem{Syverson:99:FM}
P.~F. Syverson, S.~G. Stubblebine, Group principals and the formalization of
  anonymity, in: World Congress on Formal Methods (1), 1999, pp. 814--833.
\newblock \href {https://doi.org/10.1007/3-540-48119-2\_45}
  {\path{doi:10.1007/3-540-48119-2\_45}}.

\bibitem{Garcia:05:FMSE}
F.~D. Garcia, I.~Hasuo, W.~Pieters, P.~van Rossum, Provable anonymity, in:
  Proc. {FMSE}, 2005, pp. 63--72.
\newblock \href {https://doi.org/10.1145/1103576.1103585}
  {\path{doi:10.1145/1103576.1103585}}.

\bibitem{Halpern:03:book}
J.~Y. Halpern, Reasoning about uncertainty, The MIT press, 2003.

\bibitem{Huber:08:book}
F.~Huber, C.~Schmidt-Petri, Degrees of belief, Vol. 342, Springer Science \&
  Business Media, 2008.

\bibitem{Bacchus:99:AI}
F.~Bacchus, J.~Y. Halpern, H.~J. Levesque, Reasoning about noisy sensors and
  effectors in the situation calculus, Artif. Intell. 111~(1-2) (1999)
  171--208.
\newblock \href {https://doi.org/10.1016/S0004-3702(99)00031-4}
  {\path{doi:10.1016/S0004-3702(99)00031-4}}.

\bibitem{Kawamoto:19:FC}
Y.~Kawamoto, Statistical epistemic logic, in: The Art of Modelling
  Computational Systems: {A} Journey from Logic and Concurrency to Security and
  Privacy, Vol. 11760 of LNCS, Springer, 2019, pp. 344--362.
\newblock \href {https://doi.org/10.1007/978-3-030-31175-9\_20}
  {\path{doi:10.1007/978-3-030-31175-9\_20}}.

\bibitem{Kawamoto:19:SEFM}
Y.~Kawamoto, \href{https://arxiv.org/pdf/1907.10327}{Towards logical
  specification of statistical machine learning}, in: Proc. {SEFM}, 2019, pp.
  293--311.
\newblock \href {https://doi.org/10.1007/978-3-030-30446-1\_16}
  {\path{doi:10.1007/978-3-030-30446-1\_16}}.
\newline\urlprefix\url{https://arxiv.org/pdf/1907.10327}

\bibitem{Kawamoto:20:SoSyM}
Y.~Kawamoto, An epistemic approach to the formal specification of statistical
  machine learning, Software and Systems Modeling 20~(2) (2020) 293--310.
\newblock \href {https://doi.org/10.1007/s10270-020-00825-2}
  {\path{doi:10.1007/s10270-020-00825-2}}.

\bibitem{Van:07:book-dynamic}
H.~Van~Ditmarsch, W.~van Der~Hoek, B.~Kooi, Dynamic epistemic logic, Vol. 337,
  Springer Science \& Business Media, 2007.

\bibitem{Zadeh:65:IC}
L.~Zadeh, Fuzzy sets, Information and Control 8~(3) (1965) 338--353.
\newblock \href {https://doi.org/https://doi.org/10.1016/S0019-9958(65)90241-X}
  {\path{doi:https://doi.org/10.1016/S0019-9958(65)90241-X}}.

\bibitem{Nguyen:18:book}
H.~T. Nguyen, C.~L. Walker, E.~A. Walker, A First Course in Fuzzy Logic, 4th
  Edition, Chapman \& Hall/CRC, 2018.

\bibitem{Eberhart:21:TAP}
C.~Eberhart, A.~Yamada, S.~Klikovits, S.~Katsumata, T.~Kobayashi, I.~Hasuo,
  F.~Ishikawa, Architecture-guided test resource allocation via logic, in:
  Proc. {TAP}'21, Vol. 12740 of LNCS, Springer, 2021, pp. 22--38.
\newblock \href {https://doi.org/10.1007/978-3-030-79379-1\_2}
  {\path{doi:10.1007/978-3-030-79379-1\_2}}.

\bibitem{Reiter:80:AIJ}
R.~Reiter, A logic for default reasoning, Artif. Intell. 13~(1-2) (1980)
  81--132.
\newblock \href {https://doi.org/10.1016/0004-3702(80)90014-4}
  {\path{doi:10.1016/0004-3702(80)90014-4}}.

\bibitem{Kyburg:02:NMR}
H.~E.~K. Jr., C.~Teng, Evaluating defaults, in: Proc. the 9th International
  Workshop on Non-Monotonic Reasoning ({NMR} 2002), 2002, pp. 257--264.

\bibitem{Kyburg:06:CI}
H.~E.~K. Jr., C.~Teng, Nonmonotonic logic and statistical inference, Comput.
  Intell. 22~(1) (2006) 26--51.
\newblock \href {https://doi.org/10.1111/j.1467-8640.2006.00272.x}
  {\path{doi:10.1111/j.1467-8640.2006.00272.x}}.

\bibitem{Kyburg:99:IJPRAI}
H.~E.~K. Jr., C.~Teng, Statistical inference as default reasoning, Int. J.
  Pattern Recognit. Artif. Intell. 13~(2) (1999) 267--283.
\newblock \href {https://doi.org/10.1142/S021800149900015X}
  {\path{doi:10.1142/S021800149900015X}}.

\bibitem{Fagin:95:PODC}
R.~Fagin, J.~Y. Halpern, Y.~Moses, M.~Y. Vardi, Knowledge-based programs, in:
  Proc. {PODC}'95, {ACM}, 1995, pp. 153--163.
\newblock \href {https://doi.org/10.1145/224964.224982}
  {\path{doi:10.1145/224964.224982}}.

\bibitem{Laverny:05:synthese}
N.~Laverny, J.~Lang, From knowledge-based programs to graded belief-based
  programs, part {I:} on-line reasoning\({}^{\mbox{*}}\), Synth. 147~(2) (2005)
  277--321.
\newblock \href {https://doi.org/10.1007/s11229-005-1350-1}
  {\path{doi:10.1007/s11229-005-1350-1}}.

\bibitem{Sardina:09:PROMAS}
S.~Sardi{\~{n}}a, Y.~Lesp{\'{e}}rance, Golog speaks the {BDI} language, in:
  Proc. {ProMAS}'09, Vol. 5919 of LNCS, Springer, 2009, pp. 82--99.
\newblock \href {https://doi.org/10.1007/978-3-642-14843-9\_6}
  {\path{doi:10.1007/978-3-642-14843-9\_6}}.

\bibitem{Levesque:97:JLP}
H.~J. Levesque, R.~Reiter, Y.~Lesp{\'{e}}rance, F.~Lin, R.~B. Scherl, {GOLOG:}
  {A} logic programming language for dynamic domains, J. Log. Program. 31~(1-3)
  (1997) 59--83.
\newblock \href {https://doi.org/10.1016/S0743-1066(96)00121-5}
  {\path{doi:10.1016/S0743-1066(96)00121-5}}.

\bibitem{Bratman:87:book}
M.~Bratman, Intention, plans, and practical reason (1987).

\bibitem{BelleL:15:IJCAI}
V.~Belle, H.~J. Levesque, {ALLEGRO:} belief-based programming in stochastic
  dynamical domains, in: Proc. {IJCAI} 2015, {AAAI} Press, 2015, pp.
  2762--2769.

\bibitem{Hogg:04:book:ims}
R.~V. Hogg, J.~W. McKean, A.~T. Craig, Introduction to Mathematical Statistics,
  Prentice Hall, 2004.

\bibitem{Kanji:06:book:100stat}
G.~K. Kanji, 100 statistical tests, Sage, 2006.

\bibitem{Bretz:10:book}
F.~Bretz, T.~Hothorn, P.~Westfall, Multiple Comparisons Using R, Chapman and
  Hall/CRC, 2010.
\newblock \href {https://doi.org/10.1201/9781420010909}
  {\path{doi:10.1201/9781420010909}}.

\bibitem{10.5555/1296072}
H.~R. Nielson, F.~Nielson, Semantics with Applications: An Appetizer
  (Undergraduate Topics in Computer Science), Springer-Verlag, 2007.

\bibitem{Cook:78:siamcomp}
S.~A. Cook, Soundness and completeness of an axiom system for program
  verification, {SIAM} J. Comput. 7~(1) (1978) 70--90.
\newblock \href {https://doi.org/10.1137/0207005} {\path{doi:10.1137/0207005}}.

\bibitem{Platzer:18:book}
A.~Platzer, Logical Foundations of Cyber-Physical Systems, Springer, 2018.
\newblock \href {https://doi.org/10.1007/978-3-319-63588-0}
  {\path{doi:10.1007/978-3-319-63588-0}}.

\bibitem{Hahnle:19:TCSGI}
R.~H{\"{a}}hnle, M.~Huisman, Deductive software verification: From
  pen-and-paper proofs to industrial tools, in: Computing and Software Science
  - State of the Art and Perspectives, Vol. 10000 of Lecture Notes in Computer
  Science, Springer, 2019, pp. 345--373.
\newblock \href {https://doi.org/10.1007/978-3-319-91908-9\_18}
  {\path{doi:10.1007/978-3-319-91908-9\_18}}.

\bibitem{Kripke:62:MLQ}
S.~A. Kripke, The undecidability of monadic modal quantification theory,
  Mathematical Logic Quarterly 8~(2) (1962) 113--116.
\newblock \href {https://doi.org/10.1002/malq.19620080204}
  {\path{doi:10.1002/malq.19620080204}}.

\bibitem{Hughes:96:book-modal-logic}
G.~E. Hughes, M.~J. Cresswell, A new introduction to modal logic, Psychology
  Press, 1996.

\bibitem{Sober:08:book}
E.~Sober, Evidence and evolution: The logic behind the science, Cambridge
  University Press, 2008.

\bibitem{Kawamoto:23:JELIA}
Y.~Kawamoto, T.~Sato, K.~Suenaga, Formalizing statistical causality via modal
  logic, in: Proc. {JELIA} 2023, Vol. 14281 of Lecture Notes in Computer
  Science, Springer, 2023, pp. 681--696.
\newblock \href {https://doi.org/10.1007/978-3-031-43619-2\_46}
  {\path{doi:10.1007/978-3-031-43619-2\_46}}.

\bibitem{Neyman:33:RS}
J.~Neyman, E.~S. Pearson, On the problem of the most efficient tests of
  statistical hypotheses, Philosophical Transactions of the Royal Society of
  London. Series A, Containing Papers of a Mathematical or Physical Character
  231 (1933) 289--337.

\end{thebibliography}

\appendix
\section{Instantiation to Concrete Testing Methods}
\label{sub:instantiation}

The derived rules for hypothesis tests in \Fig{fig:derived:rules:HT} are instantiated with concrete examples of tests given in standard textbooks on statistics (e.g.,~\cite{Hogg:04:book:ims}) as follows.

\begin{example}[Two-tailed $Z$-test]\label{eg:reasoning:Z-test}
We recall the two-tailed $Z$-test 
$\algp{\phi_0} = (\phi_0, \allowbreak \Test, \Normal(0, 1), \allowbreak \unlikelys{\sideT}\!, \Normal(\mu_{\ppl1},\sigma^2)\times \Normal(\mu_{\ppl2},\sigma^2))$
in Example \ref{eg:Z-test:likeliness}.
By applying the derived rule (\axTT{}), the procedure of this test with beliefs is expressed as the valid BHL judgment:
\begin{align}
 (\envI, \envO)  \vdash \,
  &\{
  \Possible (\mu_{\ppl1} < \mu_{\ppl2}) \land \Possible (\mu_{\ppl1} > \mu_{\ppl2}) \land \kappaX{\emptyset}
  \} \notag \\ 
  &\qquad \alpha := \Pr_{\tstat \sim \Normal(0,1)}\!\Big[\, |\tstat| \geq  \Big| {\textstyle\frac{\mean(y_1) - \mean(y_2)}{\sigma \sqrt{\nicefrac{1}{\size{y_1}} + \nicefrac{1}{\size{y_2}}}}} \Big| \,\Big] 
  \label{eq:example:ztest:teststatistic}\\
  &\{
  \KnowXx{\alpha}{y,\algp{\phi_0}} (\mu_{\ppl1} \neq \mu_{\ppl2}) \land \Possible (\mu_{\ppl1} \neq \mu_{\ppl2})
  \land \kappaX{y,\algp{\phi_0}}
  \} \notag.
\end{align}
Precisely, the $p$-value $\Pr_{\tstat \sim \Normal(0,1)}[\ldots]$ in \eqref{eq:example:ztest:teststatistic} is given by the procedure $f_{\!\alg_{\phi_0}}(y)$. 
\end{example}
We next show the instantiation to the classical likelihood ratio test with 
a simple null hypothesis $\xi=\xi_0$ and a simple alternative hypothesis $\xi=\xi_1$ (thus we suppose $\xi_0 \neq \xi_1$),
namely, in the setting of the Neyman-Pearson lemma~\cite{Neyman:33:RS}.
\begin{example}[Likelihood ratio test]\label{eg:reasoning:likelihood-test}
The goal of the likelihood ratio test is to determine which of two candidate distributions $D_p, D_q \in \Dists\reals$ is better to fit a dataset $y=(y_1, \ldots, y_{n})$ of sample size $n$.
The alternative hypothesis $\phiL \eqdef (\xi = \xi_1)$ (resp. the null hypothesis $\xi = \xi_0$) represents that the actual distribution is $D_p$ (resp. $D_q$).

To apply this test, we are expected to have the prior knowledge $\Know (\xi = \xi_0 \lor \xi = \xi_1)$ that $\xi$ is either $\xi_0$ or $\xi_1$.
Let $\phiU \eqdef (\xi \neq \xi_0 \land \xi \neq \xi_1)$.
Then the prior knowledge is denoted by $\Know \neg \phiU$, which is logically equivalent to $\neg \Possible \phiU$.

Formally, this test is denoted by $\algp{\phi_0} 
= (\phi_0, \allowbreak \Test, \Dthp, \allowbreak \unlikelys{\sideL}\!, P(\xi))$
such that: 
\begin{gather*}
\phi_0 \eqdef (\xi = \xi_0),\quad
\phiL \eqdef (\xi = \xi_1),\quad
\unlikelys{\mathsf{L}} \eqdef \{ (r, r')\in\reals\times\reals ~|~  r \leq r' \},\\
\Test(y)\eqdef  \frac{\prod_{i = 1}^{n} q(y_i)}{\prod_{i = 1}^{n} p(y_i)},\quad
P(\xi) \eqdef \begin{cases}
                D_q & (\xi = \xi_0)\\[-0.5ex]
                D_p & (\xi = \xi_1)
             \end{cases},
             \quad
\Dthp \eqdef \frac{\prod_{i = 1}^{n} q(D_q)}{\prod_{i = 1}^{n} p(D_q)}
\end{gather*}
where $p$ and $q$ are the density functions of $D_p$ and $D_q$, respectively. 
The probability distributions $p(D_q)$ and $q(D_q)$ are the push-forward measures of $D_q$ along $p$ and $q$ respectively.
The likelihood function $L$ is defined by 
$L(y|\xi_0) = {\textstyle \prod_{i = 1}^{n} q(y_i)}$ and $L(y|\xi_1) = {\textstyle \prod_{i = 1}^{n} p(y_i)}$, and the test statistic
$\Test(y)$ is called the \emph{likelihood ratio}.
 
In the likelihood ratio test, for a given $p$-value $\alpha$ and a threshold $k$ such that $\Pr_{d_1,\ldots,d_{n} \sim D_q}[\Test((d_1,\ldots,d_{n})) \leq k] \leq \alpha$, 
if we have $\Test(y) \leq k$,
the likelihood $L(y|\xi_0)$ is too small to accept the distribution $D_q$.
We then conclude that the other candidate $D_p$ is better to fit $y$ (thus this test is lower-tailed).
The $p$-value of this test is given by:
\begin{equation}\label{eq:sem:HT:instance:1}
    \Pr_{d_1,\ldots,d_{n} \sim D_q}[\Test((d_1,\ldots,d_{n})) \leq \Test(y)].
\end{equation}
By instantiating the $p$-value $\sem{f_{\algp{\phi_0}}(y)}$, we obtain (\ref{eq:sem:HT:instance:1}).
By applying the derived rule ($\axLT{}$), we obtain a valid BHL judgment corresponding the likelihood ratio test:
\begin{align*}
 (\envI, \envO)  \vdash \,
  &\{
  \Possible (\xi = \xi_1) \land \neg \Possible (\xi \neq \xi_0 \land \xi \neq \xi_1) \land \kappaX{\emptyset}
  \}\\ 
  &\qquad \alpha := \Pr_{d_1,\ldots,d_{n} \sim D_q}[\Test((d_1,\ldots,d_{n})) \leq \Test(y)]
  \\
  &\{
  \KnowXx{\alpha}{y,\algp{\phi_0}}
  \land \Possible (\xi = \xi_1)
  \land \kappaX{y,\algp{\phi_0}}
  \}.
\end{align*}
\end{example}

We can deal with Bayesian hypothesis tests in an analogous way.

\begin{example}[Bayesian hypothesis test]\label{eg:reasoning:Bayesian-test}
Consider the Bayesian likelihood ratio test with a dataset $y$ of sample size $n$,
prior distributions $D_{p'}, D_{q'} \in \Dists\reals$ 
with density functions $p'$ and $q'$, and 
posterior distributions $D_{p(z)}, D_{q(z)} \in \Dists\reals$
with density functions $p(-|z)$ and $q(-|z)$.
The goal of this test is to determine whether
the dataset $y$ is sampled from $D_{q(z)}$ where $z$ follows $D_{q'}$.
The alternative hypothesis $\xi = \xi_1$ (resp. the null hypothesis $\xi = \xi_0$) is that $y$ is sampled from $D_{q(z)}$ where $z$ follows $D_{q'}$ (resp. from $D_{p(z)}$ where $z$ follows $D_{p'}$).
As with \Eg{eg:reasoning:likelihood-test}, this test requires the prior knowledge $\Know (\xi = \xi_0 \lor \xi = \xi_1)$.

We first define the following statistical model with the parameter $\xi$.
\begin{align*}
(P(\xi))(S) \eqdef& \int_{\reals} P_1(\xi, z)(S)~dP_0(\xi)(z) \quad(S \subseteq \reals \colon \text{measurable})\\
&\text{ where }~~ P_0(\xi) =
        \begin{cases}
            D_{q'} & (\xi = \xi_0)\\
            D_{p'} & (\xi = \xi_1)
        \end{cases}, \quad
P_1(\xi, z) =
        \begin{cases}
            D_{q(z)} & (\xi = \xi_0)\\
            D_{p(z)} & (\xi = \xi_1)
        \end{cases}{.}
\end{align*}
In this definition, $P_0(\xi)$ and $P_1(\xi,z)$ are prior and posterior distributions, and 
$P(\xi)$ is the distribution of $y$ sampled from $P_1(\xi,z)$ where $z$ follows $P_0(\xi)$.

This hypothesis test can be denoted by $\algp{\phi_0} 
= (\phi_0, \allowbreak \Test, \Dthp, \allowbreak \unlikelys{\sideL}\!, P(\xi))$
where:
\begin{gather*}
\phi_0 \eqdef(\xi = \xi_0), \quad
\phiL \eqdef (\xi = \xi_1), \quad
\unlikelys{\mathsf{L}} \eqdef \{ (r, r')\in\reals\times\reals ~|~  r \leq r' \}\\
\Test(y) \eqdef \tfrac{\textstyle  \int q'(z)\prod_{i = 1}^{n} q(y_i|z) dz }{\textstyle \int p'(z)\prod_{i = 1}^{n} p(y_i|z) dz}, \quad
\Dthp \eqdef \tfrac{\textstyle  \int q'(z)\prod_{i = 1}^{n} q(D_{q(z)}|z) dz }{\textstyle \int p'(z)\prod_{i = 1}^{n} p(D_{q(z)}|z) dz}.
\end{gather*}
Unlike the (classical) likelihood ratio test, the test statistic $\Test(y)$ is the \emph{Bayes factor}, that is, the ratio of the marginal likelihoods
$L(y|\xi_0) = {\textstyle\int q'(z)\prod_{i = 1}^{n} q(y_i|z) dz}$ and
$L(y|\xi_1) = {\textstyle \int p'(z)\prod_{i = 1}^{n} p(y_i|z) dz}$.

As with the likelihood ratio test, we obtain a valid BHL judgment for the Bayesian hypothesis test by applying the derived rule ($\axLT{}$).
\end{example}

\section{Proofs for Technical Results}
\label{sec:proofs}

In \App{sub:proof:assertions}, we prove the propositions on statistical beliefs.
In \App{sub:proofs:semantics:basic}, we show basic results on structural operational semantics.
In \App{sub:appendix:parallel}, we show remarks on parallel compositions.
In \App{sub:proof:soundness}, we prove BHL's soundness.
In \App{sub:proof:complete}, we show BHL's relative completeness.
In \Tbl{tab:symbols:syn} and \ref{tab:symbols:sem}, we recall notations used in this paper.

\begin{table}[t]
\begin{minipage}[t]{.48\textwidth}
   \begin{center}
   \caption{Notations for syntax.}
   \label{tab:symbols:syn}
   \begin{small}
   \begin{tabular}{@{} cl @{}}
      \toprule
      \cmidrule(r){1-2}
      Symbols & Descriptions \\
      \midrule
      $y$		& Dataset \\
      $\alpha,\err$	& $p$-value \\
      $h_{y,\alg}$ & History variable on a test $\alg$ on $y$ \\
      $f$		& Function symbol \\
      $f_{\!\alg}$	& Procedure for a test $\alg$ \\
      $C$		& Program \\
      $\sampled{y}{x}{n}$ & $y$ is $n$ data sampled from $x$ \\
      $\followed{y}{x}$ & $y$ is sampled from $x$ \\
      $\phiT, \phiU, \phiL$	& Alternative hypotheses \\
      $\nuh \phiT, \nuh \phiU, \nuh \phiL$\hspace{-2ex}	& Null hypotheses \\
      $\KnowBowEt \phi$ & Statistical belief on $\phi$ \\
      \bottomrule
  \end{tabular}
  \end{small}
  \end{center}
\end{minipage}
\hfill
\begin{minipage}[t]{.48\textwidth}
   \begin{center}
   \caption{Notations for semantics.}
   \label{tab:symbols:sem}
   \begin{small}
   \begin{tabular}{@{} cl @{}}
      \toprule
      \cmidrule(r){1-2}
      Symbols \hspace{-2ex} & Descriptions \\
      \midrule
      $\calp(\cals)$	& All multisets over a set $\cals$ \\
      $\Dists\cals$	& All distributions over a set~$\cals$ \\
      $\M$	& Kripke model \\
      $w$		& Possible world \\
      $\sigmaw$	& Memory in a world $w$ \\
      $\testHist_{w}$	& Test history in a world $w$ \\
      $\calo$	& All data values \\
      $a$	& Action \\
      $\alg$	& Hypothesis test \\
      $\Test(y)$	& Test statistic of a dataset $y$ \\
      $\unlikelyst{\side}{\Test}$	& Likeliness relation \\
      \bottomrule
  \end{tabular}
  \end{small}
  \end{center}
\end{minipage}
\end{table}

\subsection{Proof for Properties of Statistical Beliefs}
\label{sub:proof:assertions}

We show the proof for Proposition~\ref{prop:PriorBelief} as follows.
\begin{proof}
\begin{enumerate}
\item The claim is clear from 
$\phiT \eqdef \phiU \lor \phiL$ and $\nuh \phiT \eqdef \neg \phiU \land \neg \phiL$.
\item 
The claim is shown as follows.
\begin{align*}
\models \Know (\phiU \lor \nuh \phiU)
\,\Leftrightarrow~ &
\models \Know (\phiU \lor (\neg \phiU \land \neg \phiL))
& \mbox{ (by $\nuh \phiU \eqdef \neg \phiU \land \neg \phiL$)}
\\ \,\Leftrightarrow~ &
\models \Know (\phiU \lor \neg \phiL)
\\ \,\Leftrightarrow~ &
\models \Know \neg \phiL
& \mbox{ (by $\models \phiU \rightarrow \neg \phiL$)}
\end{align*}
\item
The proof of this claim is analogous to that of the second claim.
\end{enumerate}%
\end{proof}

We show the proof for Proposition~\ref{prop:StatBelief} as follows.
\begin{proof}
\begin{enumerate}
\item {\rm (\propNu)} is straightforward from \eqref{eq:NegA} and \eqref{eq:sem:HT}.
\item We show {\rm (\propSBfour)} as follows.
Let $w$ be a world such that $w \models \KnowBowEt\phi$.
Then $w \models \Know (\phi \lor \TestBowE{y,\alg}(\err) \lor \neg \cmpds{y,\alg})$.
By the axiom (5) of $\Know$, we obtain 
$w \models \Know\Know(\phi \lor \TestBowE{y,\alg}(\err) \lor \neg \cmpds{y,\alg})$, 
hence $w \models \Know\KnowBowEt\phi$.
\item We show {\rm (\propSBfive)} as follows.
Let $w$ be a world such that $w \models \PossibleBowEt\phi$.
Then $w \models \Possible (\phi  \land  \neg\TestBowE{y,\alg}(\err) \land \cmpds{y,\alg})$.
By the axiom (4) of $\Know$, we obtain 
$w \models \Know\Possible (\phi  \land \neg \TestBowE{y,\alg}(\err) \land \cmpds{y,\alg})$, 
and thus $w \models \Know\PossibleBowEt\phi$.
\item We show {\rm (\propSBk)} as follows.
Let $w$ be a world such that $w \models \Know\phi$.
Since $\KnowBowEt \phi$ is defined by $\Know( \phi \lor \TestBowE{y,\alg}(\err) \lor \neg\cmpds{y,\alg})$,
we obtain $w \models \KnowBowEt\phi$.
Therefore $\models \Know\phi \rightarrow \KnowBowEt\phi$.
\item We show {\rm (\propSBdeg)} as follows.
Recall that $\KnowE$ is the abbreviation for $\KnowXx{=\err}{y,\alg}$.
Assume that $\err \le \err'$.
Then we obtain the following formulas on the strength of confidence levels:
$\models \TestA{y,\alg}(\err) \rightarrow \TestA{y,\alg}(\err')$.
By definition, we obtain
$\models \KnowE\phi \rightarrow \KnowXx{\err'}{y,\alg}\phi$
and $\models \PossibleXx{\err'}{y,\alg}\phi \rightarrow \PossibleE\phi$.
\item 
The claim {\rm (\propSBf)} is immediate from the definition of $\KnowE$.
\end{enumerate}
\end{proof}

We show the proof for Proposition~\ref{prop:StatBeliefHT} as follows.

\begin{proof}
\begin{enumerate}
\item We show the direction from left to right in {\rm (\propBHk)} as follows.
Let $w$ be a world such that $w \models \kappaX{S'}$.
By $(w, w) \in \calr$, we have $w \models \Possible \kappaX{S'}$.
Let $w'$ be a world such that $(w, w') \in \calr$.
Since the test history is observable, $\testHist_{w'} = \testHist_{w}$; hence $w' \models \kappaX{S'}$.
Thus, $w \models \Know\kappaX{S'}$.
The other direction can also be shown straightforwardly.

\item We show {\rm (\propBHT)} as follows.
Let $w$ be a world such that 
$w \models \kappaYA{y}$.
Let $w'$ be a world such that $w \relO w'$.
By definition, we have $\obs(w) = \obs(w')$, hence $\testHist_{w} = \testHist_{w'}$.
Then, by 
$w \models \kappaYA{y}$, we obtain 
$w' \models \kappaYA{y}$.
By {\rm (\propNu)},
we have $w' \models \NegA{y,\alg}(f_{\alg}(y))$.
Then by $w' \models \kappaYA{y}$, we obtain
$w' \models \phi \lor \TestA{y,\alg}(f_{\alg}(y)) \lor \neg \cmpds{y,\alg}$.
Therefore, we have $w \models \Know (\phi \lor \TestA{y,\alg}(f_{\alg}(y)) \lor \neg \cmpds{y,\alg})$, namely,  
$w \models \KnowXt{f_{\alg}(y)} \phi$.

\item We show {\rm (\propBHTor)} as follows.
Let $\alpha_1 = f_{\alg_1}(y_1)$, $\alpha_2 = f_{\alg_2}(y_2)$, and $\epsilon = \alpha_1 + \alpha_2$.
Let $w$ be a world such that $w \models \kappaS$.
Let $w'$ be a world such that $w \relO w'$.
By definition, we have $\obs(w) = \obs(w')$, hence $\testHist_{w} = \testHist_{w'}$.
Then, by $w \models \kappaS$, we obtain $w' \models \kappaS$.
Thus $\testHist_{w'} = \{ \mem_{w'}(y_1) \mapsto \{ \alg_1 \},\, \mem_{w'}(y_2) \mapsto \{ \alg_2 \} \}$.

Now we show $w' \models \KnowXx{\le\epsilon}{(y_1,y_2),\alg} (\phi_1\lor\phi_2)$ as follows.
For $i = 1, 2$, we denote by
$\alg_i = (\nuh \phi_i, \Test_i, \Dtix{\nuh \phi_i}, \unlikelyst{\side_i}{\Test_i}, \popl_{i})$
the hypothesis test with the null hypothesis $\nuh \phi_i$.
For each $i = 1, 2$, by the definition of $f_{\alg_i}$, we have the statistical belief that the alternative hypothesis $\phi_i$ is true with the significance level $\alpha_i$; i.e.,
\begin{align}\label{eq:proof:belief:alpha:dis}
\Pr_{\tstat \sim \Dtix{\nuh \phi_i}}[\, \tstat \unlikely_{\Test_i}^{(\side_i)} \Test_i(\sigmaw(y_i)) \,] = \alpha_i.
\end{align}

By $\obs(w) = \obs(w')$ and $y_1,y_2\in\varO$,
\begin{align}\label{eq:database:unchanged:dis}
\sigmawp(y_1) = \sigmaw(y_1) \mbox{ and } \sigmawp(y_2) = \sigmaw(y_2).
\end{align}

Recall that
the disjunctive combination is
$\alg = (\nuh\, (\phi_1{\lor} \phi_2), \Test, D,  \unlikelyst{\side_1, \side_2}{\Test}, \popl)$
where $\Test(y_1,y_2)=(\Test_1(y_1), \Test_2(y_2))$,
$D$ is a coupling of $\Dtax{\nuh \phi_1}$ and $\Dtbx{\nuh \phi_2}$, and
$(\tstat_1,\tstat_2) \unlikelyst{\side_1,\side_2}{\Test} (\tstat'_1,\tstat'_2)$ iff
either $\tstat_1 {\unlikelyst{\side_1}{\Test_1}} \tstat'_1$ or $\tstat_2 {\unlikelyst{\side_2}{\Test_2}} \tstat'_2$.
Then we obtain:
\begin{align*}
&~~~
\hspace{1ex}\Pr_{(\tstat_1,\tstat_2) \sim D}[\, (\tstat_1,\tstat_2) \unlikely_{\Test}^{(\side_1,\side_2)} \Test(\sigmawp(y_1),\sigmawp(y_2)) \,]
\\ &=
\Pr_{(\tstat_1,\tstat_2) \sim D}[\, \tstat_1 \unlikely_{\Test_1}^{(\side_1)} \Test_1(\sigmawp(y_1)) \lor \tstat_2 \unlikely_{\Test_2}^{(\side_2)} \Test_2(\sigmawp(y_2)) \,] 
\hspace{2ex}\text{(by definition)}
\\ &\le
\hspace{-2ex}\Pr_{\hspace{2ex}\tstat_1 \sim \Dtax{\nuh \phi_1}}\hspace{-2.5ex}[\, \tstat_1\unlikely_{\Test_1}^{(\side_1)} \Test_1(\sigmawp(y_1)) \,] +
\hspace{-2ex}\Pr_{\hspace{2ex}\tstat_2 \sim \Dtbx{\nuh \phi_2}}\hspace{-2.5ex}[\, \tstat_2\unlikely_{\Test_2}^{(\side_2)} \Test_2(\sigmawp(y_2)) \,] 
\\ &= \alpha_1 + \alpha_2
\hspace{35ex}
\text{(by Equations \eqref{eq:database:unchanged:dis}, \eqref{eq:proof:belief:alpha:dis})}
\\ &= \epsilon
{.}
\end{align*}
Recall that $y = (y_1, y_2)$.
Then $w' \models \NegLeE{y,\alg}(\epsilon)$.
Hence by $w' \models \kappaS$, we obtain
$w' \models \phi \lor \TestLeE{y,\alg}(\epsilon)$.
Therefore, $w \models \KnowXx{\le \epsilon}{y, \alg} (\phi_1\lor\phi_2)$.

\item We show {\rm (\propBHTand)} as follows.
Let $\alpha_1 = f_{\alg_1}(y_1)$, $\alpha_2 = f_{\alg_2}(y_2)$, and $\epsilon' \eqdef \min(f_{\alg_1}(y_1),\allowbreak f_{\alg_2}(y_2))$.
Let $w$ be a world such that $w \models \kappaS$.
Let $w'$ be a world such that $w \relO w'$.
By definition, we have $\obs(w) = \obs(w')$, hence $\testHist_{w} = \testHist_{w'}$.
Then, by $w \models \kappaS$, we obtain $w' \models \kappaS$.
Thus $\testHist_{w'} = \{ \mem_{w'}(y_1) \mapsto \{ \alg_1 \},\, \mem_{w'}(y_2) \mapsto \{ \alg_2 \} \}$.

Now we show $w' \models \KnowXx{\le\epsilon'}{(y_1,y_2), \alg} (\phi_1\land\phi_2)$ as follows.
For $i = 1, 2$, we denote by
$\alg_i = (\nuh \phi_i, \Test_i, \Dtix{\nuh \phi_i}, \unlikelyst{\side_i}{\Test_i}, \popl_{i})$
the hypothesis test with the null hypothesis $\nuh \phi_i$.
For each $i = 1, 2$, by the definition of $f_{\alg_i}$, we have the statistical belief that the alternative hypothesis $\phi_i$ is true with the significance level $\alpha_i$; i.e.,
\begin{align}\label{eq:proof:belief:alpha:con}
\Pr_{\tstat \sim \Dtix{\nuh \phi_i}}[\, \tstat \unlikely_{\Test_i}^{(\side_i)} \Test_i(\sigmaw(y_i)) \,] = \alpha_i.
\end{align}

By $\obs(w) = \obs(w')$ and $y_1,y_2\in\varO$,
\begin{align}\label{eq:database:unchanged:con}
\sigmawp(y_1) = \sigmaw(y_1) \mbox{ and } \sigmawp(y_2) = \sigmaw(y_2).
\end{align}

Recall that
the conjunctive combination is
$\alg = (\nuh\, (\phi_1{\land} \phi_2), \Test, D,  \unlikelyst{\side_1,\side_2}{\Test}, \popl)$
where $\Test(y_1,y_2)=(\Test_1(y_1), \Test_2(y_2))$,
$D$ is a coupling of $\Dtax{\nuh \phi_1}$ and $\Dtbx{\nuh \phi_2}$, and
$(\tstat_1,\tstat_2) \unlikelyst{\side_1,\side_2}{\Test} (\tstat'_1,\tstat'_2)$ iff
$\tstat_1 \unlikelyst{\side_1}{\Test_1} \tstat'_1$ and $\tstat_2 \unlikelyst{\side_2}{\Test_2} \tstat'_2$.
Then we obtain:
\begin{align*}
\alpha &\eqdef
\hspace{0ex}\Pr_{(\tstat_1,\tstat_2) \sim D}[\, (\tstat_1,\tstat_2) \unlikely_{\Test}^{(\side_1,\side_2)} \Test(\sigmawp(y_1),\sigmawp(y_2)) \,] 
\\ &=
\hspace{0ex}\Pr_{(\tstat_1,\tstat_2) \sim D}[\, \tstat_1 \unlikely_{\Test_1}^{(\side_1)} \Test_1(\sigmawp(y_1)) \land \tstat_2 \unlikely_{\Test_2}^{(\side_2)} \Test_2(\sigmawp(y_2)) \,] 
\hspace{1ex}
\text{(by definition)}
\\ &\le
\hspace{0ex}\Pr_{(\tstat_1,\tstat_2) \sim D}[\, \tstat_1 \unlikely_{\Test_1}^{(\side_1)} \Test_1(\sigmawp(y_1)) \,] 
\\ &= \alpha_1
\hspace{39.3ex}
\text{(by Equations \eqref{eq:database:unchanged:con},\eqref{eq:proof:belief:alpha:con})}
{.}
\end{align*}
A similar inequality holds for $\alpha_2$.
Thus 
$\alpha \le \min(\alpha_1, \alpha_2) = \epsilon'$.
Recall that $y = (y_1, y_2)$.
Then we have $w' \models \NegLeE{y,\alg}(\epsilon')$.
By $w' \models \kappaS$, we obtain
$w' \models \phi \lor \TestLeE{y,\alg}(\epsilon')$.
Therefore, $w' \models \KnowXx{\le \epsilon'}{y, \alg} (\phi_1\land\phi_2)$.\!%
\end{enumerate}
\end{proof}

\setcounter{equation}{5}

\subsection{Basics Results on Operational Semantics}
\label{sub:proofs:semantics:basic}
We recall basic results on structural operational semantics. 
We first show that executions of sequential compositions can be decomposed into 
ones of its components. 
\begin{lem}\label{lemma_seq_dec}
Suppose $\langle C_1 ; C_2,w \rangle \longrightarrow^k w'$.
There are $0< l < k$ and sequences $u_1, u_2$ such that $w' = w;u_1;u_2$,\,
$\langle C_1,w \rangle \longrightarrow^l w;u_1$, and 
$\langle C_2, w;u_1 \rangle \longrightarrow^{k - l}w;u_1;u_2$.
\end{lem}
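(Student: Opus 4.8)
The plan is to prove Lemma~\ref{lemma_seq_dec} by induction on the length $k$ of the derivation $\langle C_1;C_2,w\rangle \longrightarrow^k w'$, inspecting which operational rule was applied in the first step. The key observation is that the only rules in \Fig{fig:rules:semantics} that can fire on a configuration of the form $\langle C_1;C_2, w\rangle$ are the two sequencing rules, so the first transition is forced to act on $C_1$.

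\textbf{Base case.} For the smallest possible $k$, the first step must be the rule that terminates $C_1$: from $\langle C_1,w\rangle\longrightarrow w_0$ we get $\langle C_1;C_2,w\rangle\longrightarrow \langle C_2,w_0\rangle$. Writing $w_0 = w;u_1'$ for the single extra state appended (each single transition appends exactly one state to the world sequence, by the shape of the rules), we continue with $\langle C_2, w;u_1'\rangle \longrightarrow^{k-1} w'$; this sub-derivation appends further states, giving $w' = w;u_1';u_2$. Taking $l=1$, $u_1 = u_1'$ finishes this case, since $1 < k$ holds as the $C_2$-part is nonempty whenever $w'$ is a world rather than a configuration.

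\textbf{Inductive step.} If the first step instead uses the rule $\dfrac{\langle C_1,w\rangle\longrightarrow\langle C_1',w''\rangle}{\langle C_1;C_2,w\rangle\longrightarrow\langle C_1';C_2,w''\rangle}$, then we have $\langle C_1';C_2,w''\rangle\longrightarrow^{k-1} w'$ with $w'' = w;v$ for a single appended state $v$. Applying the induction hypothesis to this shorter derivation yields $0<l'<k-1$ and sequences $u_1'',u_2$ with $w' = w'';u_1'';u_2$, $\langle C_1',w''\rangle\longrightarrow^{l'} w'';u_1''$, and $\langle C_2, w'';u_1''\rangle\longrightarrow^{k-1-l'} w'';u_1'';u_2$. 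Prepending the first step gives $\langle C_1,w\rangle\longrightarrow^{l'+1} w;v;u_1''$, so we set $l = l'+1$ and $u_1 = v;u_1''$; then $w'' ; u_1'' = w; v; u_1'' = w; u_1$ and $w' = w;u_1;u_2$, with $0<l<k$ and the two required sub-derivations for $C_1$ and $C_2$.

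\textbf{Anticipated obstacle.} The argument is essentially bookkeeping, so the main care point is keeping the world-sequence concatenations precise: one must be clear that a single $\longrightarrow$ step appends exactly one state, that $w;u_1;u_2$ parses unambiguously, and that the configuration-continuation in the second sequencing rule really does carry over unchanged into the induction hypothesis. A secondary subtlety is justifying the strict bounds $0<l<k$: $l>0$ because $\langle C_1,w\rangle$ needs at least one step to terminate, and $l<k$ because $C_2$ must take at least one step to turn its configuration into the final world $w'$ (no rule lets $\langle C_2,w''\rangle$ be a world already). These are immediate from the rule shapes but worth stating explicitly.
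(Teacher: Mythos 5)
Your proof is correct and follows essentially the same route as the paper's: induction on $k$ with a case split on whether the first step of $\langle C_1;C_2,w\rangle$ keeps $C_1$ running (apply the induction hypothesis to $\langle C_1';C_2,w''\rangle$ and set $l=l'+1$) or terminates $C_1$ (take $l=1$). The only slip is the parenthetical claim that every single transition appends exactly one state to the world --- the $\mathtt{if}$/$\mathtt{loop}$ guard-evaluation steps append none --- but since your appended sequences $u_1'$ and $v$ may simply be taken to be possibly empty, this does not affect the bookkeeping or the bounds $0<l<k$.
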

\begin{proof}
We prove by induction on $k$.
If $k = 0, 1$, the statement is vacuously true.
If $k = k' + 2$ for $k' \ge 0$, we have one of the following two cases:
\begin{align}
\langle C_1 ; C_2,w \rangle &\longrightarrow \langle C'_1 ; C_2, w;u' \rangle  \longrightarrow^{k' + 1}w', \label{caseA_lemma_seq_dec}\tag{a} \\
\langle C_1 ; C_2,w \rangle &\longrightarrow \langle C_2, w;u' \rangle  \longrightarrow^{k' + 1}w' \label{caseB_lemma_seq_dec}\tag{b}.
\end{align}
In the case (\ref{caseA_lemma_seq_dec}), by induction hypothesis, 
there are $0< l' < k' + 1 $ and sequences $u'_1, u_2$ such that $w' = w;u';u'_1;u_2$,\,
$\langle C'_1,w;u' \rangle \longrightarrow^{l'} w;u';u'_1$, and 
$\langle C_2, w;u';u'_1 \rangle \longrightarrow^{k' + 1 - l'} w'$.
Thus, $\langle C_1, w \rangle \longrightarrow^{l' + 1} w;(u';u'_1)$ and 
$\langle C_2, w;(u';u'_1)\rangle \longrightarrow^{k - (l' + 1)} w'$.
In the case (\ref{caseB_lemma_seq_dec}), by the definition of execution, we have 
$\langle C_1, w \rangle \longrightarrow w;u'$ for some $u'$.
\end{proof}

We recall that the 
executions of single commands $\myskip$, $v \detassign e$ and $v \detassign f_{\!\alg}(y)$ are deterministic, 
hence the semantic relation $\sem{\ac}$ of each action $\ac$ is functional. 
These semantic \emph{functions} can be rewritten explicitly as follows:
\begin{small}
\begin{align*}
\sem{\myskip}(w) &= w; (\mem_w,\myskip,\testHist_w)\\
\sem{v \detassign e}(w) &= w; (\mem_w[v \mapsto \sem{e}_{\mem_w}],v \detassign e,\testHist_w)\\
\sem{v \detassign f_{\!\alg}(y)}(w) &= w; ((\mem_w[v \,{\mapsto} \sem{f_{\!\alg}(y)}_{\mem_w}])\zeta_{h_{y, \alg}},v \detassign f_{\!\alg}(y), \testHist_w \uplus \{ \mem_w(y) \mapsto \{\!\alg\} \})\\
&\qquad\text{ where }
 (m'\zeta_{h_{y, \alg}})(v) \eqdef 
\begin{cases}
m'(h_{y, \alg}) + 1 & v = h_{y, \alg}\\
m'(v) &\text{otherwise}
\end{cases}
\end{align*}
\end{small}
We remark here that the \emph{incrementation} $\zeta_{h_{y, \alg}}$ and the substitution $[v \,{\mapsto} \sem{f_{\!\alg}(y)}_{\mem_w}]$ are commutative:
$
\mem_w[v \,{\mapsto} \sem{f_{\!\alg}(y)}_{\mem_w}])\zeta_{h_{y, \alg}}
=
(\mem_w\zeta_{h_{y, \alg}})[v \,{\mapsto} \sem{f_{\!\alg}(y)}_{\mem_w}])
$.

If the memories of the current states of two worlds are identical, 
execution paths starting at these worlds can be simulated by each other, and can be written explicitly.

\begin{lem}\label{lemma_exec_depends_only_last_state}
Let $w_1,w_2$ be two possible worlds.
Suppose that
$\mem_{w_1}(v) = \mem_{w_2}(v)$ holds for all $v \in \var (C) \cap \varO$.
If $\langle C,w_1 \rangle \longrightarrow^k w'_1$ for some $w'_1$,
then there are $l \in \mathbb{N}$ with $0 \leq l \leq k$ and 
a sequence $\ac_1,\ac_2,\ldots, \ac_{l}$ of actions such that: 
\begin{enumerate}
\item 
\label{lemma_exec_depends_only_last_state:1}
$w'_1 = (\sem{\ac_{l}}\circ \cdots \circ \sem{\ac_1})(w_1)$ holds, and 
\item
\label{lemma_exec_depends_only_last_state:2}
$\langle C,w_2 \rangle \longrightarrow^k  w'_2$ and 
$\mem_{w'_1}(v) = \mem_{w'_2}(v)$ for all $v \in \var (C) \cap \varO$ hold where $w'_2 = (\sem{\ac_{l}}\circ \cdots \circ \sem{\ac_1})(w_2)$.
\end{enumerate}
\end{lem}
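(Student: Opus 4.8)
The plan is to induct on the number $k$ of transition steps in the run $\langle C,w_1\rangle\longrightarrow^k w'_1$, with a case analysis on the rule applied in the first step (equivalently, on the shape of $C$). Two elementary facts about the semantics of $\Prog$ are used throughout: (i) every variable occurring in a program is observable, so $\var(C)\cap\varO=\var(C)$ and the hypothesis just says $\mem_{w_1}$ and $\mem_{w_2}$ agree on $\var(C)$; and (ii) an execution of a program modifies only the variables in $\upd{C}$ together with the (invisible) history variables $h_{y,\alg}$, which one reads off the explicit forms of $\sem{\myskip}$, $\sem{v\detassign e}$ and $\sem{v\detassign f_{\!\alg}(y)}$ by a trivial induction on the length of the run. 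The sought action sequence $\ac_1,\dots,\ac_l$ is always the subsequence of \emph{state-appending} steps of the run (those executing a single command $c$, which contribute $\sem{c}$), in order; that $w'_1=(\sem{\ac_l}\circ\cdots\circ\sem{\ac_1})(w_1)$ is then immediate from the transition rules. The case $k=0$ is vacuous, and the single-command case is direct: for $\langle c,w_1\rangle\longrightarrow\sem{c}(w_1)$ take $l=1$ and $\ac_1=c$; since $\sem{e}_{\mem_{w_1}}$ (resp.\ the $p$-value $\sem{f_{\!\alg}(y)}_{\mem_{w_1}}$) depends on $\mem_{w_1}$ only through the (observable) variables of $c$, it equals the value computed from $\mem_{w_2}$, whence $\langle c,w_2\rangle\longrightarrow\sem{c}(w_2)$ and the two resulting memories still agree on $\var(c)$ --- a possible disagreement on the invisible history variable touched by $f_{\!\alg}$ is irrelevant.

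For the compound constructs one peels off the first step and applies the induction hypothesis to the remainder, whose step count is strictly smaller. If $C=\myIf{e}{C_1}{C_2}$ or $C=\myLoop{e}{C'}$, the first transition appends no state and its direction is fixed by $\sem{e}_{\mem_{w_1}}$; as $\var(e)\subseteq\var(C)$, the same branch (resp.\ unfolding) is taken from $w_2$, the residual program $C''$ has $\var(C'')\subseteq\var(C)$, so the induction hypothesis applies to the $(k-1)$-step continuation and no action is prepended. If $C=C_1;C_2$, we split the run via \Lem{lemma_seq_dec} at some $0<l_1<k$ into $\langle C_1,w_1\rangle\longrightarrow^{l_1}w_1;u_1$ followed by $\langle C_2,w_1;u_1\rangle\longrightarrow^{k-l_1}w_1;u_1;u_2$. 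The induction hypothesis on the first part gives an action sequence and a matching run $\langle C_1,w_2\rangle\longrightarrow^{l_1}w_2;u'_1$ whose final memory agrees with that of $w_1;u_1$ on $\var(C_1)$; by fact (ii) the variables in $\var(C_2)\setminus\var(C_1)$ lie outside $\upd{C_1}$, so they are untouched by $C_1$ and still agree (they already agreed in $w_1,w_2$), hence the two memories agree on all of $\var(C_2)$ and the induction hypothesis applies to the second part as well. Concatenating the two action sequences yields the result, and in each of these cases the final memories agree on all of $\var(C)$ --- not merely on $\var$ of the residual subprogram --- because every subsidiary run leaves each observable variable outside its $\upd{\cdot}$ unchanged, and such variables already agree in $w_1$ and $w_2$.

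The main obstacle is the parallel case $C=C_1\|C_2$, where the given run fixes one particular interleaving of the two threads. Using the decomposition of parallel executions into thread-local ones (\App{sub:appendix:parallel}), this run splits as $\langle C_1,w_1\rangle\longrightarrow^{k_1}w_1;u_1$ and $\langle C_2,w_1\rangle\longrightarrow^{k_2}w_1;u_2$ with $k_1+k_2=k$ and $k_1,k_2\ge 1$, hence both $<k$. Applying the induction hypothesis to each thread yields matching runs from $w_2$ with the \emph{same} step counts $k_1,k_2$ and thread-local action sequences; scheduling these two runs according to the same interleaving pattern as in the $w_1$-run produces a candidate run $\langle C_1\|C_2,w_2\rangle\longrightarrow^k w'_2$. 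The delicate points are that this interleaved schedule is enabled step by step from $w_2$, that $w'_2$ is the composite $(\sem{\ac_l}\circ\cdots\circ\sem{\ac_1})(w_2)$ of the interleaved action sequence, and that the final memory agrees with that of $w'_1$ on $\var(C_1)\cup\var(C_2)=\var(C)$; all three rest on the non-interference condition $\upd{C_1}\cap\var(C_2)=\upd{C_2}\cap\var(C_1)=\emptyset$ imposed on $C_1\|C_2$, which ensures that the steps of one thread neither enable nor disable a step of the other and do not alter the observable data the other thread reads or writes (history variables shared between the threads may be incremented by both, but their final values --- and the test history $\testHist$ --- do not depend on the interleaving order). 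Interleaving the two thread-local action sequences in the corresponding order gives the required $\ac_1,\dots,\ac_l$, which closes the induction.
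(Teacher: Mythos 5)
Your overall strategy is sound and, for the base and sequential cases, is a legitimate alternative to the paper's proof: the paper also inducts on $k$, but it always peels off exactly \emph{one} transition step and then runs an inner induction on the derivation tree of that single step to show that at most one action is performed and that the same step is enabled from $w_2$; you instead decompose whole runs of compound programs into whole runs of their components (via \Lem{lemma_seq_dec} for sequencing), which is fine since that lemma is proved independently and earlier. Your explicit remark that the final memories agree on all of $\var(C)\cap\varO$, not merely on the variables of the residual subprogram, is a point the paper's own proof glosses over.

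The parallel case, however, has a genuine gap. First, the thread-local decomposition you invoke from \App{sub:appendix:parallel} (\Lem{sub:appendix:parallel:B}) is proved in the paper \emph{using} the very lemma you are proving, so citing it as a black box is circular. This is repairable --- the decomposition of a $k$-step parallel run only needs the present lemma for runs of length strictly less than $k$, so a mutual strong induction on $k$ closes the loop --- but you would have to set that up explicitly. Second, and more seriously, the re-interleaving step is exactly the hard part and nothing you cite establishes it: the appendix only shows that the fully sequential schedule ($C_1$ first, then $C_2$) is a valid run of $C_1\|C_2$, whereas you need that the \emph{same interleaving pattern} as the $w_1$-run is enabled step by step from $w_2$ and yields $(\sem{\ac_l}\circ\cdots\circ\sem{\ac_1})(w_2)$ for the interleaved action sequence; the conclusion fixes the order of the $\ac_i$, so you cannot substitute a sequential schedule. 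Proving that each thread's next step is unaffected by the other thread's interleaved actions is essentially the one-step simulation the paper performs directly (a single step of $C_1\|C_2$ is derived from a single step of one thread, which transfers to $w_2$ by the inner induction), so by working at the level of whole runs you end up having to re-derive that argument inside the parallel case anyway. Either adopt the paper's step-by-step simulation for this case, or state and prove the interleaving lemma you are implicitly relying on.
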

\begin{proof}
Suppose $\langle C,w_1 \rangle \longrightarrow^k w'_1$.
We prove by induction on $k$.
If $k=0$, the statement holds vacuously.
If $k = 1$, we have the following four cases:
\begin{itemize}
\item Case $C \equiv \myskip$.~
By the definition of $\sem{\myskip}$, we have $l = 1$, $\ac_1 = \myskip$ and $w'_1 = \sem{\myskip}(w_1)$. 
Let $w'_2 = \sem{\myskip}(w_2)$. 
Then we obtain $\langle \myskip,w_2 \rangle \longrightarrow^k w'_2$ and 
$\mem_{w'_1}(v) = \mem_{w_1}(v) = \mem_{w_2}(v) = \mem_{w'_2}(v)$ for all $v \in \var (C) \cap \varO$.
\item Case $C \equiv (v \detassign e)$.~
By the definition of $\sem{v \detassign e}$, we have $l = 1$, $\ac_1 = (v \detassign e)$ and $w'_1 = \sem{v \detassign e}(w_1)$. 
Let $w'_2 = \sem{v \detassign e}(w_2)$. 
Then we obtain $\langle v \detassign e,w_2 \rangle \longrightarrow^k w'_2$ and for all $v' \in \var (C) \cap \varO$,
\[
\mem_{w'_1}(v') = \mem_{w_1}[v \mapsto \sem{e}_{\mem_{w_1}}](v') = \mem_{w_2}[v \mapsto \sem{e}_{\mem_{w_2}}](v')  = \mem_{w'_2}(v').
\]
\item Case $C \equiv (v \detassign f_{\!\alg}(y))$.~
By the definition of $\sem{v \detassign f_{\!\alg}(y)}$, we have $l = 1$, $\ac_1 = (v \detassign f_{\!\alg}(y))$ and $w'_1 = \sem{v \detassign f_{\!\alg}(y)}(w_1)$. 
Let $w'_2 = \sem{v \detassign f_{\!\alg}(y)}(w_2)$. 
Then we obtain $\langle v \detassign e,w_2 \rangle \longrightarrow^k w'_2$ and for all $v' \in \var (C) \cap \varO$,
\[
\mem_{w'_1}(v') = \mem_{w_1}[v \mapsto \sem{f_{\!\alg}(y)}_{\mem_{w_1}}](v') 
= \mem_{w_2}[v \mapsto \sem{f_{\!\alg}(y)}_{\mem_{w_2}}](v') = \mem_{w'_2}(v').
\]

\item Case $C \equiv \myLoop{e}{C'}$ and $\sem{e}_{\mem_{w}} = \myfalse$.~
We have $l = 0$ and for all $v \in \var (C) \cap \varO$,
$\mem_{w'_1}(v) = \mem_{w_1}(v) = \mem_{w_2}(v) = \mem_{w'_2}(v)$. 
\end{itemize}

If $k = k' + 2$ for $k' \ge 0$, we have
$
\langle C,w_1 \rangle \longrightarrow \langle C',w''_1 \rangle 
\longrightarrow^{k' + 1} w'_1
$.
By induction hypothesis, 
there is $\ac_1 ,\ldots ,\ac_{l'}$ such that 
$w'_1 = (\sem{\ac_{l'}}\circ \cdots \circ \sem{\ac_1})(w''_1)$ for some $l' \leq k' + 1$, and 
if $\mem_{w''_1}(v) = \mem_{w''_2}(v) $ holds for all $v \in \var (C') \cap \varO$ then
$\langle C',w''_2 \rangle \longrightarrow^{k' + 1} w'_2$ and 
$\mem_{w'_1}(v) = \mem_{w'_2}(v) $ holds for all $v \in \var (C') \cap \varO$
where 
$w'_2 = (\sem{\ac_{l'}}\circ \cdots \circ \sem{\ac_1})(w''_2)$.

It suffices to show that 
for the first step $\langle C,w_1 \rangle \longrightarrow \langle C',w''_1 \rangle $ of execution,
\begin{enumerate}
\item at most a single action is performed, i.e., either $w''_1 = w_1$ or $w''_1 = \sem{\ac'}(w_1)$, and 
\item if $\mem_{w_1} (v) = \mem_{w_2} (v) $ for all $v \in \var (C) \cap \varO$, then 
$\langle C,w_2 \rangle \longrightarrow \langle C',w''_2 \rangle $ where 
$w''_2 = w_2$ if $w''_1 = w_1$, and $w''_2 =  \sem{\ac'}(w_2)$ if $w''_1 = \sem{\ac'}(w_1)$.
\end{enumerate}
We prove this by induction on the inference tree as follows.
Recall that by assumption, $\mem_{w_1}(v) = \mem_{w_2}(v) $ holds for all $v \in \var (C) \cap \varO$.
\begin{itemize}
\item 
If the inference is 
$\langle \myLoop{e}{C_1},w_1 \rangle \longrightarrow \langle C_1 ; \myLoop{e}{C_1},w''_1 \rangle$
where $\sem{e}_{\mem_{w_1}} = \mytrue$,
then $w''_1 = w_1$.
From $\sem{e}_{\mem_{w_2}} = \sem{e}_{\mem_{w_1}} = \mytrue$, we conclude 
$\langle \myLoop{e}{C_1},w_2 \rangle \longrightarrow \langle C_1 ; \myLoop{e}{C_1},w''_2 \rangle$ where $w''_2 = w_2$.

\item 
Similarly, if the inference is
$\langle \myIf{e}{C_1}{C_2},w_1 \rangle \longrightarrow \langle C_1,w''_1 \rangle$,
where $\sem{e}_{\mem_{w_1}} = \mytrue$,
then $w''_1 = w_1$ and
$\langle \myIf{e}{C_1}{C_2},w_2 \rangle \longrightarrow \langle C_1,w_2 \rangle$.

\item
If the last step of inference is derived by one of the following rules
{
\small
\[
\frac{
  \langle C_1, w \rangle \longrightarrow \langle C'_1, w'' \rangle
}{
  \langle C_1;C_2, w \rangle \longrightarrow \langle C'_1;C_2, w'' \rangle
},
~~~~
\frac{
  \langle C_1, w \rangle \longrightarrow \langle C'_1, w'' \rangle
}{
  \langle C_1\| C_2, w \rangle \longrightarrow \langle C'_1\|C_2, w'' \rangle
}
\]
}%
then we apply the induction hypothesis to $\langle C_1, w_1 \rangle \longrightarrow \langle C'_1, w''_1 \rangle$.
We have either $w''_1 = w_1$ or $w''_1 = \sem{\ac'}(w_1)$, and 
$\langle C_1, w_2 \rangle \longrightarrow \langle C'_1, w''_2 \rangle$ where 
$w''_2 = w_2$ if $w''_1 = w_1$, and 
$w''_2 =  \sem{\ac'}(w_2)$ if $w''_1 = \sem{\ac'}(w_1)$.
Then by the same rule, we conclude 
$\langle C, w_2 \rangle \longrightarrow \langle C', w''_2 \rangle$.

\item The other cases are shown in a similar way.
\end{itemize}
\end{proof}

\subsection{Remarks on Parallel Compositions}
\label{sub:appendix:parallel}

We present some remarks on parallel compositions.
We first show that in general, parallel compositions contain sequential compositions.

\begin{lem}\label{lemma_parallel_includes_sequential}
For any possible world $w$,
we have $\sem{C_1 ; C_2}(w) \subseteq \sem{C_1 \| C_2}(w)$.
\end{lem}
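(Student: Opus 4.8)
The plan is to reduce the claim to the sequential decomposition result (\Lem{lemma_seq_dec}) and then re-assemble the execution as the single interleaving that runs all of $C_1$ before any of $C_2$. Fix a world $w$ and suppose $w' \in \sem{C_1 ; C_2}(w)$, i.e.\ $\langle C_1 ; C_2, w\rangle \longrightarrow^k w'$ for some $k$. By \Lem{lemma_seq_dec} there are $0 < l < k$ and sequences of states $u_1, u_2$ with $w' = w ; u_1 ; u_2$,\, $\langle C_1, w\rangle \longrightarrow^l w ; u_1$, and $\langle C_2, w ; u_1\rangle \longrightarrow^{k-l} w'$. It therefore suffices to exhibit an execution $\langle C_1 \mathbin{\|} C_2, w\rangle \longrightarrow^k w'$.

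The key step is to lift the terminating execution of $C_1$ into the parallel context: I would show by induction on its length $l$ that $\langle C_1 \mathbin{\|} C_2, w\rangle \longrightarrow^l \langle C_2, w ; u_1\rangle$. Since the transition relation has no outgoing step from a bare world, the derivation $\langle C_1, w\rangle \longrightarrow^l w ; u_1$ consists of $l-1$ steps of the form $\langle C_1^{(i)}, v_i\rangle \longrightarrow \langle C_1^{(i+1)}, v_{i+1}\rangle$ (with $C_1^{(0)} = C_1$, $v_0 = w$) followed by a final terminating step $\langle C_1^{(l-1)}, v_{l-1}\rangle \longrightarrow w ; u_1$. Each non-terminating step is mirrored by the left-congruence rule for $\mathbin{\|}$ in \Fig{fig:rules:semantics}, and the final step by the left-termination rule, which produces the configuration $\langle C_2, w ; u_1\rangle$. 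None of these transitions inspects or updates $C_2$ or the variables it reads, so the non-interference side condition imposed on $\mathbin{\|}$ in \Sec{sub:program:syntax} is not needed for this direction. Concatenating with the given run of $C_2$ from the very same configuration yields $\langle C_1 \mathbin{\|} C_2, w\rangle \longrightarrow^l \langle C_2, w ; u_1\rangle \longrightarrow^{k-l} w'$, hence $\langle C_1 \mathbin{\|} C_2, w\rangle \longrightarrow^k w'$ and $w' \in \sem{C_1 \mathbin{\|} C_2}(w)$.

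As $w'$ was arbitrary, $\sem{C_1 ; C_2}(w) \subseteq \sem{C_1 \mathbin{\|} C_2}(w)$; the case $\sem{C_1 ; C_2}(w) = \emptyset$ (in particular when $C_1$ or $C_1 ; C_2$ diverges) is immediate. I do not anticipate a genuine obstacle here: the only point requiring care is the bookkeeping in the lifting induction — isolating the last step of the $C_1$-run as the terminating transition so that the correct parallel rule applies, and keeping track of the intermediate configurations $\langle C_1^{(i)} \mathbin{\|} C_2, v_i\rangle$ — all of which follows directly from the shape of the rules in \Fig{fig:rules:semantics}.
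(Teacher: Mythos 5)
Your proposal is correct and follows essentially the same route as the paper's proof: decompose via \Lem{lemma_seq_dec}, then lift the terminating run of $C_1$ into the parallel context step by step using the left congruence and left-termination rules for $\mathbin{\|}$, and append the run of $C_2$. If anything, your explicit separation of the final terminating transition of the $C_1$-run is slightly more careful than the paper's induction, which only displays the non-terminating case $\langle C_1,w\rangle \longrightarrow \langle C_1',w_0\rangle$ in its inductive step.
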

\begin{proof}
Suppose $\langle C_1 ; C_2,w \rangle \longrightarrow^\ast w'$ 
Thanks to \Lem{lemma_seq_dec}, 
there are $l, k > 0$ and $w''\in\calw$ such that
$\langle C_1,w \rangle \longrightarrow^{l} w''$ and 
$\langle C_2, w'' \rangle \longrightarrow^{k} w'$.
We show 
$\langle C_1 \| C_2,w \rangle \longrightarrow^{l+k} w'$ by induction on $l$.
If $l = 0$, the statement holds vacuously.
If $l = l' + 1$,  
the inference $\langle C_1,w \rangle \longrightarrow^l w''$ can be decomposed into
$\langle C_1,w \rangle \longrightarrow \langle C'_1, w_0 \rangle \longrightarrow^{l'} w''$ for some $w_0\in\calw$.
Hence, 
$
\langle C'_1;C_2, w_0 \rangle \longrightarrow^{l'}  \langle C_2,w''\rangle \longrightarrow^{k}w'
$.
By induction hypothesis, we obtain 
$
\langle C_1 \| C_2,w \rangle \longrightarrow
\langle C'_1 \| C_2,w_0 \rangle \longrightarrow^{l' + k} w'$.
This completes the proof.
\end{proof}

Next, we show that for a world $w$ and a parallel composition $C_1 \| C_2$, 
a world $w' \in \sem{C_1 \| C_2}(w)$ is convertible to a pair of 
$w_1 \in \sem{C_1 }(w)$ and $w_2 \in \sem{C_2}(w)$ and vice versa.
Recall that we imposed the restriction $\upd{C_b} \cap \var(C_{3-b}) = \emptyset$ for $b = 1,2$.

Let us consider $\langle C_b,w \rangle \longrightarrow^\ast w'_b$ for $b = 1,2$.
By \Lem{lemma_exec_depends_only_last_state}, we obtain
$w'_b = (\sem{\ac^b_{l_b}} \circ \cdots \circ \sem{\ac^b_{1}} )(w)$ for $b = 1,2$,  $l_b \ge 0$, and a sequence $\ac^b_1,\ldots,\ac^b_{l_b}$ of actions.
Then, we can define the following possible world:
\[
w' = (\sem{\ac^2_{l_2}} \circ \cdots \circ \sem{\ac^2_{1}} \circ \sem{\ac^1_{l_1}} \circ \cdots \circ \sem{\ac^1_{1}} )(w).
\]

We first show that this can be an execution of $C_1\|C_2$ starting at the world $w$.
\begin{lem}
\label{sub:appendix:parallel:A}
If $\langle C_b,w \rangle \longrightarrow^\ast (\sem{\ac^b_{l_b}} \circ \cdots \circ \sem{\ac^b_{1}} )(w)$ holds for each $b =1,2$, then
we have $\langle C_1\|C_2,w\rangle \longrightarrow^\ast (\sem{\ac^2_{l_2}} \circ \cdots \circ \sem{\ac^2_{1}} \circ \sem{\ac^1_{l_1}} \circ \cdots \circ \sem{\ac^1_{1}} )(w)$.
\end{lem}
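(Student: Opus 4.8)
The plan is to show that within $C_1 \| C_2$ we may first run $C_1$ to completion and then run $C_2$, reaching exactly the world obtained by composing the semantic functions of $\ac^1_1,\dots,\ac^1_{l_1}$ followed by those of $\ac^2_1,\dots,\ac^2_{l_2}$. Write $u_1 \eqdef (\sem{\ac^1_{l_1}} \circ \cdots \circ \sem{\ac^1_1})(w)$, so that by hypothesis $\langle C_1, w\rangle \longrightarrow^\ast u_1$ and $\langle C_2, w\rangle \longrightarrow^\ast (\sem{\ac^2_{l_2}} \circ \cdots \circ \sem{\ac^2_1})(w)$.

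First I would establish that $\langle C_1 \| C_2, w\rangle \longrightarrow^\ast \langle C_2, u_1\rangle$. This follows by induction on the length $k\ge 1$ of the execution $\langle C_1, w\rangle \longrightarrow^k u_1$: when $k=1$ the single step is a terminating step $\langle C_1, w\rangle \longrightarrow u_1$, so the parallel rule passing from $\langle C_1, w\rangle \longrightarrow w'$ to $\langle C_1\|C_2, w\rangle \longrightarrow \langle C_2, w'\rangle$ applies directly; when $k=k'+1\ge 2$ the first step is non-terminating, $\langle C_1, w\rangle \longrightarrow \langle C_1', w''\rangle$, and we apply the parallel rule passing from $\langle C_1, w\rangle \longrightarrow \langle C_1', w''\rangle$ to $\langle C_1\|C_2, w\rangle \longrightarrow \langle C_1'\|C_2, w''\rangle$, then the induction hypothesis to $\langle C_1', w''\rangle \longrightarrow^{k'} u_1$.

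Next I would continue the execution from $\langle C_2, u_1\rangle$ using \Lem{lemma_exec_depends_only_last_state}. The point is that executing $C_1$ alters only the observable variables in $\upd{C_1}$ together with invisible history variables (which never occur in a program, hence not in $\var(C_2)$); a short auxiliary induction on the execution of $C_1$, using the definition of $\upd{\cdot}$ and the explicit form of $\sem{\ac}$ for single commands in \App{sub:proofs:semantics:basic}, gives $\mem_{u_1}(v) = \mem_w(v)$ for every $v \in \var(C_2)\cap\varO$, because $\upd{C_1}\cap\var(C_2)=\emptyset$ by the restriction on parallel compositions from \Sec{sub:program:syntax}. Applying \Lem{lemma_exec_depends_only_last_state} to $C_2$ with $w_1=w$ and $w_2=u_1$ — where the action sequence it extracts from the given execution $\langle C_2, w\rangle \longrightarrow^\ast (\sem{\ac^2_{l_2}} \circ \cdots \circ \sem{\ac^2_1})(w)$ is, by construction (cf.\ the paragraph preceding this lemma), precisely $\ac^2_1,\dots,\ac^2_{l_2}$ — yields $\langle C_2, u_1\rangle \longrightarrow^\ast (\sem{\ac^2_{l_2}} \circ \cdots \circ \sem{\ac^2_1})(u_1)$. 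Concatenating this with the first part and unfolding $u_1$ gives $\langle C_1\|C_2, w\rangle \longrightarrow^\ast (\sem{\ac^2_{l_2}} \circ \cdots \circ \sem{\ac^2_1} \circ \sem{\ac^1_{l_1}} \circ \cdots \circ \sem{\ac^1_1})(w)$, as required.

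The main obstacle is the middle step: one must be certain that the action sequence $C_2$ performs starting from $u_1$ is the very same $\ac^2_1,\dots,\ac^2_{l_2}$ it performs starting from $w$, which is exactly what \Lem{lemma_exec_depends_only_last_state} delivers, provided one has first used the disjointness $\upd{C_1}\cap\var(C_2)=\emptyset$ to verify that the two current memories agree on $\var(C_2)\cap\varO$. No new idea beyond \Lem{lemma_exec_depends_only_last_state} and the structural rules for $\|$ is needed; everything else is routine bookkeeping on the operational semantics.
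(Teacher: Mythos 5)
Your proof is correct and follows essentially the same route as the paper's: define the intermediate world reached after $C_1$, show $\langle C_1\|C_2,w\rangle \longrightarrow^\ast \langle C_2, u_1\rangle$ via the structural rules for $\|$, use the disjointness restriction $\upd{C_1}\cap\var(C_2)=\emptyset$ to match memories on $\var(C_2)\cap\varO$, and then transfer the $C_2$-execution from $w$ to $u_1$ with Lemma~\ref{lemma_exec_depends_only_last_state}. If anything, you are more explicit than the paper on the first step (the induction showing the parallel composition can schedule all of $C_1$ first) and on why the extracted action sequence for $C_2$ is the given one.
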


\begin{proof}
Let $w_3 = (\sem{\ac^1_{l_1}} \circ \cdots \circ \sem{\ac^1_{1}})(w)$.
By the assumptions of this lemma, we have
$\langle C_1\|C_2,w\rangle \longrightarrow^\ast \langle C_2,w_3 \rangle$.
Since $\upd {C_1} \cap \var (C_2) = \emptyset$,
$\mem_{w_3}(v) = \mem_w (v)$ holds for all $v \in \var (C_2) \cap \varO$.
By \Lem{lemma_exec_depends_only_last_state},
we conclude:
\[
\langle C_1\|C_2,w\rangle \longrightarrow^\ast
\langle C_2,w_3 \rangle
\longrightarrow^\ast (\sem{\ac^2_{l_2}} \circ \cdots \circ \sem{\ac^2_{1}} \circ \sem{\ac^1_{l_1}} \circ \cdots \circ \sem{\ac^1_{1}} )(w).%
\]
\end{proof}

Second, we show the converse of the above lemma.
Let $w'$ be a world such that
$\langle C_1\|C_2,w\rangle {\longrightarrow^\ast} w'$.
By \Lem{lemma_exec_depends_only_last_state}, 
there is a sequence $\ac_1,\ldots,\ac_{n'}$ of actions such that:
\[
w' = (\sem{\ac_{n'}} \circ \cdots \circ \sem{\ac_1} )(w).
\]
Then we can decompose it into executions of $C_1$ and $C_2$ in the following sense.
\begin{lem}
\label{sub:appendix:parallel:B}
The sequence $\ac_1,\ldots,\ac_{n'}$ of actions can be decomposed into two subsequences
$\ac_{L^1_1},\ldots,\ac_{L^1_{n'_1}}$ and 
$\ac_{L^2_1},\ldots,\ac_{L^2_{n'_2}}$ such that
for each $b=1,2$,
$\langle C_b ,w\rangle \longrightarrow^\ast w'_b$ and
$w'_b = (\sem{\ac_{L^b_{n'_b}}}\circ \cdots \circ \sem{\ac_{L^b_1}}) (w)$.
\end{lem}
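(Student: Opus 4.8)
The plan is to proceed by induction on the length $n$ of the reduction $\langle C_1 \| C_2, w\rangle \longrightarrow^n w'$, while \emph{tagging} each step of the reduction with the index of the component ($1$ or $2$) whose sub-reduction produced it. Inspecting the operational rules for $\|$ in \Fig{fig:rules:semantics}, every transition out of a configuration $\langle D_1 \| D_2, v\rangle$ is derived from a premise $\langle D_b, v\rangle \longrightarrow \langle D_b', v'\rangle$ or $\langle D_b, v\rangle \longrightarrow v'$ for a unique $b\in\{1,2\}$; the same holds trivially once one component has finished and the configuration has the form $\langle D_b, v\rangle$. Maintaining the invariant that the configuration is always $\langle D_1 \| D_2, v\rangle$, $\langle D_1, v\rangle$, $\langle D_2, v\rangle$, or a world, I would record for each of the $n$ steps which component fired and which single action $\ac_j$ it performed (by the argument in the proof of \Lem{lemma_exec_depends_only_last_state}, at most one action is performed per step). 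Collecting the steps tagged $b$ in order yields the subsequence $\ac_{L^b_1},\ldots,\ac_{L^b_{n'_b}}$; and since the whole reduction terminates in a world, the configuration must have passed through $\langle C_{3-b}, \cdot\rangle$ and then through the termination of $C_{3-b}$, so both components are exhausted and each $b$-tagged run is a complete reduction of $C_b$.

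Next I would promote each tagged subsequence to a genuine isolated execution of $C_b$ starting at $w$. The point is that a $C_1$-step in the interleaved run reads only the restriction of the current memory to $\var(D_1)\subseteq\var(C_1)$, whereas the interleaved $C_2$-steps modify only variables in $\upd{C_2}$, which is disjoint from $\var(C_1)$ by the restriction imposed in \Sec{sub:program:syntax}; symmetrically for $C_2$. Hence deleting the $C_2$-tagged steps leaves the memory seen by $C_1$ (on $\var(C_1)$) unchanged at every stage, so \Lem{lemma_exec_depends_only_last_state}, together with the fact that the semantic relation of each single command is functional (recalled in \App{sub:proofs:semantics:basic}), lets me replay exactly the sequence $\ac_{L^1_1},\ldots,\ac_{L^1_{n'_1}}$ on $w$ to obtain $\langle C_1, w\rangle \longrightarrow^{n'_1} w'_1$ with $w'_1 = (\sem{\ac_{L^1_{n'_1}}}\circ\cdots\circ\sem{\ac_{L^1_1}})(w)$, and likewise for $C_2$. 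This is the claimed decomposition.

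The main obstacle I anticipate is the book-keeping in the induction: formally carrying the tagging and the invariant on the shape of the configuration through the four parallel rules — including the two rules that collapse $\langle C_1 \| C_2, v\rangle$ to $\langle C_{3-b}, v'\rangle$ when $C_b$ terminates — and checking that the $b$-tagged steps really form a valid reduction of the residual $D_b$ at each stage, so that \Lem{lemma_exec_depends_only_last_state} is applicable. The non-interference argument itself is routine given $\upd{C_b}\cap\var(C_{3-b})=\emptyset$, but it has to be threaded through the entire interleaving (step by step, via the lemma) rather than invoked once, which is where the care lies.
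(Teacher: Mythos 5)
Your proposal is correct and follows essentially the same route as the paper's proof: an induction on the length of the interleaved reduction, a case split on which component fired each step (including the steps where one component terminates and the configuration collapses), and an appeal to the disjointness restriction $\upd{C_b}\cap\var(C_{3-b})=\emptyset$ together with \Lem{lemma_exec_depends_only_last_state} to replay each component's subsequence of actions starting from $w$. The only difference is presentational — you separate the tagging/extraction pass from the replay pass, whereas the paper performs the replay inside the inductive case — so nothing substantive is gained or lost.
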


\begin{proof}
By assumption, there is a $k \ge 0$ such that
$\langle C_1\|C_2,w\rangle \longrightarrow^k w'$.
We prove this lemma by induction on $k$.
If $k = 0,1$, the statement holds vacuously.
Suppose $k = k' + 2$ for $k' \ge 0$. 
We decompose that execution into $\langle C_1\|C_2,w\rangle \longrightarrow \gamma \longrightarrow^{k'+1} w'$.
\begin{itemize}
\item Case $\gamma \equiv \langle C'_1\|C_2, w'' \rangle$ for some $C'_1$ and $w''$.~
By definition, we should have
$\langle C_1,w\rangle \longrightarrow \langle C'_1,w''\rangle$.
Then we have the following two cases.
\begin{itemize}
\item Case $w'' = w$.~
By induction hypothesis, we obtain
$\langle C'_1,w \rangle \longrightarrow^\ast w'_1$ and 
$\langle C_2,w \rangle \longrightarrow^\ast w'_2$.
Then, we also have $\langle C_1,w \rangle \longrightarrow \langle C'_1,w \rangle \longrightarrow^\ast w'_1$.

\item Case $w'' = \sem{\ac_1}(w)$.~
We have $L^1_1 = 1$.  
By applying the induction hypothesis to 
$\langle C'_1\| C_2,\sem{\ac_1}(w) \rangle \longrightarrow^{k'+1} w' $,
there are two subsequences $\ac_{L^1_2},\ldots,\ac_{L^1_{n'_1}}$ and $\ac_{L^2_1},\ldots,\ac_{L^2_{n'_1}}$
of $\ac_2, \ldots \ac_{n'}$ such that:
\begin{align*}
\langle C'_1,\sem{\ac_1}(w) \rangle &\longrightarrow^\ast (\sem{\ac_{L^1_{n'_1}}}\circ \cdots\circ \sem{\ac_{L^1_{1}}})(\sem{\ac_1}(w)) = w'_1,\\[-0.5ex]
\langle C_2,\sem{\ac_1}(w)  \rangle &\longrightarrow^\ast (\sem{\ac_{L^2_{n'_2}}}\circ \cdots\circ \sem{\ac_{L^2_{1}}})(\sem{\ac_1}(w)).
\end{align*}
Since the action $\ac_1$ is performed in the program $C_1$ and $\upd{C_1} \cap \var (C_2) = \emptyset$,
we have $\mem_{\sem{\ac_1}(w)}(v) = \mem_w (v)$ for all $v \in \var (C_2) \cap \varO$.
Thus, by \Lem{lemma_exec_depends_only_last_state},
we conclude:
\[
\langle C_2,w \rangle \longrightarrow^\ast (\sem{\ac_{L^2_{n'_2}}}\circ \cdots\circ \sem{\ac_{L^2_{1}}})(w) = w'_2.
\]
\end{itemize}
\item Case $\gamma \equiv \langle C_2, w'' \rangle$ for some $w''$.~
By definition, we should have
$\langle C_1,w\rangle \longrightarrow w''$ and $\langle C_2,w''\rangle \longrightarrow^\ast  w'$.
Then $w'' = w'_1$ and 
$\mem_{w}(v) = \mem_{w'_1}(v)$ holds for all $v \in \var (C_2) \cap \varO$.
We have the following two cases.
\begin{itemize}
\item Case $w'_1 = w$.~
We immediately obtain
$\langle C_1,w\rangle \longrightarrow w$ and $\langle C_2,w\rangle \longrightarrow^\ast  w' = (\sem{\ac_{n'}}\circ\cdots\circ\sem{\ac_1})(w)$.

\item Case $w_1 = \sem{\ac_1}(w)$.~
We have $L^1_1 = 1$,
$\langle C_1,w\rangle \longrightarrow \sem{\ac_1}(w)$, and $\langle C_2,\sem{\ac_1}(w) \rangle \longrightarrow^\ast  w'$.
Since $\ac_1$ belongs to executions in $C_1$ and $\upd{C_1} \cap \var (C_2) = \emptyset$,
we have $\mem_{\sem{\ac_1}(w)}(v) = \mem_w (v)$ for all $v \in \var (C_2) \cap \varO$.
Thus by \Lem{lemma_exec_depends_only_last_state}, we obtain
$\langle C_2, w \rangle \longrightarrow^\ast   (\sem{\ac_{n'}}\circ \cdots \circ \sem{\ac_{2}}) (w)$.
\end{itemize}
\item The other cases are proved in a similar way.
\end{itemize}
\end{proof}

Executions of programs can be nondeterministic due to parallel compositions.
However, since two programs in parallel do not interfere with each other,
their executions result in the same memory and test history as follows.

\begin{lem}\label{lemma_uniqueness_final_result}
For any $w' \in \sem{C_1 \| C_2}(w)$, there is a $w^\ast \in \sem{C_1 ; C_2}(w)$ such that  
$\mem_{w'} = \mem_{w^\ast}$ and $\testHist_{w'} = \testHist_{w^\ast}$.
\end{lem}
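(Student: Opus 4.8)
The plan is to take a world $w' \in \sem{C_1 \| C_2}(w)$, decompose the interleaved execution producing it into separate executions of $C_1$ and of $C_2$, recombine these into a sequential execution of $C_1;C_2$, and then argue that the order in which the two programs' actions are interleaved is irrelevant to the resulting memory and test history.

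Concretely, first I would apply \Lem{sub:appendix:parallel:B} to the derivation $\langle C_1\|C_2,w\rangle \longrightarrow^\ast w'$. Together with \Lem{lemma_exec_depends_only_last_state} this gives a sequence of actions $\ac_1,\ldots,\ac_{n'}$ with $w' = (\sem{\ac_{n'}} \circ \cdots \circ \sem{\ac_1})(w)$ and an order-preserving splitting of the index set $\{1,\ldots,n'\}$ into a $C_1$-part $L^1_1 < \cdots < L^1_{n'_1}$ and a $C_2$-part $L^2_1 < \cdots < L^2_{n'_2}$ such that $\langle C_b,w\rangle \longrightarrow^\ast w'_b$ with $w'_b = (\sem{\ac_{L^b_{n'_b}}} \circ \cdots \circ \sem{\ac_{L^b_1}})(w)$ for $b = 1,2$. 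Since $\upd{C_1}\cap\var(C_2)=\emptyset$, the execution of $C_1$ leaves every variable of $C_2$ untouched, so $\mem_{w'_1}(v) = \mem_w(v)$ for all $v \in \var(C_2)\cap\varO$; hence by \Lem{lemma_exec_depends_only_last_state} the execution of $C_2$ can be replayed from $w'_1$, giving $\langle C_2,w'_1\rangle \longrightarrow^\ast w^\ast$ with $w^\ast = (\sem{\ac_{L^2_{n'_2}}} \circ \cdots \circ \sem{\ac_{L^2_1}})(w'_1)$. Composing, $\langle C_1;C_2,w\rangle \longrightarrow^\ast \langle C_2,w'_1\rangle \longrightarrow^\ast w^\ast$, so $w^\ast \in \sem{C_1;C_2}(w)$.

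It then remains to check $\mem_{w'} = \mem_{w^\ast}$ and $\testHist_{w'} = \testHist_{w^\ast}$. The crucial sub-claim is a commutation property: if $\ac$ is an action occurring during an execution of $C_1$ and $\ac'$ one occurring during an execution of $C_2$ (so $\ac$ reads and writes only variables of $\var(C_1)$ and $\ac'$ only variables of $\var(C_2)$, which therefore have disjoint update sets, each update set being disjoint from the other's variable set), then for every world $u$ the worlds $\sem{\ac'}(\sem{\ac}(u))$ and $\sem{\ac}(\sem{\ac'}(u))$ have the same current memory and the same test history. This follows by a finite case analysis over the three command shapes $\myskip$, $v\detassign e$, and $v\detassign f_{\!\alg}(y)$, using the explicit semantic functions recalled in \App{sub:proofs:semantics:basic}: the write-sets are disjoint and each written value depends only on the other command's untouched read-set, while for test commands the dataset value $\mem(y)$ used is not modified by the other action, and the multiset union $\uplus$ and the history-variable incrementation commute and are associative. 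Given this, starting from $w' = (\sem{\ac_{n'}} \circ \cdots \circ \sem{\ac_1})(w)$ I would repeatedly transpose adjacent actions belonging to different programs so as to move all $C_1$-actions in front of all $C_2$-actions; by the sub-claim each transposition preserves the current memory and the test history, and after finitely many steps the resulting world is exactly $w^\ast = (\sem{\ac_{L^2_{n'_2}}} \circ \cdots \circ \sem{\ac_{L^2_1}} \circ \sem{\ac_{L^1_{n'_1}}} \circ \cdots \circ \sem{\ac_{L^1_1}})(w)$, whence $\mem_{w'} = \mem_{w^\ast}$ and $\testHist_{w'} = \testHist_{w^\ast}$.

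The main obstacle is the commutation sub-claim: one must be careful that it holds for the whole memory (variables outside $\var(C_1)\cup\var(C_2)$ are never updated, so this part is immediate) and, more delicately, for the test histories, where commutativity rests on the parallel-composition restriction ensuring that the argument $\mem(y)$ of each test command is stable under the other program's actions, combined with commutativity and associativity of $\uplus$ and of the history-variable increments. The remaining bookkeeping — keeping track of which concrete action sequence \Lem{lemma_exec_depends_only_last_state} replays, so that the fully reordered composition literally equals the constructed $w^\ast$ — is routine but should be stated with care.
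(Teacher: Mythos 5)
Your proposal is correct, and its first half (decomposing the interleaved run via \Lem{sub:appendix:parallel:B} together with \Lem{lemma_exec_depends_only_last_state}, replaying $C_2$ from the end of the $C_1$-run using the disjointness restriction, and taking $w^\ast$ to be the ``all of $C_1$, then all of $C_2$'' world) is exactly the paper's construction. Where you diverge is in justifying $\mem_{w'}=\mem_{w^\ast}$ and $\testHist_{w'}=\testHist_{w^\ast}$: the paper argues per variable, observing that the value of each $v\in\var(C_1)$ is determined solely by the subsequence of $C_1$-actions (which is identical in both orderings), symmetrically for $\var(C_2)$, that history variables are increment-only and hence order-independent, and that test histories are multisets; you instead prove a local commutation lemma for adjacent actions of the two programs and bubble all $C_1$-actions to the front. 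Both arguments rest on the same disjointness restriction, and your route has the advantage of making explicit a point the paper glosses over with ``since the test histories are multisets'': order-independence of $\testHist$ needs not only commutativity of $\uplus$ but also that the dataset value $\mem(y)$ consumed by each $v\detassign f_{\!\alg}(y)$ is unchanged by the other program's actions, and likewise that two tests on the same read-only $y$ from the two branches still commute. The only small point to state explicitly in your version is that after a transposition the \emph{suffix} of remaining actions yields the same final $(\mem,\testHist)$, which holds because each semantic function in \App{sub:proofs:semantics:basic} depends only on the current $(\mem,\testHist)$ of its input world, not on the full state sequence; with that noted, your bookkeeping closes.
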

\begin{proof}
Let $w' \in \sem{C_1 \| C_2}(w)$.
Then $\langle C_1\|C_2,w\rangle \longrightarrow^\ast w'$. 
By \Lem{lemma_exec_depends_only_last_state}, 
there is a sequence $\ac_1,\ldots,\ac_{n'}$ of actions such that
$w' = (\sem{\ac_{n'}} \circ \cdots \circ \sem{\ac_1} )(w)$.
By \Lem{sub:appendix:parallel:B}, the sequence $\ac_1,\ldots,\ac_{n'}$ can be decomposed into two subsequences
$\ac_{L^1_1},\ldots,\ac_{L^1_{n'_1}}$ and $\ac_{L^2_1},\ldots,\ac_{L^2_{n'_2}}$ such that:
\[
\langle C_1 ,w\rangle \longrightarrow^\ast (\sem{\ac_{L^1_{n'_1}}}\circ \cdots \circ \sem{\ac_{L^1_1}}) (w), 
\quad 
\langle C_2 ,w\rangle \longrightarrow^\ast (\sem{\ac_{L^2_{n'_2}}}\circ \cdots \circ \sem{\ac_{L^2_1}}) (w).
\]
Now we define:
\[
w^\ast = (\sem{\ac_{L^2_{n'_2}}}\circ \cdots \circ \sem{\ac_{L^2_1}} 
\circ \sem{\ac_{L^1_{n'_1}}}\circ \cdots \circ \sem{\ac_{L^1_1}})(w).
\]
Then $w^\ast \in \sem{C_1 \| C_2} (w)$.
By \Lem{sub:appendix:parallel:A}, we obtain $\langle C_1\|C_2,w\rangle \longrightarrow^\ast w^\ast$.
We now show $\mem_{w'}(v) = \mem_{w^\ast}(v)$ for all $v \in \var$ as follows.
If $v \in \var(C_1)$ then 
no substitution in $C_2$ updates the value of $v$, since $\upd{C_2} \cap \var(C_{1}) = \emptyset$. 
Hence,
$\mem_{w'}(v) = \mem_{(\sem{\ac_{L^1_{n'_1}}}\circ \cdots \circ \sem{\ac_{L^1_1}}) (w))} (v) = \mem_{w^\ast}(v)$.
Symmetrically, if $v \in \var(C_2)$ then
$\mem_{w'}(v) = \mem_{(\sem{\ac_{L^2_{n'_2}}}\circ \cdots \circ \sem{\ac_{L^2_1}}) (w)} (v) = \mem_{w^\ast}(v)$.
If $v$ is a history variable, then the value of $v$ does not depend on the order of actions, because every update increases $v$ by $1$.
Thus, we conclude 
$\mem_{w'}(v) = \mem_{w^\ast}(v)$.
For the other case, the program $C_1 \| C_2$ does not change the value of $v$, hence
$\mem_{w'}(v) = \mem_{w^\ast}(v) = \mem_{w}(v)$.

Finally, since the test histories are multisets, we have $\testHist_{w'} = \testHist_{w^\ast}$.
\end{proof}

\begin{lem}
\label{lem:exchange:par:seq}
For any possible world $w\in\calw$, any formula $\phi\in\Fml$, and any interpretation function $\cali: \IntVar \rightarrow \ints^*$, we have:
\[
\sem{C_1 ; C_2}(w) \modelsi \phi
 ~\mathrel{\mathrm{ iff }}~ \sem{C_1 \| C_2}(w) \modelsi \phi.
\]
\end{lem}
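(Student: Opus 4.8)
The plan is to unfold $\sem{C}(w) \modelsi \phi$ as ``$\M, w' \modelsi \phi$ for every $w' \in \sem{C}(w)$'' and to treat the two implications separately, each via one of the preceding lemmas on parallel compositions.

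The direction $\sem{C_1 \| C_2}(w) \modelsi \phi \Rightarrow \sem{C_1; C_2}(w) \modelsi \phi$ is immediate: by Lemma~\ref{lemma_parallel_includes_sequential}, $\sem{C_1; C_2}(w) \subseteq \sem{C_1 \| C_2}(w)$, so any $w' \in \sem{C_1; C_2}(w)$ also lies in $\sem{C_1\|C_2}(w)$ and hence satisfies $\phi$.

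For the converse, fix $w' \in \sem{C_1 \| C_2}(w)$. By Lemma~\ref{lemma_uniqueness_final_result} there is $w^\ast \in \sem{C_1; C_2}(w)$ with $\mem_{w'} = \mem_{w^\ast}$ and $\testHist_{w'} = \testHist_{w^\ast}$, and by hypothesis $\M, w^\ast \modelsi \phi$. It thus suffices to prove the key claim: whenever $w'$ and $w^\ast$ arise from a common prefix by executing the same multiset of program commands in two (possibly different) orders --- as in the construction in the proof of Lemma~\ref{lemma_uniqueness_final_result} --- they satisfy exactly the same formulas, i.e.\ $\M, w' \modelsi \phi \Leftrightarrow \M, w^\ast \modelsi \phi$ for all $\phi \in \Fml$ and all $\cali$. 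I would prove this by structural induction on $\phi$, generalised over all such reordering-related pairs and all $\cali$. For atomic $\eta(u_1,\dots,u_n)$: the interpretation of each assertion term depends only on the current memory, which $w'$ and $w^\ast$ share, and the two valuations agree on every predicate that matters --- $=$ is world-independent, $\NegBowEya$ depends only on the current memory, $\kappa_S$ depends only on the test history and the values of history variables (hence on the current state), and $\leftsquigarrow$, $\leftarrowtail$ depend only on the sampling transitions present in the world, of which the two differing suffixes contain none (they consist solely of $\myskip$, assignments, and test commands), so these are exactly the samplings of the shared prefix. The Boolean connectives are routine, and for $\forall i.\,\phi$ one applies the induction hypothesis with $\cali[n/i]$ in place of $\cali$ for each $n$.

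The main obstacle is the modal case $\phi = \Know\psi$, because the truth of $\Know\psi$ at a world depends on its full observation and $\obs(w') \neq \obs(w^\ast)$ in general (the intermediate states differ). The plan here is to lift the reordering connecting $w'$ and $w^\ast$ to a bijection $\rho$ between $\{v \in \calw : \obs(v) = \obs(w')\}$ and $\{v \in \calw : \obs(v) = \obs(w^\ast)\}$. Given $v$ with $\obs(v) = \obs(w')$, and since actions and observable memories form part of the observation, $v$ splits as a prefix with the same observation as $w$ followed by a suffix that is itself a valid interleaved execution of $C_1 \| C_2$ from that prefix (programs branch only on observable data, on which $v$ agrees with $w'$; the history variables agree by the model's consistency between them and the observable test history; the remaining invisible variables are never written by a program); reordering that suffix exactly as for $w'$ produces $\rho(v)$, which by Lemmas~\ref{lemma_uniqueness_final_result}, \ref{sub:appendix:parallel:A} and~\ref{sub:appendix:parallel:B} is a world in $\calw$ with $\obs(\rho(v)) = \obs(w^\ast)$ that is reordering-related to $v$. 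The induction hypothesis applied to $\psi$ and the pair $(v, \rho(v))$ gives $\M, v \modelsi \psi \Leftrightarrow \M, \rho(v) \modelsi \psi$, and since $\rho$ is a bijection the universal statements defining $\M, w' \modelsi \Know\psi$ and $\M, w^\ast \modelsi \Know\psi$ are equivalent. This closes the induction, hence $\M, w' \modelsi \phi$, which completes the $\Rightarrow$ direction. The non-terminating case needs no separate treatment: Lemma~\ref{lemma_uniqueness_final_result} forces $\sem{C_1\|C_2}(w) = \emptyset$ whenever $\sem{C_1;C_2}(w) = \emptyset$, and then both sides hold vacuously.
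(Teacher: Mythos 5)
Your proposal is correct and follows essentially the same route as the paper: the easy direction via Lemma~\ref{lemma_parallel_includes_sequential}, and the converse by pairing each $w'\in\sem{C_1\|C_2}(w)$ with its reordered counterpart in $\sem{C_1;C_2}(w)$ (Lemma~\ref{lemma_uniqueness_final_result}) and showing by structural induction on $\phi$ that the two satisfy the same formulas, with the $\Know$ case handled exactly as you describe --- by transporting each observationally equivalent world through the same action sequence from an observationally equivalent prefix of $w$ and reordering it. The paper's write-up of the modal case is terser but encodes the same bijection you make explicit.
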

\begin{proof}
By \Lem{lemma_uniqueness_final_result}, 
for any $w' \in  \sem{C_1 \| C_2}(w)$,
there is a function $\qfun{w}: \calw \rightarrow \calw$ such that 
$\qfun{w}(w') \in \sem{C_1 ; C_2}(w)$, $\mem_{\qfun{w}(w')} = \mem_{w'}$, and $\testHist_{\qfun{w}(w')} = \testHist_{w'}$.

By \Lem{lemma_parallel_includes_sequential},
it is sufficient to show the following statement:
\[
\mbox{for all } w' \in  \sem{C_1 \| C_2}(w),~~
w' \modelsi \phi \mathrel{\mbox{ iff }} \qfun{w}(w') \modelsi \phi.
\]

We prove this by induction on the construction of the world $w$ and the formula $\phi$.
\begin{itemize}
\item Case $\phi \equiv \eta(u_1, \ldots, u_k)$.~
Since $\mem_{w'} = \mem_{\qfun{w}(w')}$ and $V_{w'}(\eta) = V_{\qfun{w}(w')}(\eta)$, 
\begin{align*}
w' \modelsi \phi
& ~\mbox{ iff }~
(\sem{u_1}_{w'}, \ldots, \sem{u_k}_{w'}) \in V_{w'}(\eta)\\
& ~\mbox{ iff }~
(\sem{u_1}_{\qfun{w}(w')}, \ldots, \sem{u_k}_{\qfun{w}(w')}) \in V_{\qfun{w}(w')}(\eta)\\
& ~\mbox{ iff }~
\qfun{w}(w') \modelsi \phi.
\end{align*}
\item Case $\phi \equiv \Know \phi_1$.~
We prove the direction from left to right, that is,
\[
w' \modelsi \Know \phi_1
\mathrel{\mbox{ implies }}
\qfun{w}(w') \modelsi \Know \phi_1.
\]
We first recall the interpretation of $\Know$.
\begin{align*}
w' \modelsi\Know \phi_1
& ~\mbox{ iff }~
\mbox{for all } w'_1 \in\calw,~ (w', w'_1) \in \relO ~~\mbox{implies}~~ w'_1 \modelsi \phi_1.
\end{align*}
Assume that $w' \modelsi\Know \phi_1$.
Let $w'_1$ be a world such that $(w', w'_1) \in \relO$.
Then $w'_1 \modelsi \phi_1$.
Let $w_1$ be a world such that $(w, w_1) \in \relO$.

Since $w' \in \sem{C_1 \| C_2}(w)$, 
by \Lem{lemma_exec_depends_only_last_state}, there is a sequence $\ac_{1},\ldots,\ac_{n'}$ of actions such that $w' = (\sem{\ac_{n'}}\circ \cdots \circ \sem{\ac_{1}})(w)$.
Let $\ac_{L^1_1},\ldots,\ac_{L^1_{n'_1}}$ and $\ac_{L^2_1},\ldots,\ac_{L^2_{n'_2}}$ be the sequences of actions in $C_1$ and in $C_2$, respectively.
By constructions in Lemma \ref{lemma_uniqueness_final_result} (using Lemma \ref{sub:appendix:parallel:B}),
we then obtain:
\[
\qfun{w}(w') = (\sem{\ac_{L^2_{n'_2}}}\circ \cdots \circ \sem{\ac_{L^2_{1}}} \circ \sem{\ac_{L^1_{n'_1}}}\circ \cdots \circ \sem{\ac_{L^1_1}})(w),
\]
where 
the sequence $\ac_{1},\ldots,\ac_{n'}$ is decomposed into the subsequences 
$\ac_{L^1_1},\ldots,\ac_{L^1_{n'_1}}$ and 
$\ac_{L^2_1},\ldots,\ac_{L^2_{n'_2}}$.

Since $C_1 \| C_2$ reads only observable variables, and $(w', w'_1) \in \relO$ and $(w, w_1) \in \relO$, 
we obtain $w_1' = (\sem{\ac_{n'}}\circ \cdots \circ \sem{\ac_{1}})(w_1)$ and $w'_1 \in \sem{C_1 \| C_2}(w_1)$.
By induction hypothesis, we have $\qfun{w_1}(w'_1) \modelsi \phi_1$.

Let $w'_2$ be a world such that $(\qfun{w}(w') , w'_2) \in \relO$.
Then we have $w'_2 = (\sem{\ac_{L^2_{n'_2}}}\circ \cdots \circ \sem{\ac_{L^2_{1}}} \circ \sem{\ac_{L^1_{n'_1}}}\circ \cdots \circ \sem{\ac_{L^1_1}})(w_1)$.
Since $w_1' = (\sem{\ac_{n'}}\circ \cdots \circ \sem{\ac_{1}})(w_1)$,
we obtain $\qfun{w_1}(w'_1) = w'_2$.
Hence, we obtain $w'_2 \modelsi \phi_1$ from $\qfun{w_1}(w'_1) \modelsi \phi_1$.

Since $w'_2$ is an arbitrary possible world such that $(\qfun{w}(w') , w'_2) \in \relO$, 
we conclude $\qfun{w}(w') \modelsi \Know \phi_1$.

The direction from right to left can be proved straightforwardly by \Lem{lemma_parallel_includes_sequential}.

\item Cases $\phi \equiv \neg \phi_1$, $\phi \equiv \phi_1 \land \phi_2$ and $\phi \equiv \forall i. \phi_1$.~
The statement is proved immediately by induction hypothesis.
\end{itemize}
\end{proof}

\subsection{Proof for BHL's Soundness}
\label{sub:proof:soundness}

To prove BHL's soundness and relative completeness,
we show the following lemma.
\begin{restatable}{lem}{LemExpressAssign} \label{lem:LemExpressAssign}
Let $\psi\in\Fml$, and $\cali$ be any interpretation function over $\IntVar$.
Then:
\begin{align}
\sem{\myskip}(w)  \modelsi \psi 
&~~\mbox{ iff }~~
w \modelsi \psi
\label{lem:LemExpressAssign:skip}
\\
\sem{v \detassign e}(w)  \modelsi \psi 
&~~\mbox{ iff }~~
w \modelsi \psi[\nicefrac{e}{v}]
\label{lem:LemExpressAssign:asgn} 
\\
\sem{v \detassign f_{\alg}(y)}(w) \modelsi \psi
&~~\mbox{ iff }~~
 w \modelsi \psi[\nicefrac{f_{\alg}(y)}{v}, \nicefrac{h_{y,\alg}+1}{h_{y,\alg}}].
\label{lem:LemExpressAssign:test} 
\end{align}
\end{restatable}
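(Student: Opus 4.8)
The plan is to prove the three equivalences simultaneously by induction on the structure of the formula $\psi$. Write $\sigma_c$ for the substitution attached to the command $c$: the empty substitution for $\myskip$, the substitution $[\nicefrac{e}{v}]$ for $v\detassign e$, and $[\nicefrac{f_\alg(y)}{v},\nicefrac{h_{y,\alg}+1}{h_{y,\alg}}]$ for $v\detassign f_\alg(y)$; in each case executing $c$ turns $w$ into a world $\sem{c}(w)$ whose current state has the memory obtained from $\mem_w$ by precisely the update that $\sigma_c$ describes syntactically. The first thing I would record is the companion \emph{term}-substitution identity $\sem{u[\sigma_c]}_{\mem_w}^{\cali}=\sem{u}_{\mem_{\sem{c}(w)}}^{\cali}$ for every assertion term $u$, proved by an easy induction on $u$; here one uses that $v\in\varO$ and $h_{y,\alg}\in\varI$ so $v\neq h_{y,\alg}$, that the substitutions are simultaneous, and that the $p$-value $\sem{f_\alg(y)}_{\mem_w}$ depends only on $\mem_w(y)$ and the static data of $\alg$.

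For the atomic case $\psi\equiv\eta(u_1,\dots,u_n)$ I would go through the predicate symbols. For $=$, $\leftsquigarrow$, $\leftarrowtail$, and $\NegBowEya$ the valuation is either independent of the world or determined by $\mem_w(y)$ together with static test data; since executing $c$ only appends a state and introduces no sampling transition, $V_{\sem{c}(w)}(\eta)=V_w(\eta)$, and the equivalence collapses to the term-substitution identity applied to $u_1,\dots,u_n$. The one non-routine ingredient is $\kappaS$ — hence also the desugared $\TestBowE{y,\alg}$, which is a conjunction of a $\NegA{y,\alg}$-predicate and a $\kappa$-formula: for $v\detassign f_\alg(y)$ one must invoke the consistency invariant of \Sec{sub:program:semantics}, namely that $\mem_w(h_{y,\alg})$ equals the multiplicity of $\alg$ in $\testHist_w(\mem_w(y))$, so that incrementing $h_{y,\alg}$ and updating $\testHist_w\mapsto\testHist_w\uplus\{\mem_w(y)\mapsto\{\alg\}\}$ keep the history variables and the test history in lock-step; this makes $\sem{v\detassign f_\alg(y)}(w)\modelsi\kappaS$ equivalent to $w\modelsi\kappaS[\nicefrac{h_{y,\alg}+1}{h_{y,\alg}}]$. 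The Boolean and quantifier cases $\neg\phi$, $\phi\land\phi'$, and $\forall i.\phi$ follow at once from the induction hypothesis, since $\sigma_c$ touches only program and history variables while $\forall$ binds only integer variables of $\IntVar$, so the substitution is capture-free and commutes with the update $\cali\mapsto\cali[n/i]$.

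The case $\psi\equiv\Know\phi$ is the crux, and I expect it to be the main obstacle. Since $\Know$ binds no program variable, $(\Know\phi)[\sigma_c]=\Know(\phi[\sigma_c])$, so it suffices to show $\sem{c}(w)\modelsi\Know\phi$ iff $w\modelsi\Know(\phi[\sigma_c])$. One direction is easy: if $w'\relO w$, then because $c$ reads only observable variables, $\sem{c}(w')\relO\sem{c}(w)$ (an instance of \Lem{lemma_exec_depends_only_last_state}), so $\sem{c}(w)\modelsi\Know\phi$ gives $\sem{c}(w')\modelsi\phi$, whence the induction hypothesis at $w'$ yields $w'\modelsi\phi[\sigma_c]$. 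For the converse I would, given $w_1\relO\sem{c}(w)$, take the prefix $w_1'$ of $w_1$ of length $\len{w}$, note $w_1'\relO w$ so that $w_1'\modelsi\phi[\sigma_c]$, apply the induction hypothesis to get $\sem{c}(w_1')\modelsi\phi$, and finally identify $w_1=\sem{c}(w_1')$. This last identification is the delicate step: it relies on the fact that in the models we consider the current state of a world is the image of its penultimate state under the functional, deterministic transition labelled by the recorded last action — i.e. that the worlds are execution traces — and the resulting bijection between the $\relO$-class of $\sem{c}(w)$ and that of $w$ plays here the role that the map $\qfun{w}$ plays in the proof of \Lem{lem:exchange:par:seq}. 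I would isolate this correspondence as a short auxiliary lemma in the spirit of \Lem{lemma_exec_depends_only_last_state} and then close the $\Know$ case in one line; once it is in place, the three equivalences for $\myskip$, $v\detassign e$, and $v\detassign f_\alg(y)$ all follow by specializing $\sigma_c$.
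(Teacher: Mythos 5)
Your proposal is correct and follows essentially the same route as the paper's proof: induction on $\psi$, with the atomic case reduced to a term-substitution identity plus invariance of the valuation, the Boolean and quantifier cases immediate, and the $\Know$ case handled by the same prefix/extension correspondence between the $\relO$-class of $w$ and that of $\sem{c}(w)$ (the paper takes $w_2 = w_1[0];\cdots;w_1[\len{w_1}-2]$ and asserts $\sem{c}(w_2)=w_1$ exactly as you do). Your only departures are presentational — treating the three commands uniformly via the substitution $\sigma_c$, and explicitly flagging the ``worlds are execution traces'' assumption behind the identification $w_1=\sem{c}(w_2)$, which the paper uses silently.
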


\begin{proof}[Proof of \eqref{lem:LemExpressAssign:skip} in \Lem{lem:LemExpressAssign}]
Let $w' \eqdef \sem{\myskip}(w) = w;(\mem_w,\myskip,\testHist_w)$.
We prove the statement by induction on $\psi$ as follows.
\begin{itemize}
\item Case $\psi \equiv \eta(u_1, \ldots, u_k)$.~
Since $V_{w}(\eta) = V_{w'}(\eta)$ and $\mem_{w} = \mem_{w'}$, we obtain:
\begin{align*}
\sem{\myskip}(w) \modelsi \eta(u_1, \ldots, u_k)
&~~\mbox{ iff }~~ w' \modelsi \eta(u_1, \ldots, u_k)\\
&~~\mbox{ iff }~~ (\sem{u_1}_{w'}^{\cali}, \ldots, \sem{u_k}_{w'}^{\cali}) \in V_{w'}(\eta)\\
&~~\mbox{ iff }~~ (\sem{u_1}_{w}^{\cali}, \ldots, \sem{u_k}_{w}^{\cali}) \in V_{w}(\eta)\\
&~~\mbox{ iff }~~  w \modelsi \eta(u_1, \ldots, u_k).
\end{align*}
\item Case $\psi \equiv \Know \phi$.~
We have:
\begin{align}
\sem{\myskip}(w) \modelsi \Know \phi 
&~\mbox{ iff }~ w' \modelsi \Know \phi  \nonumber \\
&~\mbox{ iff }~
\mbox{for all $w_1 \in \calw$,~ $(w', w_1) \in \relO$}
~\mbox{implies}~ w_1 \modelsi \phi \nonumber \\
&~\mbox{ iff }~
\mbox{for all $w_2 \in \calw$,~ $(w, w_2) \in \relO$}
~\mbox{implies}~ w_2 \,{\modelsi} \phi.\!%
& \mbox{($\dag$)} \nonumber
\end{align}
The last equivalence 
($\dag$)
is proved as follows.
To show the direction from left to right,
let $(w, w_2) \in \relO$.
We take $w_1 = \sem{\myskip}(w_2) = w_2;(\mem_{w_2},\myskip,\testHist_{w_2})$.
We then obtain $(w', w_1) \in \relO$, hence $w_1 \modelsi \phi$.
By induction hypothesis, we conclude $w_2 \modelsi \phi$.
To show the other direction,
let $(w', w_1) \in \relO$.
We take $w_2 = w_1[0];\cdots;w_1[\len{w_1}-2]$.
We then obtain $(w,w_2) \in \relO$ and $\sem{\myskip}(w_2) =  w_1$.
Hence $w_2 \modelsi \phi$.
By induction hypothesis, we conclude $w_1 \modelsi \phi$.

\item Cases $\psi \equiv \neg \phi$, $\psi \equiv \phi_1 \land \phi_2$, $\psi \equiv \forall i. \phi$.~
Immediate by induction hypothesis.
\end{itemize}
\end{proof}

\begin{proof}[Proof of \eqref{lem:LemExpressAssign:asgn} in \Lem{lem:LemExpressAssign}]
Let $w' \eqdef \sem{v \detassign e}(w) = w;(\mem_w[v \mapsto \sem{e}_w],v \detassign e,\testHist_w)$.
We prove the statement by induction on $\psi$ as follows.
\begin{itemize}
\item Case $\psi \equiv \eta(u_1, \ldots, u_k)$.~
For any term $u$, we have $\sem{u}_{w'}^{\cali} = \sem{u_1[\nicefrac{e}{v}]}_{w}^{\cali}$.
Since $V_{w}(\eta) = V_{w'}(\eta)$ and $\mem_{w'} = \mem_w[v \mapsto \sem{e}_w]$, we have:
\begin{align*}
\sem{v \detassign e}(w) \modelsi \eta(u_1, \ldots, u_k)
&~~\mbox{ iff }~~ w' \modelsi \eta(u_1, \ldots, u_k)\\
&~~\mbox{ iff }~~ (\sem{u_1}_{w'}^{\cali}, \ldots, \sem{u_k}_{w'}^{\cali}) \in V_{w'}(\eta)\\
&~~\mbox{ iff }~~ (\sem{u_1[\nicefrac{e}{v}]}_{w}^{\cali}, \ldots, \sem{u_k[\nicefrac{w}{v}]}_{w}^{\cali}) \in V_{w}(\eta)\\
&~~\mbox{ iff }~~  w \modelsi \eta(u_1, \ldots, u_k)[\nicefrac{e}{v}].
\end{align*}

\item Case $\psi \equiv \Know \phi$.~
We have:
\begin{align}
\sem{v \detassign e}(w) \,{\modelsi}\, \Know \phi 
&\mbox{ iff }~ w' \modelsi \Know \phi \nonumber \\
&\mbox{ iff }~
\mbox{for all $w_1 \,{\in}\, \calw$, $(w', w_1) \,{\in}\, \relO$}
~\mbox{implies}~ w_1 \modelsi \phi \nonumber \\
&\mbox{ iff }~
\mbox{for all $w_2 \,{\in}\, \calw$, $(w, w_2) \,{\in}\, \relO$}
~\mbox{implies}~ w_2 \modelsi \phi[\nicefrac{e}{v}].
~\mbox{ ($\dag$)} \nonumber
\end{align}
The last equivalence 
($\dag$) 
is derived
as with the proof of \eqref{lem:LemExpressAssign:skip}.
To show the direction from left to right, 
let $(w, w_2) \in \relO$
and $w_1 = \sem{v \detassign e}(w_2) = w_2;(\mem_{w_2}[v \mapsto \sem{e}_{w_2}],v \detassign e,\testHist_{w_2})$.
Then $(w', w_1) \in \relO$.
By induction hypothesis, we conclude $w_2 \modelsi \phi[\nicefrac{e}{v}]$.
To show the other direction, let $(w', w_1) \in \relO$
and $w_2 = w_1[0];\cdots;w_1[\len{w_1}-2]$.
Then $\sem{v \detassign e}(w_2) =  w_1$ and $(w, w_2) \in \relO$.
By induction hypothesis, we conclude $w_1 \modelsi \phi$.

\item Cases $\psi \equiv \neg \phi$, $\psi \equiv \phi_1 \land \phi_2$, $\psi \equiv \forall i. \phi$.~ 
Immediate by induction hypothesis.
\end{itemize}
\end{proof}

\begin{proof}[Proof of \eqref{lem:LemExpressAssign:test} in \Lem{lem:LemExpressAssign}]
We prove \eqref{lem:LemExpressAssign:test} by induction on $\psi$ as follows.
We define:
\begin{align*}
w'
&\eqdef \sem{v \detassign f_{\alg}(y)}(w) = w;
(
\mem_w[v \mapsto \sem{f_{\alg}(y)}_w, h_{y,\alg} \mapsto \sem{h_{y,\alg}}_w + 1],
\ac',\testHist'
)
\end{align*}
where $\ac' = v \detassign f_{\alg}(y)$ and $\testHist' = \testHist_w \uplus \{\mem_w (y) \mapsto \{\alg\} \} $.
\begin{itemize}
\item Case $\psi \equiv \eta(u_1, \ldots, u_k)$.~
As with the proof of \eqref{lem:LemExpressAssign:asgn}, we obtain:
\begin{align*}
\sem{v \detassign f_{\alg}(y)}(w) \modelsi \eta(u_1, \ldots, u_k)
&~~\mbox{ iff }~~  w \modelsi \eta(u_1, \ldots, u_k)[\nicefrac{f_{\alg}(y)}{v}, \nicefrac{h_{y,\alg}+1}{h_{y,\alg}}].
\end{align*} 

\item Case $\psi \equiv \Know \phi$.~
We have:
\begin{align}
&\sem{v \detassign f_{\alg}(y)}(w) \modelsi \Know \phi \nonumber \\
&\mbox{iff }~ w' \modelsi \Know \phi \nonumber \\
&\mbox{iff }~
\mbox{for all $w_1 \in\calw$,~ $(w', w_1) \in \relO$}
~\mbox{implies}~ w_1 \modelsi \phi \nonumber \\
&\mbox{iff }~
\mbox{for all $w_2 \in\calw$,~ $(w, w_2) \in \relO$}
~\mbox{implies}~ w_2 \modelsi \phi[\nicefrac{f_{\alg}(y)}{v}, \nicefrac{h_{y,\alg}+1}{h_{y,\alg}}]
{.}
~~\mbox{ ($\dag$)} \nonumber
\end{align}
The last equivalence 
($\dag$)
is proved in a similar way to \eqref{lem:LemExpressAssign:asgn}.
To show the direction from left to right,
let $(w, w_2) \in \relO$.
We define $w_1$ by:
\begin{align*}
w_1 
&\eqdef \sem{v \detassign f_{\alg}(y)}(w_2)
= w_2;
\Big(\,
\substack{
\mem_{w_2}[v \mapsto {\scriptsize\sem{f_{\alg}(y)}_{w_2}},\, h_{y,\alg} \mapsto {\scriptsize\sem{h_{y,\alg}}_{w_2}} + 1],\\
v \detassign f_{\alg}(y),\testHist_{w_2} \uplus \{\mem_{w_2}(y) \mapsto \{\alg\} \}
\hspace{6ex}
}
\,\Big).
\end{align*}
Then $(w', w_1) \in \relO$.
By induction hypothesis, we conclude 
$w_2 \modelsi \phi[\nicefrac{f_{\alg}(y)}{v}, \allowbreak \nicefrac{h_{y,\alg}+1}{h_{y,\alg}}]$.
To show the other direction,
let $(w', w_1) \in \relO$
and $w_2 = w_1[0]; \allowbreak \cdots;w_1[\len{w_1}-2]$.
Then $\sem{v \detassign f_{\alg}(y)}(w_2) =  w_1$ and $(w, w_2) \in \relO$.
By induction hypothesis, we conclude $w_1 \modelsi \phi$.

\item Cases $\psi \equiv \neg \phi$, $\psi \equiv \phi_1 \land \phi_2$, $\psi \equiv \forall i. \phi$.~ 
Immediate by induction hypothesis.
\end{itemize}

\end{proof}

Now we prove the soundness of BHL
as follows.

\PropSound*

\begin{proof}[Proof for Theorem~\ref{thm:PropSound}]
We obtain the validity of the axioms and rules for basic constructs in \Fig{fig:rules:basic} as usual.

We show the validity of \axHist{} as follows.
Recall that the precondition in \axHist{} is 
$\psiPre \eqdef \psi[v \,{\mapsto} f_{\alg}\!(y),\, h_{y,\alg} \mapsto (h_{y,\alg}+1)]$.
Let $w = (\mem_{w}, \ac_{w}, \testHist_{w})$ be a possible world such that $w \models \psiPre$.
Let $w' \eqdef \allowbreak \sem{v := f_{\alg}(y)}(w)$.
By the semantics, we have
$\mem_{w'} = \mem_{w}[v \,{\mapsto} \sem{f_{\alg}(y)}_{\mem_{w}},\, h_{y,\alg} \mapsto \sem{h_{y,\alg}+1}_{\mem_{w}}]$.
By \Lem{lem:LemExpressAssign}, 
we obtain $w' \models \psi$.
Therefore, $(\envI\!,\, \envO) \vdash \triple{ \psiPre }{v := f_{\alg}(y)}{\psi}$ is valid.

We show the validity of \axPAR{} as follows.
Assume that $(\envI, \envO) \vdash\triple{\psi}{C_1; C_2}{\psi'}$.
Let $w$ be a world such that $w \models \psi$.
Then $\sem{C_1; C_2}(w) \models \psi'$.
By \Lem{lem:exchange:par:seq} in \App{sub:appendix:parallel},
we obtain $\sem{C_1 \,\|\, C_2}(w) \models \psi'$.
Hence, $(\envI\!,\, \envO) \vdash\triple{\psi}{C_1 \,\|\, C_2}{\psi'}$.

Therefore, we obtain BHL's soundness.
\end{proof}

\subsection{Proof for BHL's Relative Completeness}
\label{sub:proof:complete}

To prove BHL's relative completeness (\Thm{thm:PropComplete}),
we show the notions of extension and weakest preconditions as follows.

\begin{definition}[Extension] \label{def:extension} \rm
For a Kripke model $\M$ with a domain $\calw$ and an interpretation function $\cali$, we define the \emph{extension} of a formula $\phi\in\Fml$ by:
\begin{align*}
\phi^{\cali} \eqdef
\{ w\in\calw \mid w \modelsi \phi \}.
\end{align*}
For a $\phi\in\Fml$ and a set $S$ of worlds, we write $S \models \phi$ iff $w \models \phi$ for all $w\in S$.
\end{definition}

\begin{definition}[Weakest precondition] \label{def:wp} \rm
The \emph{weakest precondition} 
of a formula $\phi$ w.r.t. a program $C$ in $\cali$ is defined by:
\begin{align*}
\wpc{\cali}(C, \phi) \eqdef
\{ w \in \calw \mid \sem{C}(w) \modelsi \phi \}.
\end{align*}
\end{definition}

To derive BHL's relative completeness, we show the expressiveness of \ELHT{}.

\begin{restatable}[Expressiveness]{prop}{PropExpressive}
\label{prop:PropExpresive}
The assertion language 
\ELHT{} is expressive;
i.e., for every program $C$ and every formula $\phi\in\Fml$, there is a formula $F\in\Fml$ such that 
$F^{\cali} = \wpc{\cali}(C, \phi)$.
\end{restatable}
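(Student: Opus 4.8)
The plan is to prove \Propo{prop:PropExpresive} by induction on the structure of the program $C$, in the stronger form that asserts, for every $C$ and every $\phi\in\Fml$, the existence of a formula $F\in\Fml$ (not mentioning $\cali$) with $F^{\cali}=\wpc{\cali}(C,\phi)$ for every interpretation function $\cali$. The atomic cases are read off \Lem{lem:LemExpressAssign}: for $\myskip$ take $F\eqdef\phi$; for $v\detassign e$ take $F\eqdef\phi[\nicefrac{e}{v}]$; and for a hypothesis-test command $v\detassign f_{\alg}(y)$ take $F\eqdef\phi[\nicefrac{f_{\alg}(y)}{v},\,\nicefrac{h_{y,\alg}+1}{h_{y,\alg}}]$. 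Substituting the assertion term $f_{\alg}(y)$ (which may mention the invisible history variable $h_{y,\alg}$) for a variable preserves membership in $\Fml$ (in particular, the restriction that no quantifier occurs under $\Know$), so $F\in\Fml$ in each case.

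The compound constructs other than the loop are routine uses of the operational semantics. For $C_1;C_2$, \Lem{lemma_seq_dec} together with its converse gives $\sem{C_1;C_2}(w)=\bigcup_{w'\in\sem{C_1}(w)}\sem{C_2}(w')$, whence $\wpc{\cali}(C_1;C_2,\phi)=\wpc{\cali}(C_1,F_2)$, where $F_2$ is supplied by the induction hypothesis for $C_2$ applied to $\phi$; a second appeal to the induction hypothesis, for $C_1$ applied to $F_2$, finishes this case. For $\myIf{e}{C_1}{C_2}$, which appends no state and branches on $\sem{e}_{\sigmaw}$, take $F\eqdef(e\land F_1)\lor(\neg e\land F_2)$, with $e$ identified with its \ELHT{} translation and $F_i$ expressing $\wpc{\cali}(C_i,\phi)$ by induction. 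For $C_1\,\|\,C_2$, \Lem{lem:exchange:par:seq} gives $\sem{C_1\,\|\,C_2}(w)\modelsi\phi$ iff $\sem{C_1;C_2}(w)\modelsi\phi$, so $\wpc{\cali}(C_1\,\|\,C_2,\phi)=\wpc{\cali}(C_1;C_2,\phi)$ and the previous case applies verbatim.

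The main obstacle is the loop $C=\myLoop{e}{C'}$, for which I would follow the classical route to relative completeness of \textsc{While}-programs~\cite{Cook:78:siamcomp}. One first checks that $\wpc{\cali}(\myLoop{e}{C'},\phi)$ is the greatest fixed point of the monotone operator $\Phi$ on sets of possible worlds given by $\Phi(P)=\{w:\sem{e}_{\sigmaw}=\myfalse\Rightarrow w\modelsi\phi\}\cap\{w:\sem{e}_{\sigmaw}=\mytrue\Rightarrow\sem{C'}(w)\subseteq P\}$; since $\Phi$ is co-continuous, this greatest fixed point equals $\bigcap_{k\in\nats}\Phi^{k}(\calw)$, and $w\in\Phi^{k}(\calw)$ says precisely that every run of the loop from $w$ that performs at most $k-1$ iterations of $C'$ before the guard fails ends in a $\phi$-state. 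It then suffices to produce a single formula $G$ with a distinguished free integer variable $n$ such that $G^{\cali[k/n]}=\Phi^{k}(\calw)$ for all $k$, and to set $F\eqdef\forall n.\,G$. Constructing $G$ is the delicate point: it requires \ELHT{} to describe, uniformly in $k$, finite computation histories of the fixed body $C'$, which I would do exactly as in Cook's argument, coding such histories with the quantifiers of \ELHT{} over integer variables (which range over finite integer tuples) and the coding/decoding function symbols in $\Fsym$, and using the induction hypothesis for $C'$ to express the one-step relation advancing a history. The bookkeeping I expect to be hardest is making this coding cohere with the invisible and history variables and with the $\Know$-modality, so that every part of $G$ stays quantifier-free under $\Know$ and the resulting $F$ genuinely lies in $\Fml$. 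Once \Propo{prop:PropExpresive} is available, \Thm{thm:PropComplete} follows in the standard manner: by induction on $C$ one derives $\env\vdash\triple{F}{C}{\phi}$ for $F$ expressing $\wpc{\cali}(C,\phi)$ using the rules of \Fig{fig:rules:basic} and \Fig{fig:rules:HT}, and any valid $\env\vdash\triple{\psi}{C}{\phi}$ then follows by (\ruleConseq{}) from the valid assertion $\env\models\psi\implies F$.
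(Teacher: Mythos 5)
Your proposal is correct and follows essentially the same route as the paper: identical choices in the atomic, sequential, conditional, and parallel cases (via Lemma~\ref{lem:LemExpressAssign} and Lemma~\ref{lem:exchange:par:seq}), and for the loop your greatest-fixed-point characterization $\bigcap_k\Phi^k(\calw)$ unfolds to exactly the condition the paper writes down, with the same Cook-style coding of finite computation histories by quantification over integer tuples and the same use of the induction hypothesis for $C'$ (as $\neg\wpf{\myfalse}{C'}$ together with $\wpf{(\ov{v}=\ov{s_{i+1}})}{C'}$) to express the one-step relation. The only difference is that the paper writes the resulting formula \eqref{eq:complete:converted:form} out explicitly, whereas you defer that bookkeeping; the concerns you raise there (keeping quantifiers outside $\Know$, handling invisible and history variables) are resolved in the paper simply by letting $\ov{v}$ range over all observable and invisible variables of $C$ and $\phi$ and substituting concrete value tuples $\ov{s_i}$ for them.
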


\begin{proof}
Let $w\in\calw$ and $\phi\in\Fml$.
By the definition of the weakest precondition, it is sufficient to prove that there is a formula 
$\wpf{\phi}{C} \in\Fml$ 
such that:
\begin{align}\label{eq:expressive:proof}
w \modelsi \wpf{\phi}{C}
~\mbox{ iff }~
\sem{C}(w) \modelsi \phi
{.}
\end{align}

We show this by induction on the program $C$.
The proof is analogous to~\cite{winskel} except for the case of parallel composition.
\begin{itemize}
\item Case $C \equiv \myskip{}$.~
Let $\wpf{\phi}{C} \eqdef \phi$.
Then:
\begin{align*}
w \modelsi \wpf{\phi}{C}
~\mbox{ iff }~&
w \modelsi \phi
&(\text{by Lemma \ref{lem:LemExpressAssign}  \eqref{lem:LemExpressAssign:skip}})
\\~\mbox{ iff }~&
\sem{\myskip{}}(w) \modelsi \phi
{.}
\end{align*}

\item 
Case $C \equiv {v \detassign e}$.~
Let $\wpf{\phi}{C} \eqdef \phi[\nicefrac{e}{v}]$.
Then:
\begin{align*}
w \modelsi \wpf{\phi}{C}
~\mbox{ iff }~&
w \modelsi \phi[\nicefrac{e}{v}]
&(\text{by Lemma \ref{lem:LemExpressAssign}  \eqref{lem:LemExpressAssign:asgn}})
\\~\mbox{ iff }~&
\sem{v \detassign e}(w) \modelsi \phi
{.}
\end{align*}

\item 
Case $C \equiv v \detassign f_{\alg}(y)$.~
Let $\wpf{\phi}{C} \eqdef \phi[\nicefrac{f_{\alg}(y)}{v}, \allowbreak \nicefrac{(h_{y,\alg}+1)}{h_{y,\alg}}]$.
Then we have:
\begin{align*}
w \modelsi \wpf{\phi}{C}
~~\mbox{ iff }~~&
w \modelsi \phi[\nicefrac{f_{\alg}(y)}{v}, \nicefrac{(h_{y,\alg}+1)}{h_{y,\alg}}]
&(\text{by Lemma \ref{lem:LemExpressAssign} \eqref{lem:LemExpressAssign:test}})
\\~~\mbox{ iff }~~&
\sem{v \detassign f_{\alg}(y)}(w) \modelsi \phi
{.}
\end{align*}

\item 
Case $C \equiv C_1 ; C_2$.~
Let
$\wpf{\phi}{C} \eqdef \wpf{\wpf{\phi}{C_2}}{C_1}$.
Then: 
\begin{align*}
w \modelsi \wpf{\phi}{C}
~\mbox{ iff }~&
w \modelsi \wpf{\wpf{\phi}{C_2}}{C_1}
\\~\mbox{ iff }~&
\sem{C_1}(w) \modelsi \wpf{\phi}{C_2}
& \mbox{ (by induction hypothesis)}
\\~\mbox{ iff }~&
\sem{C_2}(\sem{C_1}(w)) \modelsi \phi
& \mbox{ (by induction hypothesis)}
\\~\mbox{ iff }~&
\sem{C_1 ; C_2}(w) \modelsi \phi
{.}
\end{align*}

\item 
Case $C \equiv C_1 \| C_2$.~
Let
$\wpf{\phi}{C} \eqdef \wpf{\wpf{\phi}{C_2}}{C_1}$.
Then:
\begin{align*}
w \modelsi \wpf{\phi}{C}
~\mbox{ iff }~&
w \modelsi \wpf{\wpf{\phi}{C_2}}{C_1}
& \mbox{ (by applying the case $C = C_1 ; C_2$)}
\\~\mbox{ iff }~&
\sem{C_1 ; C_2}(w) \modelsi \phi
& \mbox{ (by \Lem{lem:exchange:par:seq})}
\\~\mbox{ iff }~&
\sem{C_1 \| C_2}(w) \modelsi \phi
{.}
\end{align*}

\item 
Case $C \equiv \myIf{e}{C_1}{C_2}$.~
Let 
$
\wpf{\phi}{C} \eqdef (e\land \wpf{\phi}{C_1}) \lor (\neg e\land \wpf{\phi}{C_2})
$.
Then:
\begin{align*}
\lefteqn{w \modelsi \wpf{\phi}{C}}\\
&\mbox{iff }~
w \modelsi (e\land \wpf{\phi}{C_1}) \lor (\neg e\land \wpf{\phi}{C_2})
\\&\mbox{iff }\mbox{either}
\left( \sem{e}_{w} = \mytrue \mbox{ and } w \modelsi \wpf{\phi}{C_1} \right)
\mbox{ or }
\left( \sem{e}_{w} = \myfalse \mbox{ and } w \modelsi \wpf{\phi}{C_2} \right)
\\&\mbox{iff }\mbox{either}
\left( \sem{e}_{w} = \mytrue \mbox{ and } \sem{C_1}(w) \modelsi \phi \right)
\mbox{ or }
\left( \sem{e}_{w} = \myfalse \mbox{ and } \sem{C_2}(w) \modelsi \phi \right)
\\ &\hspace{45ex} \mbox{ (by induction hypotheses)}
\\[-2ex]&\mbox{iff }~
\sem{C}(w) \modelsi \phi
{.}
\end{align*}

\item 
Case $C \equiv \myLoop{e}{C'}$.~
The way of the proof is similar to~\cite{winskel}.
By the semantics of \Prog{}, $\sem{C}(w) \modelsi \phi$ is logically equivalent to:
\begin{align}
\forall k~ \forall w_0, \ldots, w_k \in \calw. \nonumber
&~~~ w = w_0 \mbox{ and } \nonumber \\
&~~~ \forall i=0, \ldots, k.\,  \bigl( w_i \modelsi e \mbox{ and } \sem{C'}(w_i) = w_{i+1} \bigr) \nonumber \\
&~~~\hspace{13.5ex} \mbox{implies } w_k \modelsi e \lor \phi
{.}
\label{eq:complete:with-world}
\end{align}

To describe this using the assertion language \ELHT{}, we replace the possible worlds $w_i$ ($i = 0, \ldots, k$) with equivalent assertions as follows.
Let $\ov{v} = (v_1, \ldots, v_l)$ be all observable and invisible variables occurring in $C$ or $\phi$.
Then $\ov{v} \cap \IntVar = \emptyset$.
For $i = 0, \ldots, k$ and $j = 1, \ldots, l$, let $s_{ij} = w_i(v_j)$ and
$\ov{s_i} = (s_{i1}, \ldots, s_{il})\in \ints^{l}$.
We write $\phi[\nicefrac{\ov{s_i}}{\ov{v}}]$ for the assertion obtained by the simultaneous substitution of $\ov{s_i}$ for $\ov{v}$ in $\phi$.
Then each $w_i$ can be converted into the equivalent substitution $[\nicefrac{\ov{s_i}}{\ov{v}}]$ as follows:
for any interpretation function $\cali$,
\begin{align}
w_i \modelsi \phi
~~\mbox{ iff }~~ &
\modelsi \phi[\nicefrac{\ov{s_i}}{\ov{v}}]
~~\mbox{ iff }~~ 
w \modelsi \phi[\nicefrac{\ov{s_i}}{\ov{v}}].
\label{eq:complete:remove-world}
\end{align}
Then neither observable nor invisible variable occurs in $\phi[\nicefrac{\ov{s_i}}{\ov{v}}]$.

By \eqref{eq:complete:remove-world}, we can replace the worlds $w_0, \ldots, w_k$ in \eqref{eq:complete:with-world} with their corresponding substitutions $[\nicefrac{\ov{s_0}}{\ov{v}}], \ldots, [\nicefrac{\ov{s_k}}{\ov{v}}]$.
Thus, \eqref{eq:complete:with-world} is logically equivalent to:
\begin{align}
\forall k~ \forall \ov{s_0}, \ldots, \ov{s_k} \in \ints^{l}. \nonumber 
&~~~ w \modelsi (\ov{v}= \ov{s_0}) \mbox{ and } \nonumber \\
&~~~ \forall i=0, \ldots, k.\, 
\bigl( w \modelsi e[\nicefrac{\ov{s_i}}{\ov{v}}] \mbox{ and } 
\sem{C'}(w_i) = w_{i+1} \bigr) \nonumber \\
&~~~\hspace{13.5ex} \mbox{implies } w \modelsi (e \lor \phi)[\nicefrac{\ov{s_k}}{\ov{v}}]
{.}
\label{eq:complete:converted1}
\end{align}

To express $\sem{C'}(w_i) = w_{i+1}$ as a formula, we derive:
\begin{align}
&
\sem{C'}(w_i) = w_{i+1}
 \nonumber \\ &
\mbox{ iff }~
\sem{C'}(w_i) \neq \emptyset
~\mbox{ and }~
\sem{C'}(w_i) \modelsi \ov{v}=\ov{s_{i+1}}
& \hspace{0.5ex}\mbox{(by induction hypothesis)}\hspace{-3ex}
\nonumber \\ &\mbox{ iff }~
w_i \modelsi 
\neg\wpf{\myfalse}{C'}
~\mbox{ and }~
w_i \modelsi 
\wpf{(\ov{v} = \ov{s_{i+1}})}{C'}
& \mbox{(by \eqref{eq:complete:remove-world})}\hspace{-3ex}
\nonumber \\ &\mbox{ iff }~
w \modelsi 
\neg\wpf{\myfalse}{C'} [\nicefrac{\ov{s_i}}{\ov{v}}]
\land
\wpf{(\ov{v} = \ov{s_{i+1}})}{C'} [\nicefrac{\ov{s_i}}{\ov{v}}]
{.}
\label{eq:complete:converted2}
\end{align}
Now we define $\wpf{\phi}{C}$ by:
\begin{align}
\wpf{\phi}{C} \eqdef\,
& 
\forall k~ \forall \ov{s_0}, \ldots, \ov{s_k} \in \ints^{l}. \nonumber \\
& \bigl( (\ov{v}= \ov{s_0}) \land 
~ \forall i=0, \ldots, k.\, 
  (e\land \neg\wpf{\myfalse}{C'} \land \wpf{(\ov{v} = \ov{s_{i+1}})}{C'})
  [\nicefrac{\ov{s_i}}{\ov{v}}]
\bigr) \nonumber \\
& \rightarrow (e \lor \phi)[\nicefrac{\ov{s_k}}{\ov{v}}]
{.}
\label{eq:complete:converted:form}
\end{align}
By \eqref{eq:complete:converted2} and \eqref{eq:complete:converted:form},
$w\modelsi \wpf{\phi}{C}$ is logically equivalent to \eqref{eq:complete:converted1}.
Hence,
\begin{align*}
\sem{C}(w) \modelsi \phi
~~\mbox{ iff }~~ &
\eqref{eq:complete:with-world}
\\ ~~\mbox{ iff }~~ &
\eqref{eq:complete:converted1}
\\ ~~\mbox{ iff }~~ &
w\modelsi \wpf{\phi}{C}.
\end{align*}
\end{itemize}
Therefore, we obtain \eqref{eq:expressive:proof}.
\end{proof}

\begin{restatable}{lem}{LemDeduceWP}
\label{lem:LemDeduceWP}
For a program $C$ and a formula $\phi\in\Fml$, let $\wpf{\phi}{C}$ be a formula expressing the weakest precondition, i.e., $(\wpf{\phi}{C})^{\cali} = \wpc{\cali}(C, \phi)$.
Then we obtain $\env\vdash\triple{\wpf{\phi}{C}}{C}{\phi}$.
\end{restatable}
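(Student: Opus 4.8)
The plan is to prove the claim by structural induction on the program $C$. We may take $\wpf{\phi}{C}$ to be the specific formula constructed in the proof of Proposition~\ref{prop:PropExpresive}, since any two formulas expressing $\wpc{\cali}(C,\phi)$ are logically equivalent and hence interderivable by (\ruleConseq{}); we then follow that construction case by case. For the three base cases the required triple is literally a BHL axiom: $\wpf{\phi}{\myskip{}} = \phi$ gives $\env\vdash\triple{\phi}{\myskip{}}{\phi}$ by (\axSkip{}); $\wpf{\phi}{v \detassign e} = \phi[\nicefrac{e}{v}]$ gives the triple by (\axUpdVar{}); and $\wpf{\phi}{v \detassign f_{\alg}(y)} = \phi[\nicefrac{f_{\alg}(y)}{v},\, \nicefrac{(h_{y,\alg}{+}1)}{h_{y,\alg}}]$ gives it by (\axHist{}). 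In each case Lemma~\ref{lem:LemExpressAssign} is exactly what certifies that these formulas have extension $\wpc{\cali}(C,\phi)$, so no additional work is needed beyond applying the axiom.

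For sequential composition $C \equiv C_1;C_2$, recall $\wpf{\phi}{C_1;C_2} = \wpf{\wpf{\phi}{C_2}}{C_1}$. Applying the induction hypothesis to $(C_2,\phi)$ yields $\env\vdash\triple{\wpf{\phi}{C_2}}{C_2}{\phi}$, and applying it to $(C_1, \wpf{\phi}{C_2})$ yields $\env\vdash\triple{\wpf{\wpf{\phi}{C_2}}{C_1}}{C_1}{\wpf{\phi}{C_2}}$; the rule (\ruleSeq{}) composes these into $\env\vdash\triple{\wpf{\phi}{C_1;C_2}}{C_1;C_2}{\phi}$. For parallel composition $C \equiv C_1 \| C_2$, note that the construction of Proposition~\ref{prop:PropExpresive} makes $\wpf{\phi}{C_1\|C_2}$ the \emph{same} formula as $\wpf{\phi}{C_1;C_2}$, so the sequential case already provides $\env\vdash\triple{\wpf{\phi}{C_1;C_2}}{C_1;C_2}{\phi}$, and a single application of (\axPAR{}) turns this into $\env\vdash\triple{\wpf{\phi}{C_1\|C_2}}{C_1\|C_2}{\phi}$. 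For the conditional $C \equiv \myIf{e}{C_1}{C_2}$, we have $\wpf{\phi}{C} = (e \land \wpf{\phi}{C_1}) \lor (\neg e \land \wpf{\phi}{C_2})$; since the operational semantics give $\sem{C}(w) = \sem{C_1}(w)$ whenever $\sem{e}_w = \mytrue$, one checks $\env \models (\wpf{\phi}{C} \land e) \implies \wpf{\phi}{C_1}$ and, symmetrically, $\env \models (\wpf{\phi}{C} \land \neg e) \implies \wpf{\phi}{C_2}$. Combining the induction hypotheses $\env\vdash\triple{\wpf{\phi}{C_b}}{C_b}{\phi}$ with these implications via (\ruleConseq{}), and then applying (\ruleIf{}), closes the case.

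The main obstacle is the loop $C \equiv \myLoop{e}{C'}$, where I would use (\ruleLoop{}). The crux is that $\wpf{\phi}{C}$ is itself a loop invariant: concretely, I would show $\env \models (\wpf{\phi}{C} \land e) \implies \wpf{\wpf{\phi}{C}}{C'}$, which amounts to $\wpc{\cali}(C,\phi) \cap e^{\cali} \subseteq \wpc{\cali}(C', \wpf{\phi}{C})$ and follows by unfolding one iteration of $\sem{\myLoop{e}{C'}}$, exactly as in the classical Hoare-logic argument~\cite{winskel}; this step genuinely relies on the explicit $\IntVar$-quantified form of $\wpf{\phi}{C}$ produced in Proposition~\ref{prop:PropExpresive}. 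Given this, the induction hypothesis for $(C', \wpf{\phi}{C})$ together with (\ruleConseq{}) yields $\env\vdash\triple{\wpf{\phi}{C} \land e}{C'}{\wpf{\phi}{C}}$, then (\ruleLoop{}) gives $\env\vdash\triple{\wpf{\phi}{C}}{\myLoop{e}{C'}}{\wpf{\phi}{C} \land \neg e}$, and a final (\ruleConseq{}) using $\env \models (\wpf{\phi}{C} \land \neg e) \implies \phi$ (valid because $\sem{C}(w)=\{w\}$ when $\sem{e}_w=\myfalse$) completes the proof. I expect the verification of this invariance implication — and, more tediously, re-deriving inside the logic the chain of equivalences from the proof of Proposition~\ref{prop:PropExpresive} — to be the only non-routine part; every other case is a direct match against a BHL rule.
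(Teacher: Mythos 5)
Your proof is correct and follows essentially the same route as the paper's: structural induction on $C$, with the base cases discharged by the corresponding axioms (plus \ruleConseq{} to bridge to whichever formula denotes the weakest precondition), \ruleSeq{} for sequencing, the sequential case plus \axPAR{} for parallel composition, and for the loop the same two semantic validities — invariance of $\wpf{\phi}{\myLoop{e}{C'}}$ under one iteration and $(\wpf{\phi}{C}\land\neg e)\implies\phi$ — fed into \ruleLoop{} and \ruleConseq{}. The only difference is presentational: you fix the concrete formula from Proposition~\ref{prop:PropExpresive} once at the outset, whereas the paper inserts a \ruleConseq{} step in each case; both are sound.
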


\begin{proof}
We show the lemma by induction on the program $C$ as follows.
\begin{itemize}
\item 
Case $C \equiv \myskip{}$.~
By applying the rule (\textsc{Conseq}) to $\models \wpf{\phi}{C} \rightarrow \phi$ and the axiom (\textsc{Skip}) $\env\vdash\triple{\phi}{C}{\phi}$,
we obtain $\env\vdash\triple{\wpf{\phi}{C}}{C}{\phi}$.

\item 
Case $C \equiv v \detassign e$.~
By applying the rule (\textsc{Conseq}) to $\models \wpf{\phi}{C} \rightarrow \phi[v\mapsto e]$ and the axiom (\textsc{UpdVar}) $\env\vdash\triple{\phi[v\mapsto e]}{C}{\phi}$,
we obtain $\env\vdash\triple{\wpf{\phi}{C}}{C}{\phi}$.
\item 
Case $C \equiv v \detassign f_{\alg}(y)$.~
By applying the rule (\textsc{Conseq}) to 
$\models \wpf{\phi}{C} \implies \phi[v \mapsto f_{\alg}(y), h_{y,\alg} \mapsto h_{y,\alg}+1]$ and the axiom (\textsc{Hist}),
we obtain $\env\vdash\triple{\wpf{\phi}{C}}{C}{\phi}$.

\item 
Case $C \equiv C_1 ; C_2$.~
By induction hypothesis, we have:
\begin{align*}
\env\vdash\triple{\wpf{\wpf{\phi}{C_2}}{C_1}}{C_1}{\wpf{\phi}{C_2}}
~~\mbox{ and }~~
\env\vdash\triple{\wpf{\phi}{C_2}}{C_2}{\phi}
{.}
\end{align*}
By applying (\textsc{Seq}), we obtain
$\env\vdash\triple{\wpf{\wpf{\phi}{C_2}}{C_1}}{C_1 ; C_2}{\phi}$.
Hence by applying (\textsc{Conseq}) with $\models \wpf{\phi}{C_1 ; C_2} \rightarrow \wpf{\wpf{\phi}{C_2}}{C_1}$,
we conclude $\env\vdash\triple{ \wpf{\phi}{C_1 ; C_2}}{C_1 ; C_2}{\phi}$.

\item 
Case $C \equiv C_1 \| C_2$.~
By applying the case of the sequential composition $C_1 ; C_2$,
we have $\env\vdash\triple{\wpf{\phi}{C_1 ; C_2}}{C_1 ; C_2}{\phi}$.
By applying the rules (\textsc{Par}) and (\textsc{Conseq}) with $\models \wpf{\phi}{C_1 \| C_2} \rightarrow \wpf{\phi}{C_1 ; C_2}$, 
we conclude $\env\vdash\triple{\wpf{\phi}{C_1 \| C_2}}{C_1 \|  C_2}{\phi}$.

\item 
Case $C \equiv \myIf{e}{C_1}{C_2}$.~
By induction hypothesis, we have:
\[
\env\vdash\triple{\wpf{\phi}{C_1} }{C_1}{\phi}  ~~\mbox{ and }~~
\env\vdash\triple{\wpf{\phi}{C_2}}{C_2}{\phi}.
\]
By applying (\textsc{Conseq}), we have
$\env\vdash\triple{e \land \wpf{\phi}{C_1} }{C_1}{\phi}$
and
$\env\vdash\triple{\neg e \land \wpf{\phi}{C_2}}{C_2}{\phi}$.
Then, by applying the rules (\textsc{If})
and (\textsc{Conseq}) with:
\[
\models \wpf{\phi}{\myIf{e}{C_1}{C_2}} \rightarrow (e \land \wpf{\phi}{C_1})\lor (\neg e \land \wpf{\phi}{C_2}),
\]
we conclude:
\[
\env\vdash\triple{  \wpf{\phi}{\myIf{e}{C_1}{C_2}}   }{\myIf{e}{C_1}{C_2}}{\phi}.
\]

\item 
Case $C \equiv \myLoop{e}{C'}$.~
Let $w\in\calw$ and $\cali$ be an interpretation function. Then:
\begin{align*}
w \modelsi \wpf{\phi}{\myLoop{e}{C'}}
&~\mbox{ iff }~\sem{\myLoop{e}{C'}}(w) \modelsi \phi\\
&~\mbox{ iff }~
\mbox{either } (\sem{e}(w) = \mytrue ~\mbox{ and }~ \sem{C';\myLoop{e}{C'}}(w) \modelsi \phi)\\[-0.2ex]
&\qquad ~~\,\mbox{ or }~~(\sem{e}(w) = \myfalse ~\mbox{ and }~ w \modelsi \phi)\\
&~\mbox{ iff }~
\mbox{either } (\sem{e}(w) = \mytrue ~\mbox{ and }~ \sem{C'}(w) \modelsi \wpf{\phi}{\myLoop{e}{C'}})\\[-0.2ex]
&\qquad ~~\,\mbox{ or }~~ (\sem{e}(w) = \myfalse ~\mbox{ and }~ w \modelsi \phi)\\
&~\mbox{ iff }~
w \modelsi e \land \wpf{ \wpf{\phi}{\myLoop{e}{C'}} }{C'}~~\mbox{ or }~~ w \modelsi \neg e \land \phi.
\end{align*}
This implies: 
\begin{align}
&\models e \land \wpf{\phi}{\myLoop{e}{C'}} \implies \wpf{\wpf{\phi}{\myLoop{e}{C'}}}{C'} \label{weakest_precondition:while:caseA}\\ 
&\models \neg e \land \wpf{\phi}{\myLoop{e}{C'}} \implies  \phi.\label{weakest_precondition:while:caseB}
\end{align}
By induction hypothesis, we have:
\[
\env\vdash\triple{\wpf{\wpf{\phi}{\myLoop{e}{C'}}}{C'}}{ C' }{\wpf{\phi}{\myLoop{e}{C'}}}.
\]
By applying (\textsc{Conseq}) with \eqref{weakest_precondition:while:caseA}, we have:
\[
\env\vdash\triple{e\, \land \wpf{\phi}{\myLoop{e}{C'}}}{ C' }{\wpf{\phi}{\myLoop{e}{C'}}}.
\]
By applying (\textsc{Loop}), we obtain:
\[
\env\vdash\triple{\wpf{\phi}{\myLoop{e}{C'}} }{ \myLoop{e}{C'} }{\neg e \land \wpf{\phi}{\myLoop{e}{C'}}}.
\]
By applying (\textsc{Conseq}) with \eqref{weakest_precondition:while:caseB}, we conclude:
\[
\env\vdash\triple{\wpf{\phi}{\myLoop{e}{C'}} }{ \myLoop{e}{C'} }{\phi}.
\]
\end{itemize}
\end{proof}

Finally, we prove the relative completeness of BHL as follows.

\PropComplete*

\begin{proof}[Proof of \Thm{thm:PropComplete}]
We assume the validity of a judgment $\env\models\triple{\psi}{C}{\phi}$.
Let $w$ be a world such that $w\models\psi$.
By the validity of the judgment, 
we have $w \in \wpc{\cali}(C, \phi)$.

By \Propo{prop:PropExpresive}, there exists a formula $\wpf{\phi}{C}$ that expresses the weakest precondition, that is, $(\wpf{\phi}{C})^{\cali} = \wpc{\cali}(C, \phi)$ for any interpretation function $\cali$.
Thus, it follows from $w\models\psi$ and $w \in \wpc{\cali}(C, \phi)$ that $w \models \wpf{\phi}{C}$.
Hence $\Gamma \models \psi \rightarrow \wpf{\phi}{C}$.

By \Lem{lem:LemDeduceWP}, we obtain $\env\vdash\triple{\wpf{\phi}{C}}{C}{\phi}$.
By applying the rule (\textsc{Conseq}) to $\Gamma \models \psi \rightarrow \wpf{\phi}{C}$ and $\env\vdash\triple{\wpf{\phi}{C}}{C}{\phi}$,
we obtain $\env\vdash\triple{\psi}{C}{\phi}$.
\end{proof}

\end{document}